\newenvironment{proof}{\paragraph{\it Proof.}}{\hfill$\square$}
\newcommand{\algname}{\texttt{EM$^2$}\ }
\newcommand{\yon}[1]{\textcolor{magenta}{}}
\newcommand{\yonDel}[1]{\textcolor{blue}{}}
\theoremstyle{plain}
\newtheorem{theorem}{Theorem}[section]
\newtheorem{lemma}[theorem]{Lemma}
\newtheorem{corollary}[theorem]{Corollary}
\theoremstyle{definition}
\newtheorem{definition}[theorem]{Definition}
\newtheorem{assumption}[theorem]{Assumption}
\theoremstyle{remark}
\newtheorem{remark}[theorem]{Remark}
\newcommand{\poly}{\mathrm{poly}}
\newcommand{\KL}{\textrm{KL}} 
\newcommand{\mS}{\mathcal{S}}	
\newcommand{\mA}{\mathcal{A}}	
\newcommand{\mT}{\mathcal{T}}	
\newcommand{\mY}{\mathcal{Y}}	
\newcommand{\mP}{\mathcal{P}}	
\newcommand{\mM}{\mathcal{M}}	
\newcommand{\mX}{\mathcal{X}}	
\newcommand{\mI}{\mathcal{I}}	
\newcommand{\mZ}{\mathcal{Z}}	
\newcommand{\indic}[1]{\mathds{1}\left\{ #1 \right\}} 
\newcommand{\PP}{\mathds{P}}    
\newcommand{\Eps}{\mathcal{E}}
\newcommand{\Exs}{\mathbb{E}}
\newcommand{\tssum}{\textstyle \sum}
\newcommand{\moment}{\mathbf{M}}
\newif\ifdraft
\newcommand{\blue}[1]{\textcolor{blue}{#1}}
\newcommand{\red}[1]{\textcolor{red}{#1}}
\newcommand{\jycomment}[1]{\ifdraft {\bf{{\red{{Jeongyeol --- #1}}}}}\else\fi}
\newcommand{\YE}[1]{\ifdraft {\bf{{\blue{{YE --- #1}}}}}\else\fi}
\title{\bf{\LARGE{Reward-Mixing MDPs with a Few Latent Contexts are Learnable}}}
\author[1]{Jeongyeol Kwon \thanks{Most work is done while the author is at The University of Texas at Austin.}}
\author[2]{Yonathan Efroni} 
\author[3]{Constantine Caramanis}
\author[4]{Shie Mannor}
\affil[1]{Wisconsin Institute for Discovery, UW-Madison}
\affil[2]{Meta, New York}
\affil[3]{Department of Electrical and Computer Engineering, University of Texas at Austin}
\affil[4]{Department of Electrical Engineering, Technion / NVIDIA}
\begin{document}
\maketitle


\begin{abstract}
    We consider episodic reinforcement learning in reward-mixing Markov decision processes (RMMDPs): at the beginning of every episode nature randomly picks a latent reward model among $M$ candidates and an agent interacts with the MDP throughout the episode for $H$ time steps. Our goal is to learn a near-optimal policy that nearly maximizes the $H$ time-step cumulative rewards in such a model. Previous work \cite{kwon2021reinforcement} established an upper bound for RMMDPs for $M=2$. In this work, we resolve several open questions remained for the RMMDP model. For an arbitrary $M\ge2$, we provide a sample-efficient algorithm--\algname--that outputs an $\epsilon$-optimal policy using $\tilde{O} \left(\epsilon^{-2} \cdot S^d A^d \cdot \poly(H, Z)^d \right)$ episodes, where $S, A$ are the number of states and actions respectively, $H$ is the time-horizon, $Z$ is the support size of reward distributions and $d=\min(2M-1,H)$. Our technique is a higher-order extension of the method-of-moments based approach proposed in \cite{kwon2021reinforcement}, nevertheless, the design and analysis of the \algname algorithm  requires several new ideas beyond existing techniques. We also provide a lower bound of $(SA)^{\Omega(\sqrt{M})} / \epsilon^{2}$ for a general instance of RMMDP, supporting that super-polynomial sample complexity in $M$ is necessary. 
\end{abstract}

\section{Introduction}
Reinforcement learning (RL) in partially observable systems is a challenging problem. While partially observable Markov decision process (POMDP) is a versatile framework, POMDPs are generally hard to learn, primarily because the optimal policy depends on the entire history of the process \cite{smallwood1973optimal, krishnamurthy2016pac}. Due to its fundamental hardness, it is important to consider sub-classes of POMDPs that allow tractable solutions for a variety of applications. We are interested in a special and prevalent sub-class of POMDPs where the latent (unobservable) parts of the system remain static in each episode.

Specifically, we consider the framework of Latent MDPs (LMDPs), which has been studied in a few several works ({\it e.g.,} \cite{chades2012momdps, brunskill2013sample, hallak2015contextual, steimle2018multi, kwon2021rl}). In LMDPs, one MDP is randomly chosen from $M$ possible candidate models at the beginning of every episode, and an agent interacts with the chosen MDP for $H$ time steps of an episode. However, the identity of the chosen MDP is unknown to the agent, which we call the {\it latent contexts}. To learn near-optimal policies with latent contexts, existing POMDP solutions would require strong assumptions on reachability of the system ({\it e.g.,} \cite{azizzadenesheli2016reinforcement, guo2016pac}) or certain separability assumptions ({\it e.g.,} see conditions proposed in \cite{liu2022partially, golowich2022learning}). The work in \cite{kwon2021rl} does not give a satisfactory solution either since it requires a similar assumption of strong separability between latent contexts. However, these assumptions do not necessarily align with the applications ({\it e.g.,} dynamic web application \cite{hallak2015contextual}, medical treatment \cite{steimle2018multi}, transfer learning \cite{brunskill2013sample}) that we want to tackle with the proposed framework.

To take a step forward, the work in \cite{kwon2021reinforcement} develops a sample-efficient algorithm in the special case of two reward-mixing MDPs (RMMDPs): if the state transition models are shared across different MDPs, and further if $M=2$ with uniform priors, then there exists an efficient algorithm for learning a near-optimal policy {\it without any} further assumptions on system dynamics, {\it i.e.,} without reachability and separability. However, their results only apply to the case of $M=2$ and only to uniform priors over contexts \jycomment{with Bernoulli reward structures}.

\subsection{Our Contributions}
In this work we resolve several open questions for learning near-optimal policies in RMMDPs with $M\ge2$. We summarize our main results as follows:
\begin{enumerate}
    \item For general instances of RMMDPs with $M > 2$, without any further assumptions, we show that an $\epsilon$-optimal policy can be learned after exploring  $\tilde{O}(\poly(M,H) \cdot SA)^{\min(2M-1,H)} / \epsilon^2$ episodes (see Theorem \ref{theorem:sample_complexity_upper_bound} for the exact upper bound). 
    \item As a by-product of our analysis, we give a strictly improved result for learning RMMDPs with~$M=2$.  Prior work~\cite{kwon2021reinforcement} designed an algorithm with $\poly(H) \cdot \tilde{O}(S^2A^2/\epsilon^4)$ sample complexity, whereas we design an improved algorithm with  $\poly(H) \cdot \tilde{O}(S^2A^2/\epsilon^2)$ sample complexity.
    \item When all reward probabilities are strict integrals of the base probability, we show that the exponent of $S$ and $A$ can be significantly improved from $O(M)$ to $O(\log M)$. Examples of such cases include the case when all rewards are deterministic conditioned on latent contexts. 
    \item For general instances of RMMDPs, we show the lower bound of $(SA)^{\Omega(\sqrt{M})} / \epsilon^2$, which in part justifies that a super-polynomial number of samples in $M$ are necessary. 
\end{enumerate}
We note that our dependence on $\epsilon$ is tight. This is in contrast to \cite{kwon2021reinforcement} where the sample upper bound for the $M=2$ case scales with $\epsilon^{-4}$. From a technical point of view, this improvement comes from a different approach in our analysis: while the analysis in \cite{kwon2021reinforcement} partially relies on the closeness in the estimated latent parameters, our analysis does not rely on the closeness in latent parameters at all. Note that $\epsilon^{-4}$ is the best possible sample complexity for $M=2$ if our goal is to recover the model itself up to accuracy $\epsilon$. This pattern is commonly found in the literature of learning finite mixture models ({\it e.g.,} \cite{kwon2021minimax}). To avoid this higher dependence on $\epsilon$, we do not rely on any guarantees for the parameter recovery.

To elaborate more on this, our approach is a higher-order extension of the idea -- {\it uncertainty in higher-order moments} -- which was first proposed in \cite{kwon2021reinforcement}. In this approach, we estimate higher-order statistics of latent reward models, and find an empirical model that has matching moments. Then our solution is the policy optimized for this empirical model. The main difference from \cite{kwon2021reinforcement} is that they only require second order moments estimates for the case of two latent contexts $M=2$, while our approach is based upon estimation of moments of degree greater than $2$. 
However, the work in \cite{kwon2021reinforcement} heavily relies on the assumption that $M=2$, and it is not clear how to extend their analysis to general $M \ge 3$. 

In this work, we develop an alternative approach that allows us to tackle the general $M \ge 2$ case. 
Specifically, we do not relate the exact solution of higher-order polynomial equations to a near-optimality of the returned policy as in \cite{kwon2021reinforcement}. Instead, we directly bound the total variation distance of the sequence of observations which we refer as \emph{trajectory distributions}. Specifically, we bound the total variation of the trajectory distributions between the estimated and true model by the amount of mismatch in higher-order moments for all history-dependent policies. Near-optimality of the returned policy naturally follows from this result.

We establish this result by leveraging recent advancements on learning a mixture of discrete product distributions, ({\it e.g.,} \cite{feldman2008learning, chan2013learning}), and especially, from the work in \cite{chen2019beyond}. Specifically, \cite{chen2019beyond} have shown that if any two mixtures of discrete product distributions have matching moments of degree up to $d = 2M-1$, then the two distributions are statistically equivalent. While this result cannot be directly applied to show the closeness of trajectory distributions in RMMDPs, the key idea used in \cite{chen2019beyond} can resolve the core challenge that commonly arises in both problems: there are several higher-order moments that cannot be estimated from samples, which results in the unidentifability of latent models from samples. To avoid this issue, instead of bounding the statistical error from the recovery guarantee for latent models, we show that we can bound the errors in trajectory distributions using mathematical induction argument, which is the key idea used in \cite{chen2019beyond}. More detailed technical discussion can be found in Section \ref{section:algorithm}.

\subsection{Related Work}
Recent years have witnessed a substantial progress in developing efficient RL algorithms for a number of challenging tasks arising from both theory and practice ({\it e.g.,} \cite{jaksch2010near, mnih2013playing, silver2018general, kober2013reinforcement, azar2017minimax, tang2017exploration, bellemare2016unifying}). Standard framework for RL is Markov decision process (MDP), where the exact and full knowledge of the current state is provided, and no previous history affects future events. In contrast, very little is understood for partially observable systems where the exact and full knowledge of the current state is not available. Due to the vast volume of literature, we only discuss a few works that are relevant to us.

\paragraph{Solutions for general POMDPs} As a special case of POMDPs, we may consider applying the POMDP solutions for learning a near-optimal policy in RMMDPs. There is a growing body of work that focuses on the case when single or multiple-step observations from test action sequences are {\it sufficient statistics} of the environment ({\it e.g.,} \cite{boots2011closing, krishnamurthy2016pac, azizzadenesheli2016reinforcement, golowich2022learning, efroni2022provable, liu2022partially, zhan2022pac}). In such a scenario, latent model parameters can be learned up to some parameter transformations when the system is irreducible or optimistically explored. This approach have been applied to function approximation settings in some recent work under similar sufficient statistic assumptions ({\it e.g.,} \cite{cai2022reinforcement, zhan2022pac, uehara2022provably}). However, RMMDP instances do not necessarily satisfy the statistical sufficiency of test-observation sequences, and, thus, their results do not apply for learning a near-optimal policy in RMMDPs.

\paragraph{Multitask RL} RMMDP can be considered as a special case of multitask reinforcement learning in MDP environments \cite{taylor2009transfer, brunskill2013sample, liu2016pac, hallak2015contextual} with a different reward function to each task. If we are given a sufficiently long time horizon (and some separation between contexts) for an individual task to identify the context, then we can efficiently learn the latent model by clustering the trajectories. Hence, if we can learn the latent model, we can easily learn a near-optimal policy from an estimated model. However, for such condition to hold, we need very long time horizon $H \gg SA$. Unfortunately, there are many scenarios, such as dynamic web application or medical treatments \cite{hallak2015contextual, steimle2018multi}, where we have a relatively short time-horizon $H = O(1)$ for each task and thus cannot identify the latent context or the latent model. In this work, we are motivated by such scenarios and aim to design sample efficient algorithm for such scenarios where it is not possible to identify the latent context within an episode.

\paragraph{Meta reinforcement learning} In meta or transfer learning, an agent aims to learn an adaptive policy so it can adapt quickly to the chosen objective among multiple possible candidates ({\it e.g.,} \cite{duff2002optimal, taylor2009transfer, zintgraf2019varibad}). RMMDP can also be considered as a framework for meta-learning of multiple reward functions. However, existing work on meta-learning focuses on heuristic solutions and evaluations. We focus on the information-theoretic limits of the problem with provable guarantees.

\paragraph{Learning mixtures of distributions} In learning a near-optimal policy for RMMDPs, we seek for a good estimate of the latent model, which can be cast as the problem of learning a mixture of reward distributions. In particular, when the support of reward values is finite, it involves the problem of learning a mixture of product discrete distributions \cite{freund1999estimating, feldman2008learning, chan2013learning, jain2014learning, chen2019beyond}. However, it is more challenging to recover a mixture distribution in the RMMDP setting. Then, the learner gathers samples by interacting with a dynamical and stochastic environment. We develop algorithmic and analysis tools to recover a good estimate of latent models despite the limited view of samples, and show that we can learn a near-optimal policy from the estimated model.

\paragraph{Miscellaneous} While we assume that episodes start in a sequential order, in other applications such as in recommendation systems, episodes can proceed in parallel without limit on the time-horizon \cite{maillard2014latent, gentile2014online, hu2021near, kwon2022coordinated}. In such problems, the goal is to learn an optimal policy for each episode (or task) as quickly as possible exploiting the similarity between tasks. In contrast, the goal in RMMDP is to learn the optimal adaptive {\it i.e.,} history-dependent policy for a single episode with limited time horizon.

\section{Preliminaries}

We state the problem of episodic reinforcement learning problem in reward-mixing Markov decision processes (RMMDPs), originally defined in \cite{kwon2021reinforcement} as follows: 
\begin{definition}[Reward-Mixing Markov Decision Process (RMMDP)]
    \label{definition:rm_lmdp}
    An RMMDP $\mathcal{M}$ consists of a tuple $(\mS, \mA, T, \nu, \{w_m\}_{m=1}^M, \{\mu_m\}_{m=1}^M)$ with a state space $\mathcal{S}$ and action space $\mathcal{A}$ where $T: \mathcal{S}\times\mathcal{A}\times\mathcal{S} \rightarrow [0,1]$ is a common transition probability measures that maps a state-action pair and a next state to a probability, and $\nu$ is a common initial state distribution. Let $S = |\mS|$ and $A = |\mA|$; $w_1, ..., w_M$ are the mixing weights such that at the beginning of every episode a reward model $\mu_m$ is randomly chosen with probability $w_m$; $\mu_m$ is the \emph{model parameter} that describes a reward distribution, {\it i.e.,} $\PP_{\mu_m} (r \mid a) := \PP(r \mid m, a)$, according to an action $a \in \mA$ conditioning on a latent context $m$. 
\end{definition}
We do not assume a priori knowledge of mixing weights. We consider discrete reward realizations, when the support of the reward distribution is finite and bounded. 
\begin{assumption}[Discrete Rewards]
\label{assumption:reward_dist}
    The reward distribution has finite and bounded support. The reward attains a value in the set $\mathcal{Z}$. We assume that for all $z\in \mathcal{Z}$ we have $|z| \le 1$. We denote the cardinality of $ \mathcal{Z}$ as~$Z$. 
\end{assumption}
As an example, Bernoulli distribution satisfies Assumption \ref{assumption:reward_dist} with $\mZ=\{0,1\}$ and $Z = 2$.  We denote the probability of observing a reward value $z$ by playing an action $a$ at a state $s$, as $\mu_m(s,a,z) \coloneqq \PP(r = z \mid m, s, a)$ in a context~$m$. 
We consider a policy class $\Pi$ which contains all history-dependent policies $\pi: (\mS, \mA, \mZ)^* \times \mS \rightarrow \mA$. We are interested in finding a near-optimal policy $\pi \in \Pi$ that is $\epsilon$-optimal w.r.t. the optimal value:  
\begin{align*}
    V_{\mM}^* := \max_{\pi \in \Pi} \Exs_\pi \left[ \tssum_{t=1}^H r_t \right],
\end{align*}
where $\Exs_\pi [\cdot]$ is expectation taken over the model $\mM$ with a policy $\pi$.

\paragraph{Notation}
We often denote a state-action pair $(s,a)$ as one symbol $x = (s,a) \in \mS \times \mA$. For any length $t$ part of a trajectory $(y_1, y_2, ..., y_t)$, we often simplify the notation as $y_{1:t}$ for any symbol $y$. For any pairs of length $d$ sequences $\bm{x} = (x_i)_{i=1}^d$ and $\bm{z} = (z_i)_{i=1}^d$ (sequences may have repeated elements), let ${\rm length}(\bm{x})$ be the length of sequence. For a subset of indices $\mI \subseteq [d]$, we use $\bm{x}_{\mI} := (x_{i})_{i \in \mI}$ to refer to a subsequence in $\bm{x}$ at positions $\mI$. A moment of degree $d$ for $\bm{x}$  and $\bm{z}$ is defined as
\begin{align*}
    \moment \left(\bm{x}, \bm{z} \right) :=  \textstyle \sum_{m=1}^M w_m \Pi_{i \in [{\rm length}(\bm{x})]} \mu_m(x_i, z_i).
\end{align*}
Let $n(\bm{x})$ be the number of martingale samples used to estimate $\moment \left(\bm{x}, \bm{z} \right)$ for all $z \in \mZ^{\bigotimes {\rm length}(\bm{x})}$. We denote $V_\mM^\pi$ as an expected cumulative reward for model $\mM$ with policy $\pi$. \YE{defining O tilde notation.} 

\section{Learning from Uncertain Higher-Order Moments}
\label{section:algorithm}

The basic idea for learning a near-optimal policy of an RMMDP has been developed in \cite{kwon2021reinforcement} for the special case of $M=2$ with equal mixing weights. There, they showed that for $M=2$, having access to estimates of the second-order correlation of rewards is sufficient to find a near-optimal policy. In this section, we revisit this idea and extend it to handle a general number of contexts $M\ge 2$. 

\subsection{Overview of Our Approach}
\begin{algorithm}[t]
    \caption{Estimate and Match Moments (\texttt{EM$^2$})}
    \label{algo:reward_mixture}
        
    {{\bf Input:} $d \in \mathbb{N}_+$, $\epsilon,\eta \in(0,1)$}
    \begin{algorithmic}[1]
    \STATE {\color{blue}{// Estimate moments of latent reward and their uncertainty by pure exploration of the $d$-th order MDP}}
    \STATE {$(\moment_n(\cdot, \cdot), n(\cdot)) \gets \texttt{EstimateMoments}(d,\epsilon, \eta)$ (see Appendix \ref{appendix:pure_exploration_algo})}
        \STATE {\color{blue}{// Construct an RMMDP with matching moments}}
        \STATE {Find $\hat{\mM}$ such that  $|\hat{\moment}(\bm{x}, \bm{z}) - \moment_n(\bm{x}, \bm{z})| \le \sqrt{\iota_c / n(\bm{x})}$ for all $\bm{x} \in (\mS\times\mA)^{\bigotimes d}, \bm{z} \in \mZ^{\bigotimes d}$}
        \STATE {{\bf Return} $\hat{\pi}$,  optimal policy of $\hat{\mM}$}
    \end{algorithmic}
\end{algorithm}

Our approach is based upon estimating higher-order correlations of the reward model through, which later enables us to access an approximate RMMDP. Then, we return the optimal policy of the learned RMMDP model. As discussed in \cite{kwon2021reinforcement}, if we can measure the exact values of $\moment(\bm{x}, \bm{z})$ for all $\bm{x} = (x_i)_{i=1}^{q} \in (\mS\times\mA)^{\bigotimes q}$ and $\bm{z} = (z_i)_{i=1}^q \in \mZ^{\bigotimes q}$ up to some large-enough degree $q \le d \in \mathbb{N}_+$, then we can recover the latent model $\{(w_m, \mu_m)\}_{m=1}^M$. This is the well-known moment-matching technique in literature on learning finite mixture models (see {\it e.g.,} \cite{moitra2010settling, doss2020optimal} and references therein). 

However, the challenge of applying moment-matching technique to learning in RMMDP is that the samples from RMMDP is based on trajectories obtained from roll-in policies, and thus it is in general not possible to accurately estimate every higher-order statistics. For example, in a loop-free system, any state-action $x$ cannot be visited more than once in the same episode, in which case we cannot get any samples of the higher-order moment that repeats the same state more than once ({\it i.e.,} the moments $\moment(\bm{x}, \bm{z})$ with multiplicity that has repeated elements in $\bm{x} = (x_i)_{i=1}^d$). In such a case, the true latent reward model is not identifiable. 

As mentioned earlier, this {\it model unidentifiability} issue can also be found in -- seemingly unrelated -- literature of learning mixtures of discrete product distributions \cite{freund1999estimating, feldman2008learning, chen2019beyond}. There, the task of learning latent mixture parameters is also challenging due to the model unidentifiability issue, since higher-order statistics with multiplicity cannot be estimated. Thus, most work in this line focused on the density estimation which minimizes the {\it statistical distance} between observations, rather than insisting on recovering latent parameters (there are a few exceptions, {\it e.g.,} \cite{gordon2021source}). Although our problem is in a different context of learning a near-optimal policy, we can cast a similar fundamental question:
\begin{center}
    \emph{Can we find a model $\hat{\mM}$ such that for every policy $\pi \in \Pi$, trajectory distributions from the true $\mM$ and estimated model $\hat{\mM}$ are statistically close?}
\end{center}
In other words, we ask whether the exact model recovery is really necessary when our ultimate goal is just find a good-working policy. If we can find such a model $\hat{\mM}$ that approximates the {\it trajectory distributions} regardless of the unidentifiability issue, then it can be used to find a near-optimal policy for the true model $\mM$ (see more details in Section \ref{section:analysis}).

Then the question is how to find such a model $\hat{\mM}$ that approximates the trajectory distributions for {\it all} policies. Note that if we only need to approximate a distribution for a {\it single} policy $\pi$, then the problem can be addressed using the classical tournament argument \cite{devroye2001combinatorial} with polynomial sample-complexity. However, learning for a single policy is not enough to learn a near-optimal policy, and there are doubly-exponential number of candidate policies $O \left(A^{(SA)^{H}} \right)$ in the class of all history-dependent policies. We need more thoughts to design sample-efficient algorithm.

It turns out that the idea of learning {\it uncertain higher-order moments}, initially proposed in \cite{kwon2021reinforcement}, can exactly achieve this goal with general $M \ge 2$. Specifically, for any length $d\ge1$ state-action sequences $\bm{x} = (x_i)_{i=1}^d$, let $p(\bm{x})$:
\begin{align}
    p(\bm{x}) := \textstyle \max_{\pi \in \Pi} \PP_{\pi} (x_{1:H} \text{ contains a (not necessarily time-consecutive) subsequence } \bm{x} ), \label{eq:p(x) defintion}
\end{align}
the maximum probability of visiting all elements in $\bm{x}$ in the same episode. Intuitively, the larger $p(\bm{x})$ is, the more accurate estimate of $\moment(\bm{x}, \cdot)$ is required. That is, we collect correlation samples for $\bm{x}$ such that the number of collected samples $n(\bm{x})$ is roughly proportional to $p(\bm{x})$. In Section \ref{section:analysis}, we show that if we can explore the environment to collect samples of higher-order moments in such a way, then trajectories distributions of {\it all} history-dependent policies are {\it uniformly} close the true model. The learning procedure can be summarized as in Algorithm \ref{algo:reward_mixture}.
\begin{remark}[Unknown $T, \nu$]
    In the first read, readers may assume that the transition kernel and initial state distribution $T, \nu$ are known in advance, and given as input to moment estimation procedure. When they are unknown, we can obtain good estimates $\hat{T}, \hat{\nu}$ along with higher-order moments of latent reward models in the exploration phase. See the full Algorithm \ref{algo:pure_explore} in Appendix \ref{appendix:pure_exploration_algo}. 
\end{remark}

\subsection{Pure Exploration of Higher-Order Moments}
The estimation of $\moment(\bm{x}, \cdot)$ can be carried in multiple ways. A simple approach for doing that is to iterate over all moments up to degree $d$, {\it i.e.,} for all $\bm{x} \in \cup_{q=1}^d (\mS \times \mA)^{\bigotimes q}$, and run the best policy for collecting trajectories that contains $\bm{x}$. However, this approach will waste many trajectories for collecting samples of moments that are hard to reach, resulting in total sample-complexity of $O(SA)^{2d}$. 

Note that the aim of pure-exploration is to collect samples for all moments up to degree $d$ such that $n(\bm{x}) \propto p(\bm{x})$. A simple but more systematic way for estimating $\moment(\bm{x}, \cdot)$ is to employ a pure exploration scheme \cite{kaufmann2021adaptive}, analogously to the idea developed in \cite{kwon2021reinforcement} for $M=2$. For completeness, we restate some concepts and definitions for pure-exploration in higher-order MDPs in Appendix \ref{appendix:pure_exploration_algo}. After carrying out a generalized procedure, we are guaranteed to have good estimates of higher-order moments for any $d$. \jycomment{mention $d^{th}$ order MDP?} 

\subsection{Find a Moment-Matching Model}
Once the pure exploration phase ends, we have a collection of samples for all moments of degree at most~$d$. Specifically, for any degree $q \le d$, moment $\bm{x} \in (\mS \times \mA)^{\bigotimes q}$ with any paired sequence $\bm{z} \in \mZ^{\bigotimes q}$, let the quantity $\moment_n (\bm{x}, \bm{z})$ be an empirical estimate of $\moment (\bm{x}, \bm{z})$ (see Algorithm \ref{algo:pure_explore} for more details). By a standard measure of concentration for martingales \cite{wainwright2019high}, we can show that
\begin{align*}
    \left| \moment (\bm{x}, \bm{z}) - \moment_n (\bm{x}, \bm{z}) \right| \le \sqrt{\iota_c / n(\bm{x})}.
\end{align*}
The above holds for over all combinations of $\bm{x}$ and $\bm{z}$ with probability at least $1 - \eta$ by an application of union bound, where the logarithmic constant $\iota_c = O(d \log(SAZ / \eta))$. With $\moment_n (\cdot, \cdot)$, we search over all RMMDP models to find an empirical model $\hat{\mM}:= (\mS, \mA, \hat{T}, \hat{\nu}, \{\hat{w}_m\}_{m=1}^M, \{\hat{\mu}_m\}_{m=1}^M)$ that satisfies $\left| \hat{\moment} (\bm{x}, \bm{z}) - \moment_n (\bm{x}, \bm{z}) \right| \le \sqrt{\iota_c / n(\bm{x})}$, then we are guaranteed that 
\begin{align}
    \left| \moment (\bm{x}, \bm{z}) - \hat{\moment} (\bm{x}, \bm{z}) \right| \le 2 \sqrt{\iota_c / n(\bm{x})}, \qquad \forall (\bm{x}, \bm{z}) \in \textstyle \bigcup_{q=1}^d (\mS \times \mA)^{\bigotimes q} \times \mZ^{\bigotimes q}. \label{eq:moment_uncertainty} 
\end{align}
That is, we find an RMMDP model where its first $d$ moments approximately match the ones of the true model.

\paragraph{Computational Challenges for the Model Recovery}
Solving the equation \eqref{eq:moment_uncertainty} is not an easy computational task. Brute-force approaches, which iterates over all possible candidates, may take time exponential in $O(SA)$. Even for a simpler setting of learning mixtures of discrete $n$ product distributions, it is not obvious to find the latent parameters that matches all multilinear moments ({\it i.e.,} moments without any multiplicity) \cite{feldman2008learning, chen2019beyond}. The best known computational complexity for that problem is $O(n/\epsilon)^{O(M^2)}$ due to \cite{chen2019beyond}. Since we have more non-uniform uncertainties across all moments, we expect that solving \eqref{eq:moment_uncertainty} is computationally much harder problem. We leave the computational challenge to future work, and henceforth focus on the sample-complexity upper bound of learning near optimal policy of RMMDP. 

\section{Main Theoretical Results}
\label{section:analysis}
To simplify the discussion, we momentarily assume that transition models are known, {\it i.e.,} $T$ and $\nu$ are given. This section focuses on analyzing the performance difference between two RMMDP models $\mM^{(1)}$ and $\mM^{(2)}$, where $\mM^{(1)}$ is the true RMMDP model, and $\mM^{(2)}$ is an empirical RMMDP model who has the same transition and initial state probabilities $T$ and $\nu$, but different latent reward model and mixing weights, {\it i.e.,} $T^{(2)} = T$, $\nu^{(2)} = \nu$, and $w_m^{(2)} = \hat{w}_m, \mu_m^{(2)} = \hat{\mu}_m$. We note that prior knowledge of $T$ and $\nu$ is not required in our final result.

As mentioned earlier, for any fixed policy $\pi \in \Pi$, difference in expected rewards can be bounded by $l_1$-statistical distance in trajectory distributions. More specifically, consider the set of all possible trajectories $\mT = (\mS \times \mA \times \mZ)^{\bigotimes H}$, that is, any state-action-reward sequence of length $H$. Then, 
\begin{align}
    |V_{\mM^{(1)}}^{\pi} - V_{\mM^{(2)}}^{\pi}| \le H \cdot \|(\PP^{(1)}_{\pi} - \PP^{(2)}_{\pi}) ((x,r)_{1:H})\|_1 = H \cdot \tssum_{\tau \in \mT} |\PP^{(1)}_{\pi} (\tau) - \PP^{(2)}_{\pi} (\tau)|, \label{eq:value_bound_from_distributions}
\end{align}
For any policy $\pi \in \Pi$, our goal is to show that ${\sum_{\tau \in \mT} |\PP_\pi^{(1)} (\tau) - \PP_{\pi}^{(2)} (\tau)| \le O(\epsilon / H)}$, {\it i.e.,} the true and empirical models are close in $l_1$-statistical distance for all history-dependent policies.

The main challenge in the analysis is to bound the $l_1$ distance of length $H$ trajectories without exponential dependence on $H$. We need to show that the trajectory distribution of {\it any} policy is close in total variation distance if $O(\min(M,H))$-higher order reward moments of $\mM^{(1)}$ and $\mM^{(2)}$ match. 
Such result is established in \cite{kwon2021reinforcement}, for $M=2$, by specifying the conditions under which some parts of the reward model $\mu_m(x,z)$ are partially identifiable, and further showing that all non-identifiable do not significantly affect on the quality of the returned policy.

However when $M \ge 3$, sufficient conditions even for partial identifiability are often too strong to hold~\cite{gordon2021source}, and is not guaranteed in general. In this work, we side-step the unidentifiability challenge by only analyzing the uncertainty of higher-order moments. This also implies that model identifability is not necessary for learning a near-optimal policy.

We now focus on this issue in more detail. Assume we have non-uniform confidence levels for different moments of  $\moment (\bm{x}, \bm{z})$ which corresponds to different values of $n(\bm{x})$. Similarly to the technique developed in~\cite{kwon2021reinforcement}, we divide the level of uncertainties of trajectories based on the number of samples collected for each moment. We define the following sets: 
\begin{align}
    \label{eq:define_countlevel_sets}
    \mX_l &= \{ \bm{x} \in \textstyle \bigcup_{q=1}^d (\mS\times \mA)^{\bigotimes q} \ | \ n (\bm{x}) \ge n_l\}, \nonumber \\
    \Eps_l &= \{ x_{1:H} \in (\mS \times \mA)^{\bigotimes H} \ | \ \forall q \le d: \forall \ 1 \le t_1 < \ldots < t_q \le H, \ (x_{t_i})_{i=1}^q \in \mX_{l} \},
\end{align}
where $\{n_l\}_{l=0}^{L}$ is a geometrically decreasing threshold sequence for sufficient exploration of moments at all levels $l \in \{0, 1, \ldots, L\}$, and $L$ is the largest integer such that the moments explored below $n_{L}$ times are considered as non-reachable pairs. We specify the values of $(n_l)_{l=1}^l$ in Lemma \ref{lemma:reward_free_guarantee}. Here, $\Eps_l$ is a set of length at most $d$ state-actions in which every subsequence has been sampled at least $n_l$ times. Furthermore, observe that $\Eps_0 \subseteq \Eps_1 \subseteq \cdots \subseteq \Eps_L$. Then we split a set of trajectories into disjoint sets $\Eps_0' = \Eps_0$, $\Eps_{L+1}' = \Eps_L^{\rm c}$ and $\Eps_l' = \Eps_{l-1}^\mathrm{c} \cap \Eps_l$ and for $l \in [L]$, {\it i.e.,} a set of trajectories with all correlations of degree at most $d$ sampled more than $n_l$ and at least one set of correlation explored less than $n_{l-1}$ times. 

Recall that our goal is to control the $l_1$-statistical distance between distributions of trajectories for any history-dependent policies that resulted from true and empirical models. With the above machinery, we can, instead, bound the $l_1$ statistical distance between all trajectories {\it for all policies}. As in \cite{kwon2021reinforcement}, we aim to bound the statistical distance separately by level: 
\begin{align}
    \label{eq:bound_total_variation_distance_overall}
    \|\PP^{(1)}_\pi - \PP^{(2)}_\pi\|_1 = \sum_{l=0}^{L+1} \sum_{\tau:x_{1:H} \in \Eps_l'} |\PP^{(1)}_{\pi}(\tau) - \PP^{(2)}_{\pi}(\tau) | \le \sum_{l = 0}^{L+1} \sup_{\pi \in \Pi} \PP^{(1)}_\pi( x_{1:H} \in \Eps_l' ) \cdot O(\epsilon_l),
\end{align}
where $\PP_\pi^{(1)} ( x_{1:H} \in \Eps_l'),$ is the probability that a random trajectory $\tau$ observed with a roll-in policy $\pi$ belongs to $\Eps_{l}'$, and $\epsilon_l$ is the overall statistical distance of trajectory distributions (conditioned on $\Eps_{l}'$) at level $l$. Further, observe that the first relation in equation~\eqref{eq:bound_total_variation_distance_overall} holds since any trajectory $\tau$ belongs to one of the sets $\Eps_l'$. The second relation of equation~\eqref{eq:bound_total_variation_distance_overall} holds by Lemma \ref{lemma:eventwise_total_bound}. 
The main challenge of the analysis is to give a proper upper bound of $\epsilon_l$, and we will focus on relating $\epsilon_l$ with the closeness in reachable moments.

Specifically, let $\delta_{l}$ be the maximum error between the $d^{th}$ order reachable moments in level $l$:
\begin{align}
    \delta_{l} := \max_{\bm{x} \in \mX_l} \max_{\bm{z} \in \mZ^{\bigotimes d} } \max_{\mI \subseteq [d]} \left| \moment^{(1)}(\bm{x}_{\mI}, \bm{z}_{\mI}) - \moment^{(2)} (\bm{x}_{\mI}, \bm{z}_{\mI}) \right|.  \label{eq:delta_def}
\end{align}
As seen from equation \eqref{eq:moment_uncertainty}, we have $\delta_l \propto \sqrt{\iota_c/n_l}$ if we think about $\mM^{(2)}$ as an empirical approximation of $\mM^{(1)}$. Given a bounded mismatch in moments of degree up to $d$, the technical crux of the analysis is the robust approximation of trajectory distributions. We prove the following lemma on distributional closeness in trajectories that belong to $\Eps_l$ for all policies:
\begin{lemma}[Eventwise Total Variance Discrepancy]
    \label{lemma:eventwise_total_bound}
    Let $\delta_l$ be defined as in equation~\eqref{eq:delta_def}, {\it i.e.,} the maximum mismatch in moments up to degree $d = \min(2M-1, H)$. For any $l \in \{0, 1, ..., L \}$ and any history-dependent policy $\pi \in \Pi$, we have:
    \begin{align}
        \label{eq:eventwise_total_variance_diff}
        \sum_{\tau: x_{1:H} \in \Eps_l'} | \PP^{(1)}_\pi (\tau) - \PP^{(2)}_\pi(\tau) | \le \sup_{\pi \in \Pi} \PP^{(1)}_\pi (x_{1:H} \in \Eps_l') \cdot (4HZ)^{\min(2M-1,H)} \cdot \delta_l. 
    \end{align}
\end{lemma}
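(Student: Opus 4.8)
The plan is to fix a level $l$ and a history-dependent policy $\pi$, and to bound the conditional $l_1$-discrepancy over trajectories whose state-action skeleton lies in $\Eps_l'$ by reducing it, step by step along the horizon, to a mismatch in reachable moments of degree at most $d=\min(2M-1,H)$. The central device I would borrow from \cite{chen2019beyond} is the following: for a mixture of $M$ product distributions, knowing the moments of degree up to $2M-1$ pins down the distribution, and the proof of this is an induction that expresses any higher-degree ``coordinate'' of the mixture in terms of lower-degree moments via a rank/Vandermonde-type argument on the $M$ mixing weights. I would adapt this to the trajectory setting: at each time step $t$, the conditional reward distribution given the history is a mixture over the $M$ latent contexts with posterior weights $w_m(\text{history})$; since there are only $M$ contexts, any length-$H$ product of per-step reward factors is a symmetric function of $M$ unknowns, and by the elementary-symmetric/Newton's-identities machinery, it is determined by the power sums of degree $\le M$ — equivalently by moments of degree $\le 2M-1$ once one accounts for the fact that the per-coordinate ``slots'' $\mu_m(x_t,z_t)$ vary with $t$. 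Concretely, I would write $\PP_\pi^{(i)}(\tau)$ for $\tau=(x,z)_{1:H}$ as $\sum_m w_m^{(i)} \prod_{t=1}^H \mu_m^{(i)}(x_t,z_t) \cdot (\text{transition factors, common to both models})$, factor out the shared transition probabilities, and work with the purely reward-dependent part $\sum_m w_m^{(i)} \prod_{t=1}^H \mu_m^{(i)}(x_t,z_t)$, which is exactly $\moment^{(i)}$ evaluated on the full length-$H$ sequence — except that $\moment$ as defined only controls sequences of length $\le d$, so the crux is to \emph{reconstruct} the length-$H$ object from the length-$\le d$ data.

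The key steps, in order, would be: (1) Reduce to the reward-only quantity by canceling the common transition/initial-state factors, so that $|\PP_\pi^{(1)}(\tau)-\PP_\pi^{(2)}(\tau)|$ is bounded by $\PP_\pi^{(1)}(x_{1:H})$ times $|\moment^{(1)}((x,z)_{1:H})-\moment^{(2)}((x,z)_{1:H})|$ after a telescoping over $\pi$'s choices (the policy is deterministic given the observed history, so summing over $z$-realizations introduces the factor $\sup_\pi \PP_\pi^{(1)}(x_{1:H}\in\Eps_l')$ rather than an exponential blow-up). (2) State a purely algebraic lemma: if two $M$-mixtures over product distributions on $H$ coordinates (each coordinate taking $Z$ values) agree, up to error $\delta$, on all ``sub-moments'' $\moment(\bm{x}_\mI,\bm{z}_\mI)$ for $|\mI|\le d=2M-1$ restricted to coordinates whose skeleton lies in $\Eps_l$, then they agree up to error $(CHZ)^{d}\delta$ on the full length-$H$ product. (3) Prove that lemma by induction on the horizon $H' $ from $d$ to $H$: given control on all length-$H'$ sub-products, extend to length-$H'+1$ by viewing the $(H'+1)$-st factor as introducing one more linear form $\mu_m(x_{H'+1},\cdot)$ in the $M$-dimensional space spanned by $(w_m \prod_{t\in S}\mu_m(x_t,z_t))_m$ — and since any $M+1$ vectors in this space are linearly dependent with coefficients given by a bounded-condition-number linear system (this is where the $2M-1$ comes from: to solve for the dependence coefficients one needs moments of one extra degree on each side), one writes the new product as a bounded linear combination of already-controlled ones, paying a factor $O(HZ)$ per step for the combinatorics of which coordinates/values appear. (4) Sum the per-trajectory bound over $\tau$ with $x_{1:H}\in\Eps_l'$, using step (1)'s observation to collapse the sum over reward realizations into the single supremum-probability factor, and collect the constants into $(4HZ)^{\min(2M-1,H)}$.

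The main obstacle I expect is step (3), specifically making the induction respect the $\Eps_l$-restriction: the Chen--Moitra argument assumes \emph{all} sub-moments of degree $\le 2M-1$ are known, but here only those whose skeleton sits inside $\mX_l$ are controlled with error $\delta_l$, and a naive induction would at some point require a sub-moment on a skeleton that has been ``sampled too rarely'' and hence is not in $\mX_l$. The fix — and the delicate part of the write-up — is that $\Eps_l'$ was \emph{defined} so that every length-$\le d$ subsequence of a trajectory in $\Eps_l$ lies in $\mX_l$, so the induction never leaves the controlled regime as long as we only ever invoke sub-moments that are themselves subsequences of the fixed skeleton $x_{1:H}\in\Eps_l$; I would need to check carefully that the linear-dependence step only introduces such subsequences (it does, since eliminating a coordinate from a product yields a sub-skeleton, and adding a coordinate $x_{H'+1}$ is legitimate because $x_{1:H}$, hence $x_{1:H'+1}$ together with any $d-1$ earlier indices, is still a length-$\le d$ subsequence of the full skeleton). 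A secondary obstacle is bookkeeping the condition number of the Vandermonde-type system in the $M$ mixing weights: since we make no separation assumption on the $w_m$, the inverse can be ill-conditioned, but this is exactly why we match moments up to degree $2M-1$ rather than $M$ — the extra degrees let us express the dependence coefficients through \emph{symmetric} functions (power sums / elementary symmetric polynomials) of the weights, which are bounded by $1$ regardless of separation, so the amplification factor stays polynomial in $H,Z$ and only the \emph{number} of terms, not their size, grows — yielding the claimed $(4HZ)^{d}$.
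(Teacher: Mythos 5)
Your high-level architecture (extend degree-$\le d$ moment control to length-$H$ trajectories by induction, keep every invoked sub-moment inside the skeleton of a trajectory in $\Eps_l'$, and collapse the sum over reward realizations into the factor $\sup_\pi \PP^{(1)}_\pi(x_{1:H}\in\Eps_l')$) matches the paper's, but the mechanism you propose for the core step (3) is not the paper's and contains a genuine gap. You want to write the length-$(H'+1)$ product as a bounded linear combination of already-controlled lower-degree sub-moments via a Vandermonde/Newton's-identities linear dependence among the $M$ vectors $\bigl(w_m\prod_{t\in S}\mu_m(x_t,z_t)\bigr)_m$, and you assert that the resulting amplification stays polynomial because the symmetric functions of the weights are bounded. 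The problem is that the relevant linear system is in the per-context reward probabilities $\mu_m(x_t,z_t)$, not just the weights, and with no separation assumption these can coincide or nearly coincide across contexts, making the system arbitrarily ill-conditioned; the quantity that matters is not the size of the recurrence coefficients but how much a $\delta$-mismatch in low-degree moments is amplified when you invert the dependence, and that inverse is not uniformly bounded. The paper's own lower-bound construction (Lemma 5.1, from Section 4.3 of \cite{chen2019beyond}) exhibits mixtures whose moments up to degree $d-1$ \emph{exactly} match yet whose degree-$d$ moment differs by $\epsilon$, which rules out any argument that expresses a higher-degree moment as a well-conditioned function of strictly lower-degree ones. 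Moreover, even where such a recurrence exists, its coefficients depend on the (unknown, and different) latent parameters of $\mM^{(1)}$ and $\mM^{(2)}$, so there is no single recurrence controlling the \emph{difference} of the two models, and iterating from degree $d$ to $H$ would multiply errors by per-step factors of order $2^M$ rather than the claimed $O(HZ)$.

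The paper sidesteps all of this by never inverting anything. It proves the more general Theorem \ref{theorem:distance_from_moment_theorem}, allowing the two models to have different context counts $M_1, M_2$, and runs a double induction on $(M_1+M_2, H)$: at the first time step it subtracts the minimum reward probability $l(x_1,r_1)$ over all contexts of both models, splitting each trajectory probability into (i) a term with the same contexts but horizon $H-1$, and (ii) a term involving reweighted auxiliary models $\mM^{(3)},\mM^{(4)}$ in which the minimizing context receives weight zero, so the total context count drops by one. The moment-closeness of the auxiliary models is verified with a degraded parameter $\delta'=3\delta/p^{(1)}$, with a separate elementary bound when $p^{(1)}\le 4\delta$, and the $(4HZ)^d$ factor emerges from summing the two branches. (Your linear-algebra route is essentially what the paper \emph{does} use in Lemma \ref{lemma:eventwise_total_bound_discrete}, but only under the integral-probability assumption $\mu_m(x,z)\in\{0,1/P,\dots,1\}$, precisely because integrality yields the singular-value lower bound $\sigma_{\min}^{\infty}(\mathbf{N}_r)\ge P^{-dr}r^{-O(r)}$ that is unavailable in general.) A secondary, fixable issue: your step (1) bounds the per-trajectory discrepancy by ``$\PP^{(1)}_\pi(x_{1:H})$ times the moment difference,'' but the skeleton probability is not well-defined independently of the reward model since $\pi$ is history-dependent; the paper instead carries the full factor $\pi_{1:H}T_{1:H}$ through the recursion and extracts the supremum via the Bellman-type inequalities for $\PP^{*}_{\Eps}(h_t)$.
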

Lemma \ref{lemma:eventwise_total_bound} implies the second relation in \eqref{eq:bound_total_variation_distance_overall} when setting $\epsilon_l = (4HZ)^{\min(2M-1, H)} \cdot \delta_l$. Further, it also generalizes an analogous result that was proved in~\cite{kwon2021reinforcement} for the $M=2$ case (see ~\cite{kwon2021reinforcement}, Lemma 4.1). There are several notable differences between these results:
\begin{enumerate}
    \item  The threshold value $n_l$ is set to the order of $\delta_l^{-2}$. In contrast, in \cite{kwon2021reinforcement}, $n_l$ was set to be of the order of $\delta_l^{-4}$. This improvement  leads to the optimal dependence in $\epsilon$ in our final result, namely, $O(\epsilon^{-2})$ instead of~$O(\epsilon^{-4})$.
    \item We do not rely on any parameter recovery guarantee which is also hard to get with a general number of contexts $M\ge 3$. Instead, we directly convert the closeness in moments to closeness in total variation distance of the trajectory distributions for all history-dependent policies. We prove this result by using a mathematical induction on the number of contexts and time-horizon. This is inspired by and adopted from \cite{chen2019beyond} where a similar proof idea is used for showing the robust identifiability of mixtures of discrete product distributions directly from closeness in moments (see Lemma 5.5 in~\cite{chen2019beyond}). 
\end{enumerate}
To elaborate more on the second point, by mathematical induction we aim to prove a slightly more general result which can be of independent interest:
\begin{theorem}[Bound on Total Variation from Moment Closeness]
    \label{theorem:distance_from_moment_theorem}
    Let $\delta>0$ and let $\mathcal{M}^{(1)}, \mathcal{M}^{(2)}$ be two RMMDPs. Assume that  $\mathcal{M}^{(1)}$ and $\mathcal{M}^{(2)}$ have similar transition kernel and initial state distribution, but have different latent reward models, and $M_1$ and $M_2$ number of latent contexts, respectively. Define $\mX_d$ to be the set of length $d := \min(H, M_1 + M_2 - 1)$ state-action sequences that have nearly matched moments 
    \begin{align*}
        \mX_d := \left\{\bm{x} \in (\mS\times\mA)^{\bigotimes d} \Big| \forall \bm{z} \in \mZ^{\bigotimes d} : \ \max_{\mathcal{I} \subseteq [d]} \left| \moment^{(1)} (\bm{x}_{\mI}, \bm{z}_{\mI}) - \moment^{(2)} (\bm{x}_{\mI}, \bm{z}_{\mI}) \right| \le \delta \right\}. 
    \end{align*}
    Let $\Eps_{\rm tot}$ be the set of trajectories for which all subsequences of length $d$ are in $\mX_d$, {\it i.e.,} $\Eps_{\rm tot}$ is the set of all well-explored trajectories:
    \begin{align*}
        \Eps_{\rm tot} := \left\{ x_{1:H} \in (\mS \times \mA)^{\bigotimes H} | \forall \ 1 \le t_1 < \ldots < t_{d} \le H: \ (x_{t_q})_{q=1}^{d} \in \mX_d \right\}. 
    \end{align*}
    Then for any subset of well-explored trajectories $\Eps \subseteq \Eps_{\rm tot}$, for any history-dependent policy $\pi$, we have 
    \begin{align*}
        \sum_{\tau: x_{1:H} \in \Eps} | \PP^{(1)}_{\pi} (\tau) - \PP^{(2)}_{\pi} (\tau) | 
        \le \sup_{\pi \in \Pi} \PP^{(1)}_{\pi} (x_{1:H} \in \Eps) \cdot (4HZ)^d \cdot \delta.
    \end{align*}
\end{theorem}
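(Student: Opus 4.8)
The plan is to prove the statement by double induction — on the number of latent contexts $M_1 + M_2$ and on the time horizon $H$ — following the inductive strategy of \cite{chen2019beyond}, but carried out at the level of trajectory distributions of an RMMDP rather than static product distributions. The base cases are easy: if $H = 0$ there is nothing to sum, and if $M_1 = M_2 = 0$ (or one side has a single trivial context) the two trajectory distributions coincide up to transition/initial-state error, which we are assuming away in this section. Write a trajectory $\tau = (x_1, z_1, \dots, x_H, z_H)$ and note that, because the transition kernel $T$ and initial distribution $\nu$ are shared, for any history-dependent policy $\pi$ we can peel off the first step: $\PP^{(i)}_\pi(\tau) = \nu(s_1)\,\indic{\pi(s_1) = a_1}\,\moment^{(i)}(x_1, z_1)\,T(s_2 \mid x_1)\cdot \PP^{(i)}_{\pi'}(\tau_{2:H})$, where $\pi'$ is the policy $\pi$ conditioned on the observed first triple $(x_1, z_1)$. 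The idea is to expand $\moment^{(i)}(x_1,z_1) = \sum_m w^{(i)}_m \mu^{(i)}_m(x_1,z_1)$, so that the conditional-on-first-step measure over the remaining $H-1$ steps is itself a \emph{re-weighted} mixture: context $m$ now carries weight proportional to $w^{(i)}_m \mu^{(i)}_m(x_1, z_1)$. This is the crucial structural observation — conditioning on a prefix preserves the RMMDP form with the same number of contexts but a shorter horizon.

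The heart of the argument, exactly as in \cite{chen2019beyond}, is a dimension/rank reduction when we try to \emph{merge} or \emph{cancel} contexts whose reward vectors coincide on a coordinate. Fix the first state-action $x_1 = x$. Consider the $M_1 + M_2$ scalars $\{\mu^{(1)}_m(x, \cdot)\}_m \cup \{\mu^{(2)}_{m'}(x,\cdot)\}_{m'}$ viewed as elements of the $Z$-simplex. Since $d = \min(H, M_1 + M_2 - 1)$, the $d$-th order moments over sequences starting with $x$ contain enough information: I would argue that one can write the signed measure $\PP^{(1)}_\pi - \PP^{(2)}_\pi$ restricted to trajectories beginning with $x$ as a telescoping sum of at most $M_1 + M_2$ terms, each of which is a product of (i) a bounded coefficient, (ii) a discrepancy $\moment^{(1)}(\bm{x}_\mI, \bm{z}_\mI) - \moment^{(2)}(\bm{x}_\mI, \bm{z}_\mI)$ of order exactly $d$ (or a marginal of it) which by membership in $\mX_d$ is at most $\delta$, and (iii) a trajectory distribution over $H-1$ steps of an RMMDP with \emph{strictly fewer} total contexts, to which the induction hypothesis applies with horizon $H-1$ and $d' \le d$. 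The combinatorial bookkeeping — choosing which context to cancel against which, controlling that all the scalar coefficients stay in $[-1,1]$, and verifying that every residual moment that appears is genuinely of order $\le d$ and has the prefix $x$ so that it is covered by $\bm{x} \in \mX_d$ — is where I expect the real work to be. A Vandermonde-type / divided-difference identity (the same linear-algebraic fact that powers the $2M-1$ bound for product mixtures) is the natural tool: if the reward vectors on coordinate $x$ take $k$ distinct values, one needs moments up to order $k$ to express the relevant differences, and $k \le M_1 + M_2 - 1$ after one merging step.

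Having reduced to the horizon-$(H-1)$ case, I would sum over $x_1$ and over $z_1 \in \mZ$. The factor $(4HZ)^d$ is then produced by a clean induction: each peeling step costs a factor of $4Z$ (roughly: $Z$ choices of $z_1$, a factor for the number of cancellation terms which is $\le M_1 + M_2 \le$ something absorbed, and a constant slack), and at most $d$ such steps are "active" before the context count drops or the horizon runs out, while the remaining $H - d$ steps contribute the sup-probability factor $\sup_\pi \PP^{(1)}_\pi(x_{1:H} \in \Eps)$ without any blow-up because on those steps the conditional measures on the two sides are genuinely equal (all their moments match, by the induction on contexts having bottomed out). More carefully, at each step I would split the sum $\sum_{\tau:\,x_{1:H}\in\Eps} |\PP^{(1)}_\pi(\tau) - \PP^{(2)}_\pi(\tau)|$ using the triangle inequality into a "discrepancy-now" part, bounded by $\delta$ times a probability, and a "discrepancy-later" part, bounded by $Z$ times the induction hypothesis at horizon $H-1$; unrolling the recursion gives the geometric factor $(4HZ)^d$. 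The monotonicity of $\Eps \subseteq \Eps_{\rm tot}$ is used to guarantee that whenever we condition on a prefix, all the length-$d$ subsequences we need are still in $\mX_d$, so the induction hypothesis is legitimately applicable to the shorter-horizon sub-problem. I would also need to check the edge case $d = H$ (short horizon) separately — there every subsequence is the whole trajectory, so no "later" part exists and the bound is immediate from a single application of the moment closeness. The main obstacle, to repeat, is step two: setting up the correct telescoping decomposition over contexts and proving that all coefficients remain bounded and all residual moments have order at most $d$; the rest is a fairly mechanical unrolling.
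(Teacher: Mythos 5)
Your high-level architecture --- double induction on $M_1+M_2$ and on $H$, peeling off the first step and observing that conditioning on $(x_1,r_1)$ re-weights the mixture while preserving the RMMDP form --- matches the paper's proof. But the crucial step, which you yourself flag as ``where I expect the real work to be,'' is left as a gesture toward a ``telescoping sum over contexts'' and a ``Vandermonde-type identity,'' and this is a genuine gap: a Vandermonde/divided-difference argument is the standard route to \emph{exact} identifiability of mixtures, but it does not by itself produce a decomposition into terms that simultaneously have (a) strictly fewer contexts, (b) coefficients bounded independently of the separation between the values $\mu_m(x_1,r_1)$, and (c) residual moments still controlled by $\delta$. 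The paper's actual mechanism is different and more elementary: it subtracts the pointwise minimum $l(x_1,r_1) := \min\bigl(\min_m \mu^{(1)}_m(x_1,r_1), \min_m \mu^{(2)}_m(x_1,r_1)\bigr)$ from every context's first-step reward probability, splitting each trajectory probability into a ``common'' part carrying the factor $l(x_1,r_1)$ --- handled by the induction on $H$ with the \emph{same} context counts --- plus a residual part whose re-normalized mixing weights $w^{(i)}_m\bigl(\mu^{(i)}_m(x_1,r_1)-l(x_1,r_1)\bigr)/p^{(i)}$ have at least one zero entry (the argmin context dies), so the induction on $M_1+M_2$ applies. Exactly two terms, not $M_1+M_2$ of them.

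The second missing piece is the verification that the residual (auxiliary) models still satisfy an approximate moment-closeness condition. Their degree-$q$ moments equal degree-$(q+1)$ moments of the original models (augmented by the coordinate $(x_1,r_1)$) divided by $p^{(i)}$, so the closeness parameter degrades to roughly $3\delta/p^{(1)}$; the induction only closes because the residual term carries an explicit prefactor $p^{(1)}$ that cancels this blow-up, and because the degenerate case $p^{(1)}\le 4\delta$ must be handled separately by a direct bound rather than by the inductive hypothesis. None of this appears in your sketch, and your claim that the discrepancies appearing are ``of order exactly $d$'' is not how the argument runs --- it is the degree-$(q+1)\le d$ moments with the first coordinate prepended that are consumed at each reduction, which is exactly why $d=M_1+M_2-1$ suffices. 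Finally, your base case for $M_1=M_2=1$ is not trivial: two single-context models with $\delta$-close per-step reward distributions do not have coinciding trajectory laws; one still has to peel off the $H$ steps and pay a factor of order $HZ\delta$, which the paper does explicitly via the telescoping over time combined with the $\PP_{\Eps}^*$ recursion.
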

That is, instead of bounding total variation distance with the same number of latent contexts, we allow the two models to have difference number of contexts as long as the moments can be matched. Then, we can apply the mathematical induction steps by reducing the total number of contexts $M_1+M_2$ by one at each step. Then, Lemma \ref{lemma:eventwise_total_bound} is a direct corollary of Theorem \ref{theorem:distance_from_moment_theorem}. We refer the readers to Appendix \ref{appendix:proof_main_lemma} for the further details.

Once we convert the moment closeness to statistical closeness, then the second key connection can be made between $\sup_{\pi \in \Pi} \PP^{(1)}_{\pi} (\Eps_l')$ and the data collected in pure-exploration phase. The following lemma relates them:
\begin{lemma}[Higher-Order Version of Lemma 4.2 in \cite{kwon2021reinforcement}]
    \label{lemma:reward_free_guarantee}
    There exists a pure-exploration algorithm which takes $\epsilon_{\rm pe} > 0$ as an input parameter, such that with probability at least $1-\eta$, if it explores higher-order moments using at most $K$ episodes where
    \begin{align}
        \label{eq:required_episodes}
        K \ge C \cdot (SA)^{d} \epsilon_{\rm pe}^{-2}  \log(K/ \eta),
    \end{align}
    with some absolute constant $C > 0$, and with equation \eqref{eq:define_countlevel_sets} where we set $n_0 = K/(SA)^d, n_{l+1} = n_l/4$ for $l = 0, 1, \ldots, L$, and $L$ such that $n_L > \iota_c$ and $n_{L+1} \le \iota_c$, then for all $l \in \{0, 1, \ldots, L+1\}$ we have
    \begin{align}
        \sup_{\pi \in \Pi} \PP_\pi (x_{1:H} \in \Eps_{l}') \le O \left(H^{d} \epsilon_{\rm pe} \cdot \sqrt{n_l / \iota_c} \right). \label{eq:max_event_probability}
    \end{align}
\end{lemma}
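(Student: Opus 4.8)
The plan is to reduce the statement to two essentially independent parts: \textbf{(a)} a pure-exploration guarantee on a ``$d$-th order MDP'' that controls, uniformly over history-dependent policies, the expected estimation width $\sqrt{\iota_c/n(\bm x)}$ (the quantity appearing in \eqref{eq:moment_uncertainty}) of every reachable degree-at-most-$d$ moment; and \textbf{(b)} a deterministic conversion of this bound into the event-probability estimate \eqref{eq:max_event_probability}, using only the geometric spacing $n_{l-1}=4n_l$ and an elementary indicator sandwich.

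For part (a), I would instantiate the generalized pure-exploration procedure of Appendix~\ref{appendix:pure_exploration_algo}, i.e.\ adaptive reward-free exploration in the spirit of \cite{kaufmann2021adaptive}, run on the higher-order product MDP whose state records, within the current episode, how much of each length-at-most-$d$ target sequence $\bm x$ has been matched so far (the longest prefix of $\bm x$ realized along an increasing set of time indices). Fully matching $\bm x$ at some $1\le t_1<\dots<t_q\le H$ amounts to reaching an absorbing ``collected'' state and appends one martingale sample to $n(\bm x)$. This product MDP has $\widetilde{O}((SA)^d)$ relevant states, so a Kaufmann-type analysis yields, on an event of probability at least $1-\eta$, after $K$ episodes satisfying \eqref{eq:required_episodes},
\begin{align*}
    \sup_{\pi\in\Pi}\ \Exs_\pi\!\left[\ \sqrt{\,\iota_c\,/\,n(x_{t_1},\dots,x_{t_q})\,}\ \right]\ \le\ \epsilon_{\rm pe}\qquad\text{for every }q\le d\text{ and all }1\le t_1<\dots<t_q\le H .
\end{align*}
with the convention that $n(\bm x)=\infty$ for any $\bm x$ no policy can realize. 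This is the higher-order analogue of Lemma~4.2 of \cite{kwon2021reinforcement}: the factor $(SA)^d$ and the logarithmic terms in \eqref{eq:required_episodes} come exactly from the state count of this MDP and a union bound over the collected moments (and, when $T,\nu$ are unknown, the same run also outputs $\hat T,\hat\nu$).

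For part (b), fix $l\in[L]$ and $\pi\in\Pi$. By the definitions in \eqref{eq:define_countlevel_sets}, $\Eps_l'\subseteq\Eps_{l-1}^{\mathrm{c}}$, so every trajectory in $\Eps_l'$ carries some subsequence $(x_{t_i})_{i=1}^q$, $q\le d$, with $n(x_{t_1},\dots,x_{t_q})<n_{l-1}$; a union bound over the at most $(2H)^d$ index tuples gives
\begin{align*}
    \sup_{\pi\in\Pi}\PP_\pi\bigl(x_{1:H}\in\Eps_l'\bigr)\ \le\ \sum_{q=1}^{d}\ \sum_{1\le t_1<\dots<t_q\le H}\ \sup_{\pi\in\Pi}\PP_\pi\bigl(n(x_{t_1},\dots,x_{t_q})<n_{l-1}\bigr) .
\end{align*}
For a fixed index tuple, I would bound $\indic{n(x_{t_1},\dots,x_{t_q})<n_{l-1}}\le\sqrt{n_{l-1}/\iota_c}\cdot\sqrt{\iota_c/n(x_{t_1},\dots,x_{t_q})}$ --- valid because the right-hand side exceeds $1$ exactly on that event and is nonnegative otherwise --- then take $\Exs_\pi$ and apply part~(a) to get $\sup_{\pi}\PP_\pi(n(x_{t_1},\dots,x_{t_q})<n_{l-1})\le\sqrt{n_{l-1}/\iota_c}\,\epsilon_{\rm pe}$. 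Summing over the index tuples and substituting $n_{l-1}=4n_l$ yields $\sup_{\pi}\PP_\pi(x_{1:H}\in\Eps_l')=O(H^d\epsilon_{\rm pe}\sqrt{n_l/\iota_c})$. The case $l=L+1$ (where $\Eps_{L+1}'=\Eps_L^{\mathrm{c}}$) is identical with $n_{l-1}$ replaced by $n_L$, using $\iota_c<n_L=4n_{L+1}$; the case $l=0$ is immediate from $\PP_\pi(\Eps_0)\le1$, which (given the choice of $K$) is already dominated by the claimed bound and is all that is used downstream in \eqref{eq:bound_total_variation_distance_overall}.

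The main obstacle is entirely in part~(a). Whereas \cite{kwon2021reinforcement} only needs to cover \emph{pairs} of state-actions, here the exploration must adaptively spread a budget of $\widetilde{O}((SA)^d\epsilon_{\rm pe}^{-2})$ episodes --- rather than the naive $(SA)^{2d}$ that a round-robin over all targets would cost --- across the up to $(SA)^d$ target subsequences, so that $n(\bm x)$ grows in proportion to the true reachability $p(\bm x)$ (defined in \eqref{eq:p(x) defintion}) while every hard-to-reach moment is provably driven into the tail. Building the correct $d$-th order MDP and pushing the reward-free analysis through on it is exactly the content of Appendix~\ref{appendix:pure_exploration_algo}; granting that, part~(b) is a short deterministic computation that relies only on the indicator sandwich and the geometric spacing of the $n_l$'s.
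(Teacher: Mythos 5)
Your proposal is correct and follows essentially the same route as the paper's proof in Appendix~\ref{appendix:reward_free_guarantee}: part~(a) corresponds to the termination/coverage guarantee of the optimistic pure-exploration on the $d$-th order MDP (the source of the $(SA)^d\epsilon_{\rm pe}^{-2}$ episode count), and part~(b) reproduces the paper's union bound over the at most $H^d$ index tuples followed by the conversion of the bonus bound $\sqrt{\iota_c/n(\bm x)}\ge\sqrt{\iota_c/n_{l-1}}$ on the under-explored event into a probability bound. The only cosmetic difference is that you phrase this conversion as a pointwise indicator sandwich taken in expectation, whereas the paper establishes the same inequality by inductively comparing two optimistic Bellman recursions ($\widetilde{Q}^{\bm t}_t\ge\widetilde{Q}^*_t\cdot\sqrt{\iota_c/n_{l-1}}$); these are the same argument.
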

Proof of Lemma \ref{lemma:reward_free_guarantee} is given in Appendix \ref{appendix:reward_free_guarantee}. We set $\epsilon_{\rm pe} = \epsilon / \left(HL (4H^2Z)^d \right)$, $L = O(\log(n_0)) \le O(\log K)$ and $d = \min(2M-1,H)$. We apply Lemma \ref{lemma:reward_free_guarantee} and \eqref{eq:bound_total_variation_distance_overall} to bound a difference in expected value of true and empirical models for an arbitrary history-dependent policy $\pi$:
\begin{align*}
    |V_{\mM^{(1)}}^\pi - V_{\mM^{(2)}}^\pi| \le H \cdot \|(\PP_\pi^{(1)} - \PP^{(2)}_\pi) (\tau)\|_1 \le H \tssum_{l = 0}^{L} H^{d} \cdot O(\epsilon_{\rm pe}) \cdot (4HZ)^d.
\end{align*}
Our main theorem combines Lemma \ref{lemma:eventwise_total_bound} and \ref{lemma:reward_free_guarantee}, giving our main sample complexity result:
\begin{theorem}[Sample Complexity of Learning RMMDPs with $M\ge2$]
    \label{theorem:sample_complexity_upper_bound}
    Let $d = \min(2M-1, H)$. There exists a universal constant $C > 0$ such that there exists an algorithm using at most $K$ episodes where,
    \begin{align*}
        K \ge C \cdot \frac{(SA)^{d}}{\epsilon^2} \cdot \poly(d,H,Z)^{d} \cdot \poly \log (K/\eta),
    \end{align*}
    and outputs an $\epsilon$-optimal policy with probability at least $1 - \eta$. 
\end{theorem}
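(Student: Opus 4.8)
The plan is to instantiate the pure-exploration procedure of Lemma~\ref{lemma:reward_free_guarantee} with a carefully chosen accuracy parameter $\epsilon_{\rm pe}$, feed the resulting moment estimates into the moment-matching step of Algorithm~\ref{algo:reward_mixture} to obtain $\hat{\mM}=:\mM^{(2)}$, and then chain Lemma~\ref{lemma:eventwise_total_bound} and Lemma~\ref{lemma:reward_free_guarantee} level-by-level to bound $\|\PP^{(1)}_\pi-\PP^{(2)}_\pi\|_1$ uniformly over all history-dependent $\pi$. First, condition on the $1-\eta$ event on which the martingale concentration $|\moment^{(1)}(\bm x,\bm z)-\moment_n(\bm x,\bm z)|\le\sqrt{\iota_c/n(\bm x)}$ holds for all $(\bm x,\bm z)$ with $\iota_c=O(d\log(SAZ/\eta))$; this both guarantees that the feasibility set for $\hat{\mM}$ is nonempty (the true model lies in it) and yields \eqref{eq:moment_uncertainty}. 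Consequently, with the geometric thresholds $n_0=K/(SA)^d$, $n_{l+1}=n_l/4$ and the sets $\mX_l,\Eps_l,\Eps_l'$ of \eqref{eq:define_countlevel_sets}, the level-$l$ moment mismatch of \eqref{eq:delta_def} obeys $\delta_l\le 2\sqrt{\iota_c/n_l}$.

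Second, for each level $l\in\{0,\dots,L\}$ apply Lemma~\ref{lemma:eventwise_total_bound} (a corollary of Theorem~\ref{theorem:distance_from_moment_theorem}) to get $\sum_{\tau:x_{1:H}\in\Eps_l'}|\PP^{(1)}_\pi(\tau)-\PP^{(2)}_\pi(\tau)|\le\sup_{\pi}\PP^{(1)}_\pi(\Eps_l')\cdot(4HZ)^d\cdot\delta_l$, and then substitute the reachability bound $\sup_{\pi}\PP^{(1)}_\pi(\Eps_l')\le O(H^d\epsilon_{\rm pe}\sqrt{n_l/\iota_c})$ from Lemma~\ref{lemma:reward_free_guarantee}. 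The crucial point is the cancellation $\sqrt{n_l/\iota_c}\cdot\delta_l\le\sqrt{n_l/\iota_c}\cdot 2\sqrt{\iota_c/n_l}=O(1)$, so the level-$l$ contribution is $O(H^d(4HZ)^d\epsilon_{\rm pe})=O((4H^2Z)^d\epsilon_{\rm pe})$, \emph{independent of $l$}; summing over the $L+1$ levels (the tail level $\Eps_{L+1}'=\Eps_L^{\rm c}$ is handled identically, using $\sqrt{n_{L+1}/\iota_c}\le 1$ together with the trivial bound $|\PP^{(1)}_\pi-\PP^{(2)}_\pi|\le\PP^{(1)}_\pi+\PP^{(2)}_\pi$ on that set) gives $\|\PP^{(1)}_\pi-\PP^{(2)}_\pi\|_1\le O\big(L\,(4H^2Z)^d\,\epsilon_{\rm pe}\big)$ for every $\pi$. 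By \eqref{eq:value_bound_from_distributions}, $|V^\pi_{\mM^{(1)}}-V^\pi_{\mM^{(2)}}|\le O\big(HL\,(4H^2Z)^d\,\epsilon_{\rm pe}\big)$.

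Third, choose $\epsilon_{\rm pe}=\epsilon/\big(c\,HL(4H^2Z)^d\big)$ for a suitable constant, so the per-policy value gap is at most $\epsilon/3$; since $\hat\pi$ is optimal for $\mM^{(2)}$, the standard sandwich $V^*_{\mM^{(1)}}-V^{\hat\pi}_{\mM^{(1)}}\le|V^{\pi^*}_{\mM^{(1)}}-V^{\pi^*}_{\mM^{(2)}}|+\big(V^{\pi^*}_{\mM^{(2)}}-V^{\hat\pi}_{\mM^{(2)}}\big)+|V^{\hat\pi}_{\mM^{(2)}}-V^{\hat\pi}_{\mM^{(1)}}|\le\epsilon$ shows $\hat\pi$ is $\epsilon$-optimal. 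Finally, plug this $\epsilon_{\rm pe}$ into the sample requirement $K\ge C(SA)^d\epsilon_{\rm pe}^{-2}\log(K/\eta)$ of Lemma~\ref{lemma:reward_free_guarantee}: with $L=O(\log n_0)=O(\log K)$ this becomes $K\ge C'(SA)^d\epsilon^{-2}\poly(H,Z)^d\poly\log(K/\eta)$, an implicit inequality in $K$ that, as usual, is met by some $K$ of the stated order (the right-hand side grows only polylogarithmically in $K$, so the $\poly\log K$ factor is absorbed); this yields the claimed bound, with $\poly(d,H,Z)^d$ collecting the $(4H^2Z)^d$, $H^d$ and $L^2$ factors and $\iota_c=O(d\log(SAZ/\eta))$ folding into the $\poly\log$ term.

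For the removal of the assumption that $T,\nu$ are known: the exploration phase additionally returns estimates $\hat T,\hat\nu$ accurate enough (via a standard reward-free estimate whose sample cost is dominated by the $d\ge 1$ term above) that replacing $(T,\nu)$ by $(\hat T,\hat\nu)$ perturbs every trajectory distribution by at most $O(\epsilon/H)$ in $\ell_1$; a triangle inequality through the intermediate model $(T,\nu,\hat w,\hat\mu)$ — to which Theorem~\ref{theorem:distance_from_moment_theorem} applies since it shares the transitions of $\mM^{(1)}$ — then recovers the same conclusion. The main obstacle is not any single inequality but the accounting that produces the $O(\epsilon^{-2})$ rate: one must verify that the $\delta_l$ from moment matching and the reachability probabilities $\sup_\pi\PP^{(1)}_\pi(\Eps_l')$ scale as reciprocal powers of $n_l$, so their product is $O(1)$ at every level, which is exactly what decouples the final complexity from $L$ and replaces the $\epsilon^{-4}$ of \cite{kwon2021reinforcement} by $\epsilon^{-2}$; everything else is bookkeeping on top of Lemmas~\ref{lemma:eventwise_total_bound} and~\ref{lemma:reward_free_guarantee}.
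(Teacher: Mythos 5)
Your proposal is correct and follows essentially the same route as the paper's own proof: the same level-wise decomposition over $\Eps_l'$, the same cancellation $\sup_\pi\PP^{(1)}_\pi(\Eps_l')\cdot\delta_l=O(H^d(4HZ)^d\epsilon_{\rm pe})$ from combining Lemma~\ref{lemma:eventwise_total_bound} with Lemma~\ref{lemma:reward_free_guarantee}, and the same choice $\epsilon_{\rm pe}=\epsilon/(HL(4H^2Z)^d)$. You additionally spell out the sandwich argument for the returned policy and the handling of the tail level $\Eps_{L+1}'$, which the paper leaves implicit; no gaps.
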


\subsection{Improved Results for $M=2$}
The work of~\cite{kwon2021reinforcement} in which the problem of learning RMMDP for $M = 2$ was first studied analyzed the case in which the mixing weights are balanced, {\it i.e.,} $w_1 = w_2 = 1/2$. There, the authors designed an algorithm with sample complexity of $O(poly(H,Z)(SA)^2/\epsilon^4)$ without further assumptions. We now show that for the special setting considered in~\cite{kwon2021reinforcement} Theorem~\ref{theorem:sample_complexity_upper_bound} can be improved to yield an upper bound of $O(poly(H,Z)(SA)^2/\epsilon^2)$: strictly improving the dependence of $\epsilon$ without resulting in degradation in the polynomial dependence of $(SA)$.

The following lemma is key to the improved result.
\begin{lemma}
    \label{lemma:third_moment_matching_m2}
    For any RMMDP with $M = 2$ and $w_1 = w_2 = 1/2$, the following holds: for any length three sequences of state-action $\bm{x} = (x_i)_{i=1}^3$ and rewards $\bm{z} = (z_i)_{i=1}^3$, 
    \begin{align*}
        \moment(\bm{x}, \bm{z}) &=  -2 \moment(\bm{x}_{\{1\}}, \bm{z}_{\{1\}}) \cdot \moment(\bm{x}_{\{2\}}, \bm{z}_{\{2\}}) \cdot \moment(\bm{x}_{\{3\}}, \bm{z}_{\{3\}}) + \moment(\bm{x}_{\{1\}}, \bm{z}_{\{1\}}) \cdot \moment(\bm{x}_{\{2,3\}}, \bm{z}_{\{2,3\}}) \\
        &\quad + \moment(\bm{x}_{\{2\}}, \bm{z}_{\{2\}}) \cdot \moment(\bm{x}_{\{1,3\}}, \bm{z}_{\{1,3\}})
        + \moment(\bm{x}_{\{3\}}, \bm{z}_{\{3\}}) \cdot \moment(\bm{x}_{\{1,2\}}, \bm{z}_{\{1,2\}}).
    \end{align*}
\end{lemma}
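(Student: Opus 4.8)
The plan is to prove this as a purely algebraic identity about $M=2$ mixtures with uniform weights, with no reference to the MDP structure at all — the state-action-reward structure only enters through the product form $\moment(\bm{x}_{\mI}, \bm{z}_{\mI}) = \tfrac12 \prod_{i\in\mI}\mu_1(x_i,z_i) + \tfrac12 \prod_{i\in\mI}\mu_2(x_i,z_i)$. So first I would introduce the shorthand $a_i := \mu_1(x_i, z_i)$ and $b_i := \mu_2(x_i, z_i)$ for $i \in \{1,2,3\}$, so that every moment appearing in the statement is a symmetric function of the two ``atoms'' $(a_1,a_2,a_3)$ and $(b_1,b_2,b_3)$: the singletons are $\tfrac12(a_i + b_i)$, the pairs are $\tfrac12(a_i a_j + b_i b_j)$, and the triple is $\tfrac12(a_1 a_2 a_3 + b_1 b_2 b_3)$.

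The core of the argument is then the following elementary fact: for any two triples of reals,
\begin{align*}
a_1a_2a_3 + b_1b_2b_3 &= -2\cdot\tfrac{a_1+b_1}{2}\cdot\tfrac{a_2+b_2}{2}\cdot\tfrac{a_3+b_3}{2}\cdot 2 \\
&\quad + \text{(three terms of the form singleton}\times\text{pair)},
\end{align*}
i.e. after clearing the factors of $\tfrac12$, one must verify the polynomial identity
\begin{align*}
4(a_1a_2a_3 + b_1b_2b_3) &= -2(a_1+b_1)(a_2+b_2)(a_3+b_3) \\
&\quad + 2(a_1+b_1)(a_2b_3+b_2a_3) \\
&\quad + 2(a_2+b_2)(a_1b_3+b_1a_3) + 2(a_3+b_3)(a_1b_2+b_1a_2).
\end{align*}
Wait — I should be careful to get the $\tfrac12$ bookkeeping right: $\moment(\bm{x}_{\{1\}})\moment(\bm{x}_{\{2\}})\moment(\bm{x}_{\{3\}}) = \tfrac18\prod(a_i+b_i)$, while $\moment(\bm{x}_{\{1\}})\moment(\bm{x}_{\{2,3\}}) = \tfrac14(a_1+b_1)(a_2a_3+b_2b_3)$, and the left side $\moment(\bm{x},\bm{z}) = \tfrac12(a_1a_2a_3+b_1b_2b_3)$. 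So the claimed identity is equivalent to
\begin{align*}
\tfrac12(a_1a_2a_3+b_1b_2b_3) = -\tfrac14\textstyle\prod_i(a_i+b_i) + \tfrac14\sum_{i}(a_i+b_i)(a_ja_k+b_jb_k),
\end{align*}
where in the sum $\{i,j,k\}=\{1,2,3\}$. Multiplying through by $4$, the identity to check is $2(a_1a_2a_3+b_1b_2b_3) = -\prod_i(a_i+b_i) + \sum_i(a_i+b_i)(a_ja_k+b_jb_k)$.

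I would verify this by brute expansion of monomials in $a_1,a_2,a_3,b_1,b_2,b_3$, tracking each of the eight monomial types. Expanding $\prod_i(a_i+b_i) = a_1a_2a_3 + b_1b_2b_3 + (a_1a_2b_3 + \ldots)$ (the six ``mixed'' monomials with one or two $b$'s), and expanding $\sum_i(a_i+b_i)(a_ja_k+b_jb_k) = \sum_i\bigl(a_ia_ja_k + a_ib_jb_k + b_ia_ja_k + b_ib_jb_k\bigr) = 3a_1a_2a_3 + 3b_1b_2b_3 + (\text{the same six mixed monomials})$, the mixed monomials cancel between the two right-hand-side groups, leaving $-1\cdot(a_1a_2a_3+b_1b_2b_3) + 3(a_1a_2a_3+b_1b_2b_3) = 2(a_1a_2a_3+b_1b_2b_3)$, which matches the left side. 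This is the whole proof; there is essentially no obstacle, only bookkeeping. The only thing to state carefully is why this passes from atoms back to moments: the identity holds atom-by-atom, and summing the $M=2$ mixture with weights $\tfrac12,\tfrac12$ is exactly what turns $(a_i,b_i)$-expressions into $\moment$-expressions, so the identity lifts verbatim. (One could also phrase it via the observation that for two-point distributions the third moment is determined by the first and second moments — this is the $\ell=2$ case of the general Newton-type relation, which is what will eventually be generalized for larger $M$ — but the direct expansion above is cleanest for a $M=2$ statement.) If I wanted a slicker derivation I would note that $a_i + b_i, a_ib_i$ and more generally the elementary symmetric-in-$\{1,2\}$ data satisfy $a_1a_2a_3 + b_1b_2b_3 = e_1(a_1,b_1)(a_2a_3+b_2b_3) - a_1b_1(a_2+b_2)(a_3+b_3) + \ldots$, but this quickly becomes as long as the brute expansion, so I would just present the expansion.
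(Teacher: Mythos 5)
Your proof is correct and is essentially the same as the paper's: both reduce the identity to a direct polynomial expansion in the atoms $\mu_1(x_i,z_i),\mu_2(x_i,z_i)$ and observe that the six cross-context (mixed) monomials cancel between the $-2\cdot(\text{product of singletons})$ term and the three singleton-times-pair terms, leaving exactly $\tfrac12(a_1a_2a_3+b_1b_2b_3)$. Your bookkeeping of the $\tfrac12$ factors is accurate, so there is nothing to correct.
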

That is, for this special case, if the first and second moments nearly match, then the third moments are also guaranteed to match. Equipped with the above lemma along with Theorem \ref{theorem:sample_complexity_upper_bound}, we can get a corollary that strictly improves the result of \cite{kwon2021reinforcement}:
\begin{corollary}[Improved Sample Complexity for Balanced 2-RMMDPs]
    \label{corollary:sample_upper_bound_m2}
    There exists a universal constant $C > 0$ such that if $M=2$ and $w_1=w_2=1/2$, then there exists an algorithm using at most $K$ episodes where,
    \begin{align*}
        K \ge C \cdot \frac{(SA)^{2}}{\epsilon^2} \cdot \poly(H,Z) \cdot \poly \log(K/\eta),
    \end{align*}
    such that outputs an $\epsilon$-optimal policy with probability at least $1 - \eta$. 
\end{corollary}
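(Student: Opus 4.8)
The plan is to prove Lemma~\ref{lemma:third_moment_matching_m2} by direct expansion of all terms using the definition of $\moment$, and then to deduce Corollary~\ref{corollary:sample_upper_bound_m2} by feeding the lemma into Theorem~\ref{theorem:sample_complexity_upper_bound}.

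\textbf{Proving the lemma.} For $M=2$ with $w_1=w_2=1/2$, write $a_i := \mu_1(x_i,z_i)$ and $b_i := \mu_2(x_i,z_i)$ for $i=1,2,3$, so that for any index set $\mI$ we have $\moment(\bm{x}_\mI,\bm{z}_\mI) = \tfrac12\bigl(\prod_{i\in\mI} a_i + \prod_{i\in\mI} b_i\bigr)$. The claimed identity then becomes a polynomial identity in the six scalars $a_1,a_2,a_3,b_1,b_2,b_3$. The left side is $\tfrac12(a_1a_2a_3 + b_1b_2b_3)$. On the right side, the first term contributes $-2\cdot\tfrac18(a_1+b_1)(a_2+b_2)(a_3+b_3)$, and each of the three remaining terms is of the form $\tfrac14(a_i+b_i)(a_ja_k+b_jb_k)$. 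Expanding everything and collecting monomials of the form $a_ia_ja_k$, $a_ia_jb_k$, $a_ib_jb_k$, $b_ib_jb_k$, one checks the mixed monomials cancel: the pure cubic terms $a_1a_2a_3$ and $b_1b_2b_3$ each appear with coefficient $-\tfrac14 + \tfrac34 = \tfrac12$, while every mixed term such as $a_1a_2b_3$ appears with coefficient $-\tfrac14 + \tfrac14 = 0$ (it arises once from the triple product and once from exactly one of the three ``$1+2$'' split terms). This is a short, routine verification. I expect the only mild subtlety is bookkeeping: making sure each mixed monomial is counted in exactly one of the three split terms, which follows because $a_1a_2b_3$ comes only from the split $\{3\}\cup\{1,2\}$, and analogously for the others.

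\textbf{Deducing the corollary.} With the lemma in hand, observe that in Theorem~\ref{theorem:sample_complexity_upper_bound} the exponent $d=\min(2M-1,H)=3$ when $M=2$ (assuming $H\ge 3$; the $H<3$ case is trivial since $d\le H$ only shrinks things). The key point is that the lemma lets us replace $d=3$ by $d=2$ in the entire moment-matching machinery: whenever $\mM^{(1)}$ and $\mM^{(2)}$ have matching first and second moments up to error $\delta$, Lemma~\ref{lemma:third_moment_matching_m2} forces the third moments to match up to error $O(\delta)$ automatically, since the right-hand side of the identity is a fixed Lipschitz polynomial (in the $\ell_\infty$ sense, with constant $O(1)$ because all factors $\moment(\cdot,\cdot)\in[0,1]$) of the first and second moments. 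Hence the set $\mX_d$ of Theorem~\ref{theorem:distance_from_moment_theorem}, and correspondingly the sets $\mX_l,\Eps_l$ of \eqref{eq:define_countlevel_sets}, can be defined using only degree-$2$ moments, and Theorem~\ref{theorem:distance_from_moment_theorem} then applies with effective degree $d=2$ rather than $3$. This reduces the pure-exploration cost in Lemma~\ref{lemma:reward_free_guarantee} from $(SA)^3$ to $(SA)^2$ and the $\poly(H,Z)^d$ factor from cubic to quadratic, while the $\epsilon$-dependence stays at $\epsilon^{-2}$ exactly as in Theorem~\ref{theorem:sample_complexity_upper_bound}. Chaining the bounds through \eqref{eq:bound_total_variation_distance_overall} and \eqref{eq:value_bound_from_distributions} with $\epsilon_{\rm pe} = \epsilon/(HL(4H^2Z)^2)$ gives the stated $K \ge C\cdot (SA)^2 \epsilon^{-2}\poly(H,Z)\poly\log(K/\eta)$.

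\textbf{Main obstacle.} The lemma itself is a mechanical identity, so the only real care needed is in the corollary: I must verify that the moment-matching argument (Theorem~\ref{theorem:distance_from_moment_theorem} and Lemma~\ref{lemma:eventwise_total_bound}) genuinely only ever uses moments of the degree that appears in $\mX_d$, so that ``third moments come for free'' actually removes a full factor of $SA$ from the exploration budget rather than just from the model-search step. Concretely, the proof of Theorem~\ref{theorem:distance_from_moment_theorem} proceeds by induction reducing $M_1+M_2$; for $M_1=M_2=2$ the base degree is $M_1+M_2-1=3$, and one needs to confirm that, given Lemma~\ref{lemma:third_moment_matching_m2}, the induction can be seeded from degree-$2$ information alone. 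I would handle this by noting that the degree-$3$ moment constraint in $\mX_3$ is implied by the degree-$\le 2$ constraints via the lemma, so defining $\mX_2$ (length-$2$ sequences with matched moments) and the induced well-explored trajectory set suffices, and the pure-exploration guarantee need only cover degree-$2$ moments — whence $(SA)^2$.
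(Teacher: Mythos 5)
Your proposal is correct and follows essentially the same route as the paper: verify the identity in Lemma~\ref{lemma:third_moment_matching_m2} by direct expansion (the mixed monomials cancel exactly as you compute), then use it so that degree-$2$ exploration yields degree-$3$ moment closeness with $\delta_l = O(\sqrt{\iota_c/n_l})$, invoking Lemma~\ref{lemma:eventwise_total_bound} with $d=3$ but Lemma~\ref{lemma:reward_free_guarantee} with $d=2$. One small imprecision: the total-variation bound still carries the $(4HZ)^3$ factor (the induction in Theorem~\ref{theorem:distance_from_moment_theorem} genuinely needs degree $M_1+M_2-1=3$ closeness, which is merely \emph{implied} by the degree-$2$ data rather than replaced by it), but since this is absorbed into $\poly(H,Z)$ your final bound is unaffected, and your ``Main obstacle'' paragraph already resolves this correctly.
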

We believe the idea of expressing the third-order moment using lower-order moments can also be applied when the prior is unknown with extra exploration procedures (see, {\it e.g.}, Appendix E in \cite{kwon2021reinforcement}). We leave this as future work. 

\subsection{$(SA)^{O(\log M)}$-Upper Bound with Integral Probabilities}
We have shown that for general instances of RMMDPs, we can learn an $\epsilon$-optimal policy using $O(SA)^{2M-1}$ samples. This sample complexity can be significantly improved if we have an additional assumption on latent reward models. Suppose that for any $m \in [M], x \in \mS \times \mA, z \in \mZ$, $\mu_m(x,z)$ can take a value from a finitely discretized set $\mP = \{0, 1/P, \ldots, 1\}$ for some positive integer $P \in \mathbb{N}_+$. For integral reward probabilities, we show that we only need to match up to degree $d = \lceil 2P\log M \rceil$ moments, and thus $(SA)^{O(\log M)} / \epsilon^2$ samples are sufficient to learn an $\epsilon$-optimal policy. One interesting special case of such scenario is when a reward is deterministic conditioned on a context, {\it i.e.,} $\mu_m(x,z)$ takes value from $\mP = \{0,1\}$ with $P = 1$. 

This is a reminiscent of quasi-polynomial sample-complexity for learning a mixture of subcubes \cite{chen2019beyond}, {\it i.e.,} learning a mixture of binary product distributions $\mZ = \{0,1\}$ when the latent model parameters can only take values from $\mP = \{0,1/2,1\}$. While not used for a more general setting, we show that their main identifiability (of distribution from moments) results can be similarly applied to RMMDP problems with general observation support $\mZ$ and integral probability set $\mP$. 
\begin{lemma}[Modified Lemma \ref{lemma:eventwise_total_bound} for Integral Probabilities]
    \label{lemma:eventwise_total_bound_discrete}
    Suppose $\mu_m(x,z)$ takes values only from $\mP = \{ 0, 1/P, \ldots, 1\}$ for all $m, x, z$. Let $\delta_l$ is defined as in \eqref{eq:delta_def} for the maximum mismatch in moments up to degree $d = \min(\lceil 2P\log M \rceil, H)$. For any $l \in \{0, 1, ..., L+1 \}$ and any history-dependent policy $\pi \in \Pi$, we have
    \begin{align}
        \label{eq:eventwise_total_variance_diff}
        \sum_{\tau: x_{1:H} \in \Eps_l'} | \PP^{(1)}_\pi (\tau) - \PP^{(2)}_\pi(\tau) | \le \sup_{\pi \in \Pi} \PP^{(1)}_\pi (x_{1:H} \in \Eps_l') \cdot M^{O(M P \log P)} \cdot \delta_l. 
    \end{align}
\end{lemma}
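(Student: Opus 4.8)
The plan is to mirror the proof of Lemma~\ref{lemma:eventwise_total_bound} (equivalently, Theorem~\ref{theorem:distance_from_moment_theorem}), replacing the generic degree bound $d = \min(2M-1, H)$ by the degree $d = \min(\lceil 2P\log M\rceil, H)$ that suffices when all reward probabilities lie in the integral set $\mP = \{0, 1/P, \ldots, 1\}$. The one place where the number of contexts $M$ entered the argument of Theorem~\ref{theorem:distance_from_moment_theorem} was the induction step, which reduces $M_1 + M_2$ by one and terminates once the two models can be made to share moments; in the unrestricted case one needs degree $M_1 + M_2 - 1$ for this reduction to close. Here I would instead invoke the sharper identifiability statement for integral (``$\frac1P$-rounded'') product mixtures --- the analogue for general observation support $\mZ$ of the mixture-of-subcubes result of \cite{chen2019beyond} --- which says that two such mixtures with matching moments up to degree $d = \lceil 2P\log M\rceil$ already have matching moments of \emph{all} degrees (and hence are statistically identical). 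The key arithmetic fact behind this is that when $\mu_m(x,\cdot)$ takes values in a grid of size $P+1$, the distinct ``profiles'' of $(\mu_1(x,\cdot), \ldots, \mu_M(x,\cdot))$ across any coordinate span a low-dimensional space, so a Vandermonde/finite-difference argument lets $O(P\log M)$ moments pin down the rest.

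Concretely, the steps I would carry out are: (i) restate the per-step reduction from \cite{chen2019beyond} adapted to RMMDP trajectory distributions: along any fixed state-action sequence, the reward observations form a mixture of (non-identical) product distributions over $\mZ$ whose parameters are grid-valued, and matching the degree-$\le d$ moments of this mixture with $d = \lceil 2P\log M \rceil$ forces the full conditional trajectory law to agree; (ii) plug this into the same telescoping decomposition over time steps used in the proof of Theorem~\ref{theorem:distance_from_moment_theorem}, peeling off one time step at a time and bounding the incremental error by $\delta_l$ times a combinatorial factor counting the number of degree-$\le d$ moment equations that get invoked; (iii) track the accumulated combinatorial factor. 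Each of the $H$ peeling steps contributes a factor polynomial in $Z$ and in the number of grid-valued profiles, which is at most $(P+1)^M$; since $d = O(P\log M)$, the total blows up to something of the form $M^{O(MP\log P)}$ rather than the $(4HZ)^{2M-1}$ of the unrestricted bound --- this is exactly the factor claimed in \eqref{eq:eventwise_total_variance_diff}. Finally (iv) specialize the well-exploredness bookkeeping: since $\Eps_l'$ is defined via subsequences of length $d$, and here $d$ is only logarithmic in $M$, the set $\mX_l$ and the geometric threshold sequence $\{n_l\}$ carry over verbatim with the smaller $d$, giving the stated eventwise bound for every level $l$ and every history-dependent $\pi$.

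The main obstacle I expect is step~(i): transporting the mixture-of-subcubes identifiability argument from the i.i.d.\ product-distribution setting of \cite{chen2019beyond} to the RMMDP setting, where (a) the ``coordinates'' are time steps whose marginal laws are coupled through the shared transition kernel and the roll-in policy, (b) the observation alphabet $\mZ$ is general rather than binary, and (c) we only have access to moments along reachable state-action sequences. Point~(b) requires re-deriving the finite-difference/Vandermonde step with a $(P+1)$-point grid and a size-$Z$ alphabet, which inflates the degree from $\log M$ (subcubes, $P=1$, $Z=2$) to $\lceil 2P\log M\rceil$ and the prefactor to $M^{O(MP\log P)}$. Point~(a) is handled, as in the proof of Theorem~\ref{theorem:distance_from_moment_theorem}, by conditioning on the realized state-action path and noting that, conditioned on the latent context, the rewards are genuinely independent across time, so the mixture structure over $\mZ^{\otimes(\cdot)}$ is intact; the transition kernel being common to $\mM^{(1)}$ and $\mM^{(2)}$ means the path-probability weights factor out cleanly. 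Point~(c) is precisely why we restrict to $\Eps \subseteq \Eps_{\rm tot}$: on well-explored trajectories every length-$\le d$ subsequence of state-actions lies in $\mX_d$, so all moment equations the induction needs are available with slack $\delta$. Once these three adaptations are in place, the bookkeeping of steps (ii)--(iv) is routine and parallels the unrestricted proof.
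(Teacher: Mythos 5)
You have correctly isolated the identifiability ingredient: the pigeonhole/Vandermonde argument over the grid $\mP$, which in the paper is Lemma \ref{lemma:exact_idneitifability_discrete} (any row of the concatenated moment matrix $\overline{\mathbf{M}}$ indexed by a sequence of length $> d=\lceil 2P\log M\rceil$ lies in the row span of its length-$\le d$ subsequences, proved by constructing $f(x)=\prod_j(x_j-\lambda_j)$ with $\lambda_j\in\mP$ chosen so that the set of surviving columns shrinks by a factor $1-\tfrac{1}{P+1}$ per step). However, your proof architecture --- re-running the induction of Theorem \ref{theorem:distance_from_moment_theorem} with the smaller $d$ --- does not go through. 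That induction consumes exactly one unit of moment degree \emph{and} one latent context per peeling step: the auxiliary models $\mM^{(3)},\mM^{(4)}$ have one fewer context and satisfy moment closeness only up to degree $d-1$, and the argument closes precisely because $d=M_1+M_2-1$, so the degree budget and the context count hit their base cases simultaneously. With $d=O(P\log M)\ll 2M-1$ you exhaust the available moment degrees after $O(P\log M)$ peels while still holding $\Theta(M)$ contexts, at which point neither Base Case I ($M_1=M_2=1$) nor Base Case II ($H\le M_1+M_2-1$, which needs full-trajectory moments to be controlled) is reachable. Your step (iii) bookkeeping is also not coherent as stated: a per-time-step factor of $(P+1)^M$ accumulated over $H$ peels would give an $H$-dependent exponent, not $M^{O(MP\log P)}$.

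The paper's actual proof is a global linear-algebraic argument with no peeling. It writes $\sum_{\tau:x_{1:H}\in\Eps_l'}|\PP^{(1)}_\pi(\tau)-\PP^{(2)}_\pi(\tau)| = \|\mathbf{D}\overline{\mathbf{M}}_{\mY_H}w\|_1$ with $w=[w^{(1)}\,|\,-w^{(2)}]^\top$, uses Lemma \ref{lemma:exact_idneitifability_discrete} to show every row of $\mathbf{D}\overline{\mathbf{M}}_{\mY_H}$ is spanned by the rows of the explored degree-$\le d$ block $\mathbf{N}$ (for which $\|\mathbf{N}w\|_\infty\le\delta_l$), and then argues by contradiction: if the $\ell_1$ distance exceeded $\PP^*_{\Eps_l'}(\phi)\cdot\epsilon$, then Lemma \ref{lemma:bound_on_weights} forces $\|w+v\|_\infty>\epsilon/(2M)$ for every $v\in\mathrm{ker}(\mathbf{N})$, while Lemma \ref{lemma:min_singular_value} lower-bounds $\sigma^{\infty}_{\min}(\mathbf{N}_r)\ge P^{-dr}r^{-O(r)}$ via Cramer's rule, since all entries of $\mathbf{N}_r$ are integral multiples of $P^{-d}$. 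The $M^{O(MP\log P)}$ prefactor is exactly $1/\sigma^{\infty}_{\min}$ with $r\le 2M$ and $d=O(P\log M)$ --- a determinant bound on an integral matrix, not an accumulated product over time steps. To repair your proposal you would need to replace steps (ii)--(iii) wholesale with this span-plus-conditioning argument (or something equivalent); the adaptation issues (a)--(c) you flag in step (i) are real but are the easier part, handled exactly as you describe.
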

See Appendix \ref{appendix:proof_main_lemma_discrete_rewards} for the proof. Combining Lemma \ref{lemma:eventwise_total_bound_discrete} with Lemma \ref{lemma:reward_free_guarantee}, we get the following quasi-polynomial sample-complexity result for integral reward probabilities:
\begin{theorem}
    \label{theorem:sample_complexity_upper_bound_discrete}
    Suppose $\mu_m(x,z)$ takes values only from $\mP = \{ 0, 1/P, \ldots, 1\}$ for all $m, x, z$ where $P \in \mathbb{N}_+$ is an absolute constant. If $H > 2P\log M$, then there exists a universal constant $C > 0$ such that there exists an algorithm using at most $K$ episodes where,
    \begin{align*}
        K \ge C \cdot \frac{(SA)^{2P\log M}}{\epsilon^2} \cdot M^{O(M)} \cdot \poly(H) \cdot \poly \log (K/\eta),
    \end{align*}
    and outputs an $\epsilon$-optimal policy with probability at least $1 - \eta$. 
\end{theorem}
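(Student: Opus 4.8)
\noindent
The plan is to instantiate Algorithm~\ref{algo:reward_mixture} (\algname) with the degree parameter $d = \lceil 2P\log M\rceil$ and then repeat the argument that proves Theorem~\ref{theorem:sample_complexity_upper_bound}, with the generic moment-to-trajectory bound (Lemma~\ref{lemma:eventwise_total_bound}) replaced by its integral-probability strengthening (Lemma~\ref{lemma:eventwise_total_bound_discrete}). First, since the hypothesis $H > 2P\log M$ together with the integrality of $H$ forces $d = \min(\lceil 2P\log M\rceil, H) = \lceil 2P\log M\rceil$, the degree we must match is only $O(P\log M)$ rather than $O(M)$; this is precisely the source of the improved exponent of $SA$. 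In line~4 of Algorithm~\ref{algo:reward_mixture} I would additionally restrict the model search to RMMDPs whose reward probabilities all lie in $\mP=\{0,1/P,\ldots,1\}$. This restricted search is feasible because the true model $\mM^{(1)}$ already satisfies the constraint, and by~\eqref{eq:moment_uncertainty} the returned model $\hat{\mM}=:\mM^{(2)}$ then satisfies $|\moment^{(1)}(\bm{x},\bm{z})-\moment^{(2)}(\bm{x},\bm{z})|\le 2\sqrt{\iota_c/n(\bm{x})}$ for every $(\bm{x},\bm{z})$ of degree at most $d$, on an event of probability at least $1-\eta/2$.

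Next I would run the pure-exploration scheme of Lemma~\ref{lemma:reward_free_guarantee} with this $d$ and a parameter $\epsilon_{\rm pe}$ to be chosen, so that (on an event of probability at least $1-\eta/2$) after $K \ge C(SA)^d\epsilon_{\rm pe}^{-2}\log(K/\eta)$ episodes we have $\sup_{\pi\in\Pi}\PP_\pi(x_{1:H}\in\Eps_l')\le O(H^d\epsilon_{\rm pe}\sqrt{n_l/\iota_c})$ for all $l$, with $n_0 = K/(SA)^d$, $n_{l+1}=n_l/4$, and $L=O(\log K)$. For $\bm{x}\in\mX_l$ we have $n(\bm{x})\ge n_l$, so $\delta_l \le 2\sqrt{\iota_c/n_l}$ in the notation of~\eqref{eq:delta_def}. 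Feeding these two estimates into Lemma~\ref{lemma:eventwise_total_bound_discrete} and summing the level decomposition~\eqref{eq:bound_total_variation_distance_overall}, the $\sqrt{n_l/\iota_c}$ and $\sqrt{\iota_c/n_l}$ factors cancel level by level, so for every history-dependent $\pi$
\begin{align*}
    \|\PP^{(1)}_\pi - \PP^{(2)}_\pi\|_1 \;\le\; \sum_{l=0}^{L+1} O\!\left(H^d\epsilon_{\rm pe}\sqrt{n_l/\iota_c}\right)\cdot M^{O(MP\log P)}\cdot 2\sqrt{\iota_c/n_l} \;=\; O\!\left(L\,H^d\,\epsilon_{\rm pe}\right)\cdot M^{O(MP\log P)}.
\end{align*}

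Finally, combining with~\eqref{eq:value_bound_from_distributions} I would pick $\epsilon_{\rm pe} = \Theta\!\big(\epsilon\,/\,(L\,H^{d+1}\,M^{O(MP\log P)})\big)$ so that $|V^\pi_{\mM^{(1)}} - V^\pi_{\mM^{(2)}}| \le H\|\PP^{(1)}_\pi - \PP^{(2)}_\pi\|_1 \le \epsilon/2$ uniformly over $\pi\in\Pi$; since $\hat\pi$ is optimal for $\mM^{(2)}=\hat{\mM}$, the usual chain $V^*_{\mM^{(1)}} - V^{\hat\pi}_{\mM^{(1)}} \le (V^*_{\mM^{(1)}} - V^*_{\mM^{(2)}}) + (V^{\hat\pi}_{\mM^{(2)}} - V^{\hat\pi}_{\mM^{(1)}}) \le \epsilon$ certifies $\epsilon$-optimality. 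Substituting this $\epsilon_{\rm pe}$ and $d=\lceil 2P\log M\rceil$ back into~\eqref{eq:required_episodes}, a union bound over the two failure events gives success probability $\ge 1-\eta$, and using that $P$ is an absolute constant — so that $(SA)^{\lceil 2P\log M\rceil}\le SA\cdot(SA)^{2P\log M}$, $M^{O(MP\log P)}=M^{O(M)}$, and the $H^{O(d)}$ and $L^2=O(\log^2 K)$ factors fold into $\poly(H)$ and $\poly\log(K/\eta)$ — yields the stated bound. For unknown $T,\nu$ I would invoke verbatim the reduction already used for Theorem~\ref{theorem:sample_complexity_upper_bound} (the Remark on unknown $T,\nu$ and Algorithm~\ref{algo:pure_explore}): the exploration phase simultaneously returns estimates $\hat T,\hat\nu$ accurate enough that the induced perturbation of the trajectory distributions is of lower order and leaves the rate unchanged. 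The argument is essentially bookkeeping once Lemma~\ref{lemma:eventwise_total_bound_discrete} is granted; the only steps that need real care are verifying that the degree $\min$ collapses to $\lceil 2P\log M\rceil$ under $H>2P\log M$, and confirming that the $M^{O(MP\log P)}$ prefactor of Lemma~\ref{lemma:eventwise_total_bound_discrete} contributes only to the $M^{O(M)}$ term and never to the exponent of $SA$ — so I do not expect an obstacle beyond constant-tracking.
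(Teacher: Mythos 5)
Your proposal is correct and follows essentially the same route as the paper: instantiate the level decomposition \eqref{eq:bound_total_variation_distance_overall} with $d=\lceil 2P\log M\rceil$ (the $\min$ collapsing because $H>2P\log M$), replace Lemma \ref{lemma:eventwise_total_bound} by Lemma \ref{lemma:eventwise_total_bound_discrete}, combine with Lemma \ref{lemma:reward_free_guarantee} so the $\sqrt{n_l/\iota_c}$ factors cancel, and set $\epsilon_{\rm pe}=\epsilon/(H^{d+1}LM^{O(M)})$. Your explicit remark that the model search in line 4 must be restricted to reward probabilities in $\mP$ (so that Lemma \ref{lemma:eventwise_total_bound_discrete} applies to the empirical model as well) is a detail the paper leaves implicit, but it is consistent with, not a departure from, the paper's argument.
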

Note that for the case of deterministic rewards, we can apply the above theorem with $P = 1$.

\section{Lower Bound}
\label{section:lower_bound}

In previous sections, we designed an algorithm that learns a near-optimal policy for general instances of RMMDPs with discrete rewards given $O\left((SA)^{O(M)}\right)$ samples. In this section, we complement this upper bound by showing that a super polynomial dependence on $S$ and $A$ is necessary for $M = \omega(1)$ from information-theoretic standpoint. Specifically, we show that there exists a class of instances which cannot avoid $(SA)^{\Omega(\sqrt{M})}$ sample complexity. 

To show this lower bound, we construct the hard instance $\mM$ as follows: at every time step $t \in [H]$, we deterministicially move to a unique state $s_t^*$, and the reward values are binary, {\it i.e.,} $\mZ = \{0,1\}$. At every state $s_t = s_t^*$ (or time step $t$), all actions except one {\it correct} action $a_t^* \in \mA$ returns a reward sampled from a uniform distribution over $\{0,1\}$. In this section, since we only consider binary rewards, we omit the $z$-part for indexing $\mu_m$ with $(x,z)$, {\it i.e.,} use $\mu_m(x)$ to mean $\mu_m(x,1) = \Exs[\indic{r = 1} | x]$. We also use $\moment(\bm{x}) := \sum_{m=1}^M w_m \Pi_{i=1}^{|\bm{x}|} \mu(x_i)$ for the moments of degree up to $H$ for every state-action sequence $\bm{x}$.

We want to construct an example such that for all but the correct sequence of actions $a_{1:H} = a^*_{1:H}$, distributions of observed reward sequences are not statistically distinguishable from playing uniform actions. Such an example can be constructed by finding a moment-matching correct actions. Specifically, let $d = H = \Omega(\sqrt{M})$ be the desired degree of matching moments that we need for the construction of hard instances. For simplicity, let $\mu_{m}^* \in \mathbb{R}^d$ be the restriction of $\mu_m$ to correct actions, {\it i.e.,} $\mu_m^*(t) = \mu_m(s_t^*, a_t^*)$ for all $t \in [d]$. The desired hard instance can be found in \cite{chen2019beyond}, which proves the following lemma:
\begin{lemma}[Result of Section 4.3 in \cite{chen2019beyond}]
    \label{lemma:moment_matching_lower_bound}
    There exists some $d = \Omega(\sqrt{M})$ such that for any $\epsilon \le (2d)^{-2d}$, there exists a realization $\{\mu_m^*\}_{m=1}^M$ and mixing weights $\{w_m\}_{m=1}^M$, such that all degree $q < d$ multilinear moments of $\mu_m^*$ is equal to $(1/2)^q$:
    \begin{align*}
        \textstyle \sum_{m=1}^M w_m \Pi_{t \in \mI} \mu_m^*(t) = (1/2)^{q}, \qquad \forall \mI \subsetneq [d]: |\mI| = q.  
    \end{align*}
    Furthermore, the degree-$d$ moment is $\epsilon$-away from the uniform distribution:
    \begin{align*}
        \textstyle \sum_{m=1}^M w_m \Pi_{t=1}^d \mu_m^*(t) \ge (1/2)^d + \epsilon.
    \end{align*}
\end{lemma}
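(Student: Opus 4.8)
The final statement is Lemma \ref{lemma:moment_matching_lower_bound} (Section 4.3 of \cite{chen2019beyond}): there is $d=\Omega(\sqrt M)$ such that for every $\epsilon\le(2d)^{-2d}$ one can build a mixture $\{(w_m,\mu_m^*)\}$ on $[d]$ whose multilinear moments of every degree $q<d$ equal $(1/2)^q$, yet whose top degree-$d$ moment exceeds $(1/2)^d+\epsilon$. This is essentially a statement about matching the moment sequence of a single Bernoulli$(1/2)$ product up to order $d-1$ while forcing a gap at order $d$, and the natural route is through the classical truncated (Hausdorff) moment problem on $[0,1]$.

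\textbf{Plan of proof.} First I would reduce the claim to a one-dimensional statement. Since we only care about \emph{multilinear} moments, and since by symmetry we may take all coordinates of each $\mu_m^*$ to be equal to a scalar $p_m\in[0,1]$, the quantity $\sum_m w_m\prod_{t\in\mathcal I}\mu_m^*(t)$ depends only on $q=|\mathcal I|$ and equals $\sum_m w_m p_m^q$, i.e. the $q$-th raw moment of the discrete distribution $\rho=\sum_m w_m\delta_{p_m}$ on $[0,1]$. So the lemma becomes: find a finitely supported probability measure $\rho$ on $[0,1]$ with at most $M$ atoms, with $\int p^q\,d\rho=(1/2)^q$ for $q=0,1,\dots,d-1$, and $\int p^d\,d\rho\ge(1/2)^d+\epsilon$. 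The first set of constraints says $\rho$ matches $\delta_{1/2}$ (the product-of-Bernoulli($1/2$) moments) through order $d-1$; the last says it is \emph{not} $\delta_{1/2}$ and the discrepancy first shows up at order $d$.

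\textbf{Key steps.} (1) \emph{Gegenbauer/Chebyshev extremal construction.} To control the support size as a function of $d$, I would not use a generic moment-problem existence argument but the explicit one: let $g(p)$ be (a shifted, scaled) degree-$d$ Chebyshev-type polynomial, which oscillates between $\pm c$ on $[0,1]$ with $d$ internal extrema; split $\rho$ into a ``$+$'' part supported on the points where $g$ is large and a ``$-$'' part on the points where $g$ is small, with weights chosen so that $\rho_+-\rho_-$ (suitably normalized and added to $\delta_{1/2}$) kills all moments up to $d-1$ and leaves a controlled degree-$d$ moment. Concretely one takes $\rho = \delta_{1/2} + t(\sigma_+-\sigma_-)$ where $\sigma_\pm$ are the positive/negative parts of the signed measure dual to the polynomial $g$; the orthogonality $\int p^q(\sigma_+-\sigma_-)=0$ for $q<d$ is exactly the statement that $g\perp\{1,p,\dots,p^{d-1}\}$, and $\int p^d(\sigma_+-\sigma_-)$ equals the leading coefficient of $g$, which is $\asymp 4^d$ for the Chebyshev normalization — this is where the $(2d)^{-2d}$-scale of $\epsilon$ enters, since $t$ must be taken small enough that $\rho\ge0$, and $t\asymp\epsilon\cdot 4^{-d}$ suffices while still giving a degree-$d$ gap $t\cdot 4^d\gtrsim\epsilon$. (2) \emph{Support size.} The signed measure associated with a degree-$d$ polynomial is supported on $O(d)$ points, so $\rho$ has $O(d)$ atoms; to get $M$ contexts we need $O(d)\le M$, i.e. we may take $d=\Omega(\sqrt M)$ — in fact even $d=\Theta(M)$ would be allowed by this counting, and the weaker $d=\Omega(\sqrt M)$ is what is actually needed downstream for the $(SA)^{\Omega(\sqrt M)}$ bound. (I would double-check against \cite{chen2019beyond} whether the $\sqrt M$ (rather than $M$) comes from a two-dimensional tensoring / a more delicate argument ensuring the parameters also stay bounded away from $\{0,1\}$ or from needing $d$ coordinates that are jointly in ``general position''; if so, I would follow their construction rather than the naive one.) (3) \emph{Nonnegativity and normalization.} Verify $w_m=\rho(\{p_m\})\ge0$ and $\sum_m w_m=1$: normalization is automatic from $\int 1\,d\rho = 1 + t\int(\sigma_+-\sigma_-)=1$ (the degree-$0$ orthogonality), and nonnegativity holds for $t$ below an explicit threshold depending only on $d$. (4) \emph{Unwind to $\mu_m^*\in\mathbb R^d$} by setting $\mu_m^*(t)=p_m$ for all $t\in[d]$ and reading off the multilinear-moment identities.

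\textbf{Main obstacle.} The delicate point — and the reason one should really cite \cite{chen2019beyond} rather than reprove it casually — is the \emph{simultaneous} control of three quantities: the number of atoms (must be $\le M$), the size of the degree-$d$ gap (must survive down to $\epsilon\sim(2d)^{-2d}$), and nonnegativity of the weights. Pushing the gap down to $(2d)^{-2d}$ while keeping $O(d)$ atoms forces the extremal polynomial $g$ to be essentially optimal (Chebyshev), and then verifying that the induced $w_m$ are all nonnegative requires a careful lower bound on the ``mass budget'' $t$ versus the oscillation amplitude of $g$ near $p=1/2$; getting the constants so that $d=\Omega(\sqrt M)$ (as opposed to a worse $d=\Omega(M^{1/3})$ or similar) is exactly the content of Section 4.3 of \cite{chen2019beyond}. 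Since the excerpt is permitted to \emph{invoke} that lemma as an external result, in the paper itself the ``proof'' is simply the citation together with the above translation between the RMMDP multilinear moments and the one-dimensional moment sequence $\sum_m w_m p_m^q$; the real work is only needed if one wants a self-contained argument.
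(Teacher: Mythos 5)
Your reduction to the univariate truncated Hausdorff moment problem is where the argument breaks, and it breaks fatally rather than just losing constants. You set $\mu_m^*(t)=p_m$ for all $t\in[d]$ ``by symmetry,'' so that every multilinear moment of degree $q$ collapses to the power sum $\sum_m w_m p_m^q$. But then the degree-$0$, $1$, and $2$ constraints read $\sum_m w_m=1$, $\sum_m w_m p_m=1/2$, $\sum_m w_m p_m^2=1/4$, which force $\mathrm{Var}(p)=1/4-(1/2)^2=0$, i.e.\ $p_m=1/2$ for every $m$ in the support of $w$ --- and then the degree-$d$ moment is exactly $(1/2)^d$, with no room for the $+\epsilon$ gap. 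Equivalently, the Hankel matrix of the moment sequence $\{(1/2)^q\}$ is rank one, so $\delta_{1/2}$ is the \emph{unique} nonnegative representing measure for its own truncated moment sequence; the Chebyshev-type signed perturbation $\rho=\delta_{1/2}+t(\sigma_+-\sigma_-)$ you propose can never be made nonnegative for any $t>0$, so your step (3) fails for every $d\ge 3$ (the only regime that matters, since the lemma is applied with $d>4$). The ``by symmetry'' step is the culprit: the target moments are coordinate-symmetric, but that only lets you symmetrize the \emph{mixture} (by averaging over coordinate permutations, which blows up the number of components), not each individual component. The entire reason the lemma concerns \emph{multilinear} moments, and the reason $M$ components can match up to degree $\Theta(\sqrt{M})$ at all, is that the components' parameters must genuinely vary across coordinates so that the off-diagonal products $\sum_m w_m\mu_m^*(t_1)\mu_m^*(t_2)$ are covariance-like quantities not pinned down by the marginals; your ansatz erases exactly this freedom.

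For comparison, the paper does not prove the lemma at all: its ``proof'' is a one-line pointer to Section 4.3 (Lemma 4.8) of the cited work, with the remark that one should take $\lambda_2=-\lambda_1\propto-\epsilon\cdot 2^{-d}$ --- a hint that the actual construction is a two-sided perturbation with coordinate-dependent parameter vectors, not a univariate quadrature lifted to a product. Your closing observation that the excerpt may simply invoke the external result is fine as far as the paper's own obligations go, but the reconstruction you offer as the ``natural route'' is not a rough version of that construction; it is a construction that provably does not exist. If you want a self-contained argument you would need to work directly with the tensor of multilinear moments (excluding all repeated-index entries) and exhibit asymmetric $\{\mu_m^*\}$, which is precisely the content of the cited Lemma 4.8.
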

Intuitively, the moment-matching example (up to degree $d - 1$) would require to explore almost all possible length $d = H$ sequence of actions, since there would no information gain if a wrong sequence of actions $a_{1:H} \neq a_{1:H}^*$ is played. We show that any $\epsilon$-optimal policy for any $\epsilon \le (2d)^{-2d}$ needs to play the correct sequence with non-negligible probability:
\begin{lemma}
    \label{lemma:optimal_policy_lower_bound}
    Let $\mM$ be the lower-bound instance constructed with Lemma \ref{lemma:moment_matching_lower_bound} with $\epsilon \le (2d)^{-2d}$ and $d > 4$. The optimal cumulative rewards for $\mM$ is at least $(d/2) + \epsilon \cdot 2^{d-2}$. Furthermore, let $\pi_\epsilon$ be an $\epsilon$-optimal policy for $\mM$ with , then we have $\PP_{\pi_{\epsilon}} (a_{1:H} = a^*_{1:H}) \ge 1/4$.
\end{lemma}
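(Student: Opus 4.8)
The plan is to first compute the value of any fixed deterministic action sequence, then identify the optimal policy, and finally argue that an $\epsilon$-optimal policy must concentrate on the correct action sequence. In this instance the state is deterministic, so a history-dependent policy only needs to choose an action at each time step $t$ (the reward history carries no extra information about the state). Hence it suffices to reason about distributions over action sequences $a_{1:H}$.

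\textbf{Step 1: Value of a fixed action sequence.} Fix any action sequence $a_{1:H}$. The cumulative reward is $\sum_{t=1}^H r_t$. Under context $m$, each $r_t$ is Bernoulli with mean $\mu_m(s_t^*, a_t)$, which equals $\mu_m^*(t)$ if $a_t = a_t^*$ and $1/2$ otherwise. Using the moment identities of Lemma~\ref{lemma:moment_matching_lower_bound}: for any strict subset $\mI \subsetneq [d]$ the degree-$|\mI|$ moment of $\mu^*$ equals $(1/2)^{|\mI|}$, while the full degree-$d$ moment is $\ge (1/2)^d + \epsilon$. I would expand $\Exs\left[\sum_t r_t\right] = \sum_{m} w_m \sum_t \mu_m(s_t^*, a_t)$ and observe that this is \emph{linear} in the per-step means, so actually the sum of rewards has expectation $\sum_t \left(\sum_m w_m \mu_m(s_t^*,a_t)\right) = \sum_t \big[(1/2)\indic{a_t \ne a_t^*} + (\text{degree-1 moment of }\mu^* \text{ at }t)\indic{a_t = a_t^*}\big]$. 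Since every degree-1 moment equals $1/2$, the \emph{expectation} of the cumulative reward is exactly $H/2$ for \emph{every} action sequence. This is the key subtlety: it is not the mean reward that distinguishes $a^*_{1:H}$, but a higher moment of the reward — so the relevant objective must involve something beyond the plain expected cumulative reward. I expect this is where I need to re-examine the problem: $V^*_\mM = \max_\pi \Exs_\pi[\sum_t r_t]$ as literally written gives $H/2$ for all policies, so the lemma statement ``optimal cumulative rewards is at least $(d/2)+\epsilon \cdot 2^{d-2}$'' must refer to a modified/variant reward functional (e.g., the objective in the actual lower-bound construction counts $\indic{r_{1:H} = \mathbf{1}}$ or uses a product-type reward at the last step). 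The main obstacle will be pinning down exactly which reward functional is in force and then redoing Step 1 for it; I would look for a terminal payoff of the form $\prod_{t=1}^H r_t$ (or a $+2^{d-1}\prod_t r_t$ bonus), for which $\Exs[\prod_t r_t]$ equals the degree-$d$ moment $\sum_m w_m \prod_t \mu_m(s_t^*,a_t)$, which is $(1/2)^d$ for any sequence hitting a wrong action at some step and $\ge (1/2)^d + \epsilon$ for $a^*_{1:H}$.

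\textbf{Step 2: Optimal value and the value gap.} Granting the terminal product-reward interpretation: playing $a^*_{1:H}$ deterministically yields value $\ge (d/2) + \epsilon \cdot 2^{d-1}$ (writing the bonus weight so the $2^{d-1}\cdot\epsilon$ term appears), giving the claimed optimal-value lower bound; any action sequence that deviates at even one step has value exactly $d/2 + 2^{d-1}(1/2)^d = d/2 + 1/2$, so the per-sequence suboptimality of a deviating sequence is at least $2^{d-1}\epsilon - (\text{a lower-order }1/2\text{ term})$, and since $\epsilon \le (2d)^{-2d}$ is tiny this needs care — more likely the value scaling in the construction is chosen so the gap is of order $2^{d-1}\epsilon$ cleanly. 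I would therefore denote by $\Delta := V^*_\mM - V(\text{any deviating sequence})$ and record $\Delta \ge 2^{d-2}\epsilon$ (absorbing constants), matching the ``$(d/2) + \epsilon \cdot 2^{d-2}$'' in the statement.

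\textbf{Step 3: From value-suboptimality to concentration on $a^*_{1:H}$.} Let $\pi_\epsilon$ be $\epsilon$-optimal and let $q := \PP_{\pi_\epsilon}(a_{1:H} = a^*_{1:H})$. Decompose the value of $\pi_\epsilon$ over the event $\{a_{1:H}=a^*_{1:H}\}$ and its complement: $V^{\pi_\epsilon}_\mM = q \cdot V^* + (1-q)\cdot(\text{value on the deviating event}) \le q V^* + (1-q)(V^* - \Delta) = V^* - (1-q)\Delta$. Here I use that, conditioned on deviating at some step, the value is at most $V^* - \Delta$ (every deviating action sequence has value exactly $V^* - \Delta$ or less — it only depends on whether a wrong action is ever taken, which I'd confirm by the same moment expansion as Step 1). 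Since $V^{\pi_\epsilon}_\mM \ge V^* - \epsilon$, we get $(1-q)\Delta \le \epsilon$, i.e. $1 - q \le \epsilon/\Delta \le \epsilon/(2^{d-2}\epsilon) = 2^{-(d-2)} \le 1/4$ using $d > 4$ (so $2^{-(d-2)} \le 2^{-3} = 1/8 \le 1/4$; even $d \ge 4$ gives $1/4$). Hence $q \ge 3/4 \ge 1/4$, which is the claim (the stated bound $1/4$ is loose). The main obstacle, again, is Step 1 / Step 2: correctly identifying the reward functional and verifying the clean separation $\Delta \asymp 2^{d}\epsilon$ between the correct sequence and every deviating sequence; once that separation is in hand, Step 3 is a one-line averaging argument.
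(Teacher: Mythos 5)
Your Step 1 correctly identifies the crucial fact that every \emph{open-loop} action sequence earns expected cumulative reward exactly $d/2$ (all degree-one moments equal $1/2$), but you then draw the wrong conclusion from it. The reward functional is not modified: it is the plain cumulative reward $\Exs_\pi[\sum_t r_t]$. The resolution is that $\Pi$ contains \emph{history-dependent} policies, and the advantage of the construction lives entirely in the adaptivity of the last action. Concretely, after playing $a^*_{1:d-1}$ and observing $r_{1:d-1}$, an inclusion--exclusion expansion using the matched moments of all strict subsets of $[d]$ shows that the conditional mean of $r_d$ under $a_d^*$ is $1/2 + \epsilon\cdot 2^{d-1}$ if the number of zeros in $r_{1:d-1}$ is even and $1/2-\epsilon\cdot 2^{d-1}$ otherwise (only the full degree-$d$ moment survives with sign $(-1)^{\#\text{zeros}}$, and dividing by the marginal $(1/2)^{d-1}$ amplifies the $\epsilon$ deviation by $2^{d-1}$). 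The optimal policy plays $a^*_{1:d-1}$, then plays $a_d^*$ exactly when the parity is favorable (and anything wrong otherwise, earning $1/2$), which yields $(d-1)/2 + \tfrac12(1/2+\epsilon 2^{d-1}) + \tfrac12\cdot\tfrac12 = d/2 + \epsilon\cdot 2^{d-2}$. Your hypothesized product-type terminal reward is not part of the instance, so Steps 2 and 3 as written prove a statement about a different objective.

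Your Step 3 averaging idea is the right shape but needs the correct gap: for any history-dependent $\pi$, $\Exs_\pi[\sum_{t=1}^d r_t] \le d/2 + \epsilon\cdot 2^{d-1}\cdot \PP_\pi(a_{1:d}=a^*_{1:d})$, since only trajectories that follow the full correct sequence can bias $r_d$, and the bias is at most $\epsilon 2^{d-1}$. Combining with the optimal value $d/2+\epsilon 2^{d-2}$ and $\epsilon$-optimality gives $\PP_\pi(a_{1:d}=a^*_{1:d}) \ge 1/2 - 1/2^{d-1} \ge 1/4$ for $d>4$ --- not the $3/4$ you obtain, because the separation between the optimum and the best non-conforming behavior is $\epsilon 2^{d-2}$ against a maximum possible gain of $\epsilon 2^{d-1}$, a factor-of-two ratio rather than the clean per-sequence gap you posited.
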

Note that $(2d)^{-2d} = M^{-O(\sqrt{M})}$ can still be significantly larger than $(SA)^{-\Omega(\sqrt{M})}$. To formalize the lower bound argument, we can use the fundamental equality on information gain with bandit feedback:
\begin{lemma}
    \label{lemma:information_equality}
    Let $\psi$ be any exploration strategy in RMMDPs for $K$ episodes. Let $\mM^{(1)}$ and $\mM^{(2)}$ be two RMMDPs with the same transition and initial state probabilities. Let $N_{\psi, x_{1:H}}(K)$ be the number of times that a trajectory $\tau$ ends up with a sequence of state-actions $x_{1:H}$ for $K$ episodes. Then,
    \begin{align}
        \sum_{x_{1:H}} \Exs^{(1)} \left[N_{\psi, x_{1:H}}(K) \right] \cdot \KL\left( \PP^{(1)} (\cdot|x_{1:H}), \PP^{(2)} (\cdot|x_{1:H}) \right) = \KL \left(\PP_{\psi}^{(1)} (\tau^{1:K}), \PP_{\psi}^{(2)} (\tau^{1:K}) \right), \label{eq:data_processing_equality}
    \end{align}
    where $\PP_{\psi} (\tau^{1:K})$ is a distribution of $K$ trajectories obtained with the exploration strategy $\psi$, and $\PP (\cdot|x_{1:H})$ is a marginal probability of a reward sequence $r_{1:H}$ obtained from a fixed test $x_{1:H}$.
\end{lemma}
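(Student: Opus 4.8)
The plan is to prove this identity along the same lines as the classical divergence-decomposition lemma for multi-armed bandits (Lattimore--Szepesv\'ari, Garivier et al.), adapted to RMMDPs, exploiting the hypothesis that $\mM^{(1)}$ and $\mM^{(2)}$ share their transition kernel $T$ and initial-state distribution $\nu$. Write the full record of the $K$ episodes as $\tau^{1:K} = (\tau^1, \ldots, \tau^K)$ with $\tau^k = (s_t^k, a_t^k, r_t^k)_{t=1}^H$, and let $x_{1:H}^k = (s_t^k, a_t^k)_{t=1}^H$ be the state--action sequence realized in episode $k$. Since the latent context is drawn once per episode and, conditioned on it, the episode's rewards form a product distribution, marginalizing out the context shows that the likelihood under $\mM^{(i)}$ factorizes as
\begin{align*}
    \PP^{(i)}_\psi(\tau^{1:K}) = \prod_{k=1}^K C_k(\tau^{1:k}) \cdot \prod_{k=1}^K \PP^{(i)}\!\left(r_{1:H}^k \mid x_{1:H}^k\right),
\end{align*}
where $C_k$ collects the initial-state, transition, and exploration-strategy factors of episode $k$ and, crucially, does \emph{not} depend on the index $i$ (this is where the shared dynamics and the shared exploration rule $\psi$ are used), while $\PP^{(i)}(\cdot \mid x_{1:H}) = \sum_m w_m^{(i)} \prod_t \mu^{(i)}_m(\cdot \mid s_t, a_t)$ is exactly the reward-sequence law of the ``test'' $x_{1:H}$.

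First I would take the log-likelihood ratio, so that the $C_k$ factors cancel and
\begin{align*}
    \log \frac{\PP^{(1)}_\psi(\tau^{1:K})}{\PP^{(2)}_\psi(\tau^{1:K})} = \sum_{k=1}^K \log \frac{\PP^{(1)}(r_{1:H}^k \mid x_{1:H}^k)}{\PP^{(2)}(r_{1:H}^k \mid x_{1:H}^k)} .
\end{align*}
Taking $\Exs^{(1)}$ of both sides produces $\KL(\PP^{(1)}_\psi(\tau^{1:K}), \PP^{(2)}_\psi(\tau^{1:K}))$ on the left. For the right side I would condition, episode by episode, on the test $x_{1:H}^k$ together with the history preceding episode $k$: since the test of episode $k$ is committed before that episode's rewards are observed, conditioned on $x_{1:H}^k$ the reward sequence $r_{1:H}^k$ is a fresh draw from $\PP^{(1)}(\cdot \mid x_{1:H}^k)$, so the conditional expectation of $\log(\PP^{(1)}/\PP^{(2)})$ is precisely $\KL(\PP^{(1)}(\cdot \mid x_{1:H}^k), \PP^{(2)}(\cdot \mid x_{1:H}^k))$ by the definition of KL divergence. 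Hence
\begin{align*}
    \KL\!\left(\PP^{(1)}_\psi, \PP^{(2)}_\psi\right) = \sum_{k=1}^K \Exs^{(1)}\!\left[ \KL\!\left(\PP^{(1)}(\cdot \mid x_{1:H}^k), \PP^{(2)}(\cdot \mid x_{1:H}^k)\right) \right].
\end{align*}

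To finish I would expand the expectation over the realized value $\bm{x}$ of $x_{1:H}^k$, exchange the (finite) sums over $k$ and over $\bm{x}$, and apply linearity of expectation to $N_{\psi, \bm{x}}(K) = \sum_{k=1}^K \indic{x_{1:H}^k = \bm{x}}$ to get $\sum_{k=1}^K \PP^{(1)}_\psi(x_{1:H}^k = \bm{x}) = \Exs^{(1)}[N_{\psi, \bm{x}}(K)]$, which yields \eqref{eq:data_processing_equality}. I expect the only delicate point to be the conditioning step: one must ensure that within each episode the exploration strategy fixes the test $x_{1:H}$ before observing that episode's rewards --- otherwise the ``fresh draw'' property fails and the clean per-test decomposition would have to be replaced by a finer per-time-step chain rule. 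A secondary bookkeeping point is absolute continuity: if some reachable pair $(\bm{x}, r_{1:H})$ has $\PP^{(1)} > 0 = \PP^{(2)}$ then both sides of \eqref{eq:data_processing_equality} are $+\infty$ and the identity holds trivially, and otherwise every logarithm above is well defined.
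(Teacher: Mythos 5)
Your proposal is essentially the paper's own proof: the paper performs the same divergence decomposition via a recursive chain rule over episodes plus the tower rule, whereas you factor the full $K$-episode likelihood at once and cancel the common transition/policy factors, but the substance --- shared dynamics cancel in the log-likelihood ratio, leaving $\sum_k \log\bigl(\PP^{(1)}(r^k_{1:H}\mid x^k_{1:H})/\PP^{(2)}(r^k_{1:H}\mid x^k_{1:H})\bigr)$, whose $\PP^{(1)}$-expectation is rewritten as a sum of per-test KLs weighted by expected visit counts --- is identical. The caveat you flag about the conditioning step is a genuine subtlety, not a formality: if the within-episode actions depend on that episode's earlier rewards, the conditional law of $r_{1:H}$ given the \emph{realized} test $x_{1:H}$ need not equal the fixed-test law $\PP^{(1)}(\cdot\mid x_{1:H})$ (since in an RMMDP the rewards within an episode are correlated through the latent context, observing $r_{1:t-1}$ shifts the posterior over contexts and hence the law of later rewards), and the paper's "application of the tower rule" step silently makes exactly the assumption you identify.
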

Let $\mM^{(1)}$ be the base system where rewards are always uniformly distributed over $\{0,1\}$, and $\mM^{(2)} = \mM$ be the moment-matching system from Lemma \ref{lemma:moment_matching_lower_bound}. By construction, the left hand side of equation \eqref{eq:data_processing_equality} is 0 except for the correct state-action sequence $x_{1:H}^*$. On the other hand, all information from the first model is symmetric over all sequences of (state)-actions, and thus for any exploration strategy $\psi$, there must exist at least one $x_{1:H}^{\psi}$ sequence such that $\Exs^{(1)} \left[N_{\psi, x_{1:H}^{\psi}}(K) \right] \le A^{-H} \cdot K$. Thus for the moment-matching instance $\mM^{(2)}$ with $x_{1:H}^* = x_{1:H}^{\psi}$, we can distinguish the two systems from trajectory observations $\tau^{1:K}$ only after $K = \Omega(A^H)$ episodes by Le Cam's two-point method \cite{lecam1973convergence}. We can translate this argument into a $\Omega(A^H)$ lower bound for learning general RMMDPs, and using the action-amplification argument used in \cite{kwon2021rl}, we can obtain an $(SA)^{\Omega(\sqrt{M})}$ lower bound with $d = H = \Omega(\sqrt{M})$. 
\begin{theorem}[Lower Bound for RMMDPs]
    \label{theorem:lower_bound}
    There exists a universal constant $C > 0$ and a class of RMMDPs such that to obtain an $\epsilon$-optimal policy for $\epsilon < M^{-C  \sqrt{M}}$, we need at least $(SA)^{\Omega(\sqrt{M})}/\epsilon^2$ episodes.
\end{theorem}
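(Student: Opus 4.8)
The plan is to combine the moment-matching construction of Lemma \ref{lemma:moment_matching_lower_bound}, the optimal-policy structure of Lemma \ref{lemma:optimal_policy_lower_bound}, the information equality of Lemma \ref{lemma:information_equality}, a counting/pigeonhole step, and finally an action-amplification reduction. First I would fix $d = H = \Theta(\sqrt{M})$ as granted by Lemma \ref{lemma:moment_matching_lower_bound}, and set $\epsilon \le (2d)^{-2d} = M^{-\Theta(\sqrt M)}$. I would consider two RMMDPs sharing the same deterministic chain $s_1^* \to s_2^* \to \cdots \to s_H^*$: the base model $\mM^{(1)}$ in which every action at every step yields a $\mathrm{Unif}\{0,1\}$ reward, and the hard model $\mM^{(2)} = \mM$ in which the rewards along the ``correct'' action sequence $a_{1:H}^*$ are governed by the moment-matched mixture $\{\mu_m^*\}$ of Lemma \ref{lemma:moment_matching_lower_bound}, while every other action still yields $\mathrm{Unif}\{0,1\}$. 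The key observation is that for \emph{any} state-action path $x_{1:H} \ne x_{1:H}^*$ (i.e.\ any path using at least one wrong action), the conditional reward-sequence law is identical under $\mM^{(1)}$ and $\mM^{(2)}$, so the corresponding KL term in equation \eqref{eq:data_processing_equality} vanishes. Only the single path $x_{1:H}^*$ can contribute. Since the correct action $a_t^*$ at each step is arbitrary, the two models look symmetric to an uninformed learner; but the adversary gets to pick which action sequence is ``correct.''

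Next I would carry out the pigeonhole step. There are $A^H$ possible full action sequences (the states are forced). For any exploration strategy $\psi$ run for $K$ episodes, $\sum_{x_{1:H}} \Exs^{(1)}[N_{\psi, x_{1:H}}(K)] = K$, so by averaging there exists a sequence $x_{1:H}^{\psi}$ with $\Exs^{(1)}[N_{\psi, x_{1:H}^{\psi}}(K)] \le K / A^H$. The adversary chooses this sequence to be the correct one, $x_{1:H}^* = x_{1:H}^\psi$. Then by Lemma \ref{lemma:information_equality}, $\KL(\PP_\psi^{(1)}(\tau^{1:K}), \PP_\psi^{(2)}(\tau^{1:K})) = \Exs^{(1)}[N_{\psi, x_{1:H}^*}(K)] \cdot \KL(\PP^{(1)}(\cdot | x_{1:H}^*), \PP^{(2)}(\cdot | x_{1:H}^*))$. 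The per-test KL between the uniform reward law on $\{0,1\}^H$ and the moment-matched mixture law is bounded by $O(1)$ — here I would use that they agree on all moments up to degree $d-1$ and differ by at most $\epsilon$ in the top moment, so a crude bound (e.g.\ via $\chi^2$ or the fact that both are distributions on a size-$2^H$ set with bounded density ratio) gives a constant, say $\le C_0$. Hence $\KL(\PP_\psi^{(1)}(\tau^{1:K}), \PP_\psi^{(2)}(\tau^{1:K})) \le C_0 K / A^H$. If $K \le A^H / (4 C_0)$, this KL is at most $1/4$, so by Pinsker / Le Cam's two-point method no algorithm can distinguish $\mM^{(1)}$ from $\mM^{(2)}$ with constant probability. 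But by Lemma \ref{lemma:optimal_policy_lower_bound}, any $\epsilon$-optimal policy for $\mM^{(2)}$ must play $a_{1:H}^*$ with probability $\ge 1/4$, whereas under $\mM^{(1)}$ the optimal value is $H/2$ and the correct sequence confers no advantage — so an $\epsilon$-optimal policy cannot be output for both. This yields a $\Omega(A^H)$ lower bound.

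Finally I would run the action-amplification (state-blow-up) reduction from \cite{kwon2021rl}: replace each atomic step by a short gadget of $\log_A(SA)$ sub-steps over a fresh batch of $S$ states and $A$ actions so that a single ``meta-action'' corresponds to choosing among $SA$ options, converting the $A^H$ bound into $(SA)^{H} = (SA)^{\Omega(\sqrt M)}$ while keeping $M$ contexts and the horizon $\Theta(\sqrt M \log_A(SA))$. The $1/\epsilon^2$ factor is appended by the standard trick of scaling the reward gap: embed the hard instance so the value separation is $\Theta(\epsilon)$ rather than $\Theta(\epsilon \cdot 2^{d-2})$, which costs an extra $\Theta(1/\epsilon^2)$ samples to detect via a Chernoff argument on the number of visits to $x_{1:H}^*$. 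Combining, obtaining an $\epsilon$-optimal policy for $\epsilon < M^{-C\sqrt M}$ requires $(SA)^{\Omega(\sqrt M)} / \epsilon^2$ episodes. The main obstacle I anticipate is making the two amplification steps (action-blow-up and $\epsilon$-scaling) compose cleanly without the gadget overhead eroding the exponent $\Omega(\sqrt M)$ or inflating $M$; the per-test KL bound and the pigeonhole step are routine, but one must be careful that the adversary's choice of $x_{1:H}^*$ can be made \emph{after} seeing $\psi$ yet still yields a fixed pair of instances for the minimax argument — this is handled by taking a uniform prior over which action sequence is correct and applying Fano/Le Cam in expectation.
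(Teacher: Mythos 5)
Your proposal follows the same route as the paper's proof: the two-instance construction with a hidden correct action sequence, the observation that the KL term in \eqref{eq:data_processing_equality} vanishes on every wrong path, the pigeonhole choice of $x_{1:H}^*$ with $\Exs^{(1)}[N_{\psi,x_{1:H}^*}(K)] \le K/A^{H}$, Le Cam's two-point method, and the action-amplification reduction from \cite{kwon2021rl}. The gap is in the per-test KL. You bound $\KL\bigl(\PP^{(1)}(\cdot\mid x_{1:H}^*), \PP^{(2)}(\cdot\mid x_{1:H}^*)\bigr)$ by a constant $C_0$, which via Lemma \ref{lemma:information_equality} yields only $K=\Omega(A^{H})$, and you then try to append the $1/\epsilon^2$ factor afterwards by a separate gap-rescaling step. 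That composition does not go through as stated: the $1/\epsilon^2$ is not a multiplicative afterthought, it must come out of the \emph{same} application of the information equality. The correct computation, which the paper carries out using \eqref{eq:last_reward_conditional_prob}, is that on the correct path the first $d-1$ rewards are i.i.d.\ uniform under both models and only the conditional law of $r_d$ differs, by $\pm\epsilon_0$ with $\epsilon_0=\epsilon\cdot 2^{d-1}$; hence the per-test KL is $O(\epsilon_0^2)$, quadratically small rather than $O(1)$. Plugging this into \eqref{eq:data_processing_equality} gives $\Exs^{(1)}[N_{\psi,x_{1:H}^*}(K)]\cdot O(\epsilon_0^2)=\Omega(1)$ as the distinguishability requirement, which combined with the pigeonhole bound forces $K=\Omega(A^{d}/\epsilon_0^2)$ in a single step --- the $A^{d}$ and the $1/\epsilon^2$ multiply precisely because both sit inside the same inequality.

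The fix is local: you already state the ingredients for the quadratic bound (moments match up to degree $d-1$, density ratio $1\pm 2\epsilon_0$), so replace the crude constant bound by the $O(\epsilon_0^2)$ computation and delete the separate ``scaling the reward gap'' step, which as described is not a well-defined reduction. Your remaining concerns (the adversary fixing $x_{1:H}^*$ after seeing $\psi$, handled by averaging over the choice of correct sequence, and the distinguishing test built from Lemma \ref{lemma:optimal_policy_lower_bound}) are handled the same way in the paper and are fine.
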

The proof of Theorem \ref{theorem:lower_bound} follows from Lemma \ref{lemma:moment_matching_lower_bound}-\ref{lemma:information_equality}, and the full proof can be found in Appendix \ref{appendix:proof_lower_bound}.

\begin{remark}[Fundamental gaps between full-information and bandit feedback] Consider a full feedback reward setting, {\it i.e.,} when reward samples of all state-actions are observed every time-step in each episode. In such case, a polynomial sample complexity $\poly(M)$ is possible with a folklore tournament argument \cite{devroye2001combinatorial} (by learning distributions of the entire system in time exponential in the number of samples). In contrast, we consider the bandit feedback setting, {\it i.e.,} the reward is observed only for state-action pairs that were observed within an episode, which results in sub-exponential lower bound $\exp(\Omega(\sqrt{M}))$. We believe this is an interesting sample-complexity gap between bandit and full-information feedback settings. \yon{what is the explicit gap between these results?} 
\end{remark}

\section{Conclusion}
In this work, we resolve several major open questions raised in \cite{kwon2021reinforcement}. We design the \algname algorithm and establish an $O(SA)^{O(M)}/\epsilon^2$ upper bound for learning an $\epsilon$-optimal policy of a general RMMDP. Hence, a near optimal policy of an RMMDP can be efficiently learned for $M = O(1)$. We compliment our upper bound with $(SA)^{\Omega(\sqrt{M})} / \epsilon^2$ lower bound. The result for the special case $M=2$ is further improved from \cite{kwon2021reinforcement}. Future questions can include investing the gap between the upper and lower bounds, as well as suggesting  natural assumptions that can assist in reducing the sample complexity further. Finally, designing a practical algorithm that can operate in large-scale RMMDP problems is an interesting next step to take.

\bibliographystyle{abbrv}
\bibliography{main}

\newpage 

\appendix

\begin{appendices}
\section{Proof of Main Theorem \ref{theorem:sample_complexity_upper_bound}}
\label{appendix:proof_main_lemma}

\subsection{Proof of Lemma \ref{lemma:eventwise_total_bound}}
Note that invoking Theorem \ref{theorem:distance_from_moment_theorem}, it directly follows that $\Eps_l' \subseteq \Eps_{tot}$ with setting $\delta := \delta_l$ for $\mX_d$. Lemma \ref{lemma:eventwise_total_bound} follows the conclusion of Theorem \ref{theorem:distance_from_moment_theorem} with $M_1 = M_2 = M$ and $d = \min(2M-1, H)$.

\subsection{Proof of Theorem \ref{theorem:distance_from_moment_theorem}}
We start by unfolding the expression of statistical distance,
\begin{align*}
    \sum_{\tau: x_{1:H} \in \Eps} | \PP^{(1)}_\pi (\tau) - \PP^{(2)}_\pi(\tau) | &= \sum_{x_{1:H}} \sum_{r_{1:H}} \indic{x_{1:H} \in \Eps} \nu(s_1) \Pi_{t=1}^{H-1} T(s_{t+1}|s_{t}, a_{t}) \cdot \Pi_{t=1}^H \pi(a_t | h_t) \\
    &\qquad \times \left| \sum_{m=1}^{M_1} w_m^{(1)} \Pi_{t=1}^H \mu_m^{(1)} (x_t, r_t) - \sum_{m=1}^{M_2} w_m^{(2)} \Pi_{t=1}^H \mu_m^{(2)} (x_t, r_t) \right|.
\end{align*}
With slight abuse in notation, we compactly define $T_{1:t} := v(s_1) \cdot \Pi_{t'=1}^{t-1} T(s_{t'+1} | s_{t'}, a_{t'})$ and $\pi_{1:t} := \Pi_{t'=1}^t \pi(a_{t'} | h_{t'})$. Let $h_t := ((s,a,r)_{1:{t-1}}, s_t)$ be a history before taking an action at $t^{th}$ time step. The above can be rewritten as
\begin{align*}
    \sum_{\tau: x_{1:H} \in \Eps} | \PP^{(1)}_\pi (\tau) - \PP^{(2)}_\pi(\tau) | &= \sum_{x_{1:H}} \sum_{r_{1:H}} \indic{\Eps} T_{1:H} \cdot \pi_{1:H} \cdot \left| \moment^{(1)} (x_{1:H}, r_{1:H}) - \moment^{(2)} (x_{1:H}, r_{1:H}) \right|.
\end{align*}

We adopt the brilliant idea of mathematical induction on the number of latent contexts, which is first employed in \cite{chen2019beyond} for a related problem of learning mixtures of product distributions. Specifically, the authors in \cite{chen2019beyond} have shown the statistical closeness between mixtures of product distributions from matching higher-order multilinear moments. The key to proceed with the mathematical induction is to reduce the number of contexts {\it or} the length of sequence at least by one whenever we process one time-step event. 

To apply the mathematical induction, we first need to check the base case. The base case for Theorem \ref{theorem:distance_from_moment_theorem} is when $M_1 = M_2 = 1$ or $H = 1$. Before we proceed, we define a few definitions on the probability of encountering trajectories of interest.

\paragraph{Additional Notation} Let us denote the probability of ending up with a trajectory $\tau$ conditioned on a history $h$, such that the $x_{1:H}$ part belongs to $\Eps$ as:
\begin{align}
    \label{eq:def_eps_*}
    \PP^* (\Eps| h) := \sup_{\pi \in \Pi} \PP_{\pi} (\Eps | h). 
\end{align}
By definition, we have the following inequalities, for history $h_t = ((s,a,r)_{1:t-1}, s_t)$ at time $t$, action $a_t$ and any history-dependent policy $\pi$, we have
\begin{align}
    \PP^* (\Eps| h_{H}) \ge \sum_{a_H} \indic{x_{1:H} \in \Eps} \pi(a_H| h_H), &\ \quad t = H \label{lemma:bellman_for_max_event_probability3}, \\
    \PP^* (\Eps| h_{t}) \ge \sum_{a_t} \PP^*(\Eps| h_{t}, a_t) \pi(a_t| h_t), &\ \quad t < H \label{lemma:bellman_for_max_event_probability1}, \\
    \PP^* (\Eps| h_{t}, a_t) \ge \sum_{s_{t+1}} \PP^*(\Eps| h_{t+1}) T(s_{t+1} | s_t, a_t), &\ \quad t < H \label{lemma:bellman_for_max_event_probability2}.
\end{align}
Also, since $\PP_{\Eps}^*(\cdot)$ only depends on the occurance of $x_{1:H}$, any two RMMDP models with the same transition and initial distribution have the same value for $\PP_{\Eps}^*$:
\begin{align*}
    \PP_\Eps^* (h) = \sup_{\pi \in \Pi} \PP^{(1)}_{\pi} (\Eps|h) = \sup_{\pi \in \Pi} \PP^{(2)}_{\pi} (\Eps|h).
\end{align*}
Hence when we consider the same transition model, we often omit $(1)$ and $(2)$ in superscript from $\PP^{(1)}_{\pi}(\Eps| h)$ or $\PP^{(2)}_{\pi} (\Eps| h)$. Also note that $\PP_{\Eps}^*(h_t) = \PP_{\Eps}^*(x_{1:t-1}, s_t)$ and $\PP_{\Eps}^*(h_t, a_t) = \PP_{\Eps}^*(x_{1:t})$ since $\PP_{\Eps}^*$ only depends on the state-action part of the history.

\paragraph{Base Case I:} When $M_1 = M_2 = 1$, note that
\begin{align*}
    \moment(x_{1:H}, r_{1:H}) = \Pi_{t=1}^H \mu(x_t, r_t),
\end{align*}
where we can omit the subscript $m$ for reward models $\mu$ since there is only one latent context. Then, 
\begin{align*}
    \sum_{\tau: x_{1:H} \in \Eps} &| \PP^{(1)}_{\pi} (\tau) - \PP^{(2)}_{\pi} (\tau) | = \sum_{x_{1:H}} \sum_{r_{1:H}} \indic{\Eps} \pi_{1:H} T_{1:H} \left| \Pi_{t=1}^{H} \mu^{(1)} (x_t, r_t) - \Pi_{t=1}^{H} \mu^{(2)}(x_t, r_t) \right| \\
    &\le \sum_{x_{1:H}} \sum_{r_{1:H-1}} \indic{\Eps} \pi_{1:H} T_{1:H} \left| \Pi_{t=1}^{H-1} \mu^{(1)} (x_t, r_t) - \Pi_{t=1}^{H-1} \mu^{(2)}(x_t, r_t) \right| \cdot \left( \sum_{r_{H}} \mu^{(1)}(x_H, r_H) \right) \\
    &\ \ + \sum_{x_{1:H}} \sum_{r_{1:H-1}} \indic{\Eps} \pi_{1:H} T_{1:H} \left(\Pi_{t=1}^{H-1} \mu^{(2)} (x_t, r_t)\right) \cdot \sum_{r_{H}} \left|  \mu^{(1)}(x_H, r_H) - \mu^{(2)} (x_H, r_H) \right|.
\end{align*}
By the moment-closeness condition, note that we have $\left|  \mu^{(1)}(x_{H}, r_{H}) - \mu^{(2)} (x_{H}, r_{H}) \right| \le \delta$. We also know that for any $x_{H} \in \mS \times \mA$, we have $\sum_{r_{H}} \mu^{(1)} (x_{H}, r_{H}) = 1$. On one hand, it is easy to verify that
\begin{align*}
    \sum_{x_{1:H}} &\sum_{r_{1:H-1}} \indic{\Eps} \pi_{1:H} T_{1:H} \Pi_{t=1}^{H-1} \mu^{(2)} (x_t, r_t) \\
    &= \sum_{x_{1:H-1}} \sum_{r_{1:H-1}} \pi_{1:H-1} T_{1:H-1} \Pi_{t=1}^{H-1} \mu^{(2)} (x_t, r_t) \sum_{s_H} T(s_H | x_{H-1}) \sum_{a_H} \indic{\Eps} \pi(a_H | h_H) \\
    &\le \sum_{x_{1:H-1}} \sum_{r_{1:H-1}} \pi_{1:H-1} T_{1:H-1} \Pi_{t=1}^{H-1} \mu^{(2)} (x_t, r_t) \sum_{s_H} T(s_H | x_{H-1}) \PP_{\Eps}^* (h_H) \\
    &\le \sum_{x_{1:H-1}} \sum_{r_{1:H-2}} \pi_{1:H-1} T_{1:H-1} \left( \Pi_{t=1}^{H-2} \mu^{(2)} (x_t, r_t) \right) \PP_{\Eps}^* (h_{H-1}, a_{H-1}) \sum_{r_{H-1}} \mu^{(2)} (x_{H-1}, r_{H-1}) \\
    &= \sum_{x_{1:H-1}} \sum_{r_{1:H-2}} \pi_{1:H-1} T_{1:H-1} \left(\Pi_{t=1}^{H-2} \mu^{(2)} (x_t, r_t)\right) \PP_{\Eps}^* (h_{H-1}, a_{H-1}) \le ... \le \PP_{\Eps}^* (\phi),
\end{align*}
where the inequalities come from \eqref{lemma:bellman_for_max_event_probability2} and \eqref{lemma:bellman_for_max_event_probability3}. We also have
\begin{align*}
    \sum_{x_{1:H}} &\sum_{r_{1:H-1}} \indic{\Eps} \pi_{1:H} T_{1:H} \left| \Pi_{t=1}^{H-1} \mu^{(1)} (x_t, r_t) - \Pi_{t=1}^{H-1} \mu^{(2)}(x_t, r_t) \right| \\
    &\le \sum_{x_{1:H-1}} \sum_{r_{1:H-1}} \PP_{\Eps}^* (h_{H-1}, a_{H-1}) \pi_{1:H-1} T_{1:H-1} \left| \Pi_{t=1}^{H-1} \mu^{(1)} (x_t, r_t) - \Pi_{t=1}^{H-1} \mu^{(2)}(x_t, r_t) \right|.
\end{align*}
Applying the same argument recursively, we can proceed from $t=H$ to $t=1$ and get:
\begin{align*}
    \sum_{x_{1:H}} &\sum_{r_{1:H}} \indic{\Eps} \pi_{1:H-1} T_{1:H} \left| \Pi_{t=1}^{H} \mu^{(1)} (x_t, r_t) - \Pi_{t=1}^{H} \mu^{(2)}(x_t, r_t) \right| \le \PP_{\Eps}^*(\phi) HZ\delta. 
\end{align*}

\paragraph{Base Case II:} If $H \le M_1 + M_2 - 1$, then by the moment-closeness condition, 
\begin{align*}
    \sum_{\tau: x_{1:H} \in \Eps} &| \PP^{1,\pi} (\tau) - \PP^{2,\pi} (\tau) | = \sum_{x_{1:H}} \sum_{r_{1:H}} \indic{\Eps} \pi_{1:H} T_{1:H} \left| \moment^{(1)} (x_{1:H}, r_{1:H}) - \moment^{(2)} (x_{1:H}, r_{1:H}) \right| \\
    &\le \delta \cdot \sum_{x_{1:H}} \sum_{r_{1:H}} \indic{\Eps} \pi_{1:H} T_{1:H} \\
    &= \delta \cdot \sum_{x_{1:H-1}} \sum_{r_{1:H-1}} \pi_{1:H-1} T_{1:H-1} \sum_{x_H} \indic{\Eps} \pi(a_H | h_H) T(s_H | s_{H-1}, a_{H-1}) \sum_{r_{H}} 1 \\
    &\le Z\delta \cdot \sum_{x_{1:H-1}} \sum_{r_{1:H-1}} \pi_{1:H-1} T_{1:H-1} \PP_{\Eps}^* (h_{H-1}, a_{H-1}) \le \ldots \le \PP_{\Eps}^*(\phi) \cdot Z^H \delta.
\end{align*}
where inequalities hold due to the moment matching condition and inequalities for $\PP_{\Eps}^*(\cdot)$.

\paragraph{Induction on $H$ and $M_1 + M_2$.} Suppose that an inductive assumption is true for all two RMMDP models when the total number of latent contexts is less than $M_1 + M_2$, or when the length of episode is less than $H$. Let $l(x,z)$ be the smallest probability among all latent contexts, {\it i.e.,}
\begin{align*}
    l(x,z) := \min \left( \min_{m \in [M_1]} \mu_m^{(1)} (x,z),  \min_{m \in [M_2]} \mu_m^{(2)}(x,z) \right).
\end{align*}
Note that the moment-closeness condition says that for any $1 \le t_1 < t_2 < \ldots < t_d \le H$, 
\begin{align*}
    \left| \sum_{m=1}^{M_1} w^{(1)}_m \Pi_{q=1}^d \mu^{(1)}_m (x_{t_q}, z_{q}) - \sum_{m=1}^{M_2} w^{(2)}_m \Pi_{q=1}^d \mu^{(2)}_m (x_{t_q}, z_{q}) \right| \le \delta, \qquad \forall x_{1:H} \in \Eps, \{z_q\}_{q=1}^d \in \mZ^{\bigotimes d},
\end{align*}
and similarly for degree $d-1$ moments of any parts of trajectories in $\Eps$. Let us fix the event happened at $t=1$ as $(x_1, r_1)$. Without loss of generality, suppose that the minimum for $l(x_1, r_1)$ is achieved from the first RMMDP model $\mM^{(1)}$. Define $p^{(1)} := \sum_{m=1}^{M_1} w_{m}^{(1)} \left(\mu_m^{(1)} (x_1, r_1) - l(x_1, r_1) \right)$ and $p^{(2)} := \sum_{m=1}^{M_1} w_{m}^{(2)} \left(\mu_m^{(2)} (x_1, r_1) - l(x_1, r_1) \right)$. By the moment closeness condition, $\left| p^{(1)} - p^{(2)} \right| \le \delta$. Note that in each model, we can decompose the probability of each trajectory $\tau = (x,r)_{1:H}$ as
\begin{align*}
    \pi_{1:H} T_{1:H} \cdot &\sum_{m=1}^{M_1} w^{(1)}_m \Pi_{t=1}^{H} \mu_m^{(1)} (x_t, r_t) \\
    &= \PP^{(1)}_{\pi} (x_1, r_1) l(x_1, r_1) \pi_{2:H} T_{1:H-1} \cdot \sum_{m=1}^{M_1} w^{(1)}_m \Pi_{t=2}^{H} \mu_m^{(1)} (x_t, r_t) \\
    &\ + \PP^{(1)}_{\pi} (x_1, r_1) \pi_{2:H} T_{1:H-1} \cdot \sum_{m=1}^{M_1} w^{(1)}_m (\mu_m^{(1)} (x_1, r_1) - l(x_1, r_1)) \Pi_{t=2}^{H} \mu_m^{(1)} (x_t, r_t).
\end{align*}
Let us define two auxiliary models $\mM^{(3)}$ and $\mM^{(4)}$ as the following: 
\begin{enumerate}
    \item $\mM^{(3)}$ has the transition model $T^{(3)} (\cdot) := T^{(1)} (\cdot)$, initial state distribution $\nu^{(3)} (\cdot) := T^{(1)} (\cdot| s_1, r_1)$, latent reward models $\mu_m^{(3)} (\cdot) := \mu_m^{(1)} (\cdot)$, and mixing weights $w_m^{(3)} := \frac{1}{p^{(1)}} w_{m}^{(1)}(\mu_m^{(1)} (x_1, r_1) - l(x_1, r_1))$. 
    \item $\mM^{(4)}$ is defined similarly as $\mM^{(3)}$ from $\mM^{(2)}$, except for the mixing weights $w^{(4)}_m := \frac{1}{p^{(2)}} w_{m}^{(2)}(\mu_m^{(2)} (x_1, r_1) - l(x_1, r_1))$.
\end{enumerate}
Note that $\mM^{(3)}$ has at most $M_1 - 1$ non-zero mixing weights, since $l(x_1, r_1)$ must match to one of reward probabilities $\{\mu_m^{(1)} (x_1, r_1)\}_{m=1}^{M_1}$. Hence we can leave out only non-zero mixing weights in $\{w_m^{(3)}\}_{m=1}^{M_1}$ and consider as if there are only $M_1 - 1$ latent contexts in $\mM^{(3)}$. 

Let us define $\Eps_{x_1} := \{x_{2:H} | (x_1, x_{2:H}) \in \Eps \}$, a subset of trajectories in $\Eps$ starting from $x_1$. Note that $\PP^{\pi} (x_{2:H} \in \Eps_{x_1} | x_1) \le \PP_{\Eps}^*(x_1)$. We can decompose the statistical distance of trajectories as the following:
\begin{align*}
    &\sum_{x_{1:H}} \sum_{r_{1:H}} \indic{\Eps} \pi_{1:H} T_{1:H} \left| \sum_{m=1}^{M_1} w_m^{(1)} \Pi_{t=1}^{H} \mu_m^{(1)} (x_t, r_t) - \sum_{m=1}^{M_2} w^{(2)}_m \Pi_{t=2}^H \mu_m^{(2)} (x_t, r_t) \right| \\
    &\le \sum_{x_1, r_1} \PP^{1,\pi}(x_1) l(x_1, r_1) \cdot \sum_{x_{2:H}, r_{2:H}} \indic{\Eps_{x_1}} \pi_{2:H} T_{2:H} \left| \moment^{(1)} (x_{2:H}, r_{2:H}) - \moment^{(2)} (x_{2:H}, r_{2:H}) \right| \\
    &\ + \sum_{x_1, r_1} \PP^{1,\pi}(x_1) \cdot \sum_{x_{2:H}, r_{2:H}} \indic{\Eps_{x_1}} \pi_{2:H} T_{2:H} \cdot \left| p^{(1)} \moment^{(3)} (x_{2:H}, r_{2:H}) - p^{(2)} \moment^{(4)} (x_{2:H}, r_{2:H}) \right|.
\end{align*}
We observe that in the first term, the summation starting from $t = 2$ to $H$ can be considered as a statistical difference between two RMMDP models $\mM^{(1)}$ and $\mM^{(2)}$ with a new common initial state distribution $\nu'(\cdot) := T(\cdot | s_1, a_1)$ in a shorter time-horizon of length $H-1$. A new policy $\pi'(\cdot | h)$ is $\pi(\cdot|h, r_1, x_1)$ in this setup. Note that the moment-closeness condition remains the same, and therefore by inductive assumption on $H$, we have
\begin{align*}
    \sum_{x_{2:H}, r_{2:H}} &\indic{\Eps_{x_1}} \pi_{2:H} T_{2:H} \left| \moment^{(1)} (x_{2:H}, r_{2:H}) - \moment^{(2)} (x_{2:H}, r_{2:H}) \right| \le \PP_{\Eps}^* (x_1) \cdot (4(H-1)Z)^d \delta. 
\end{align*}
For the second term, we show later that
\begin{align}
    \sum_{x_{2:H}, r_{2:H}} &\indic{\Eps_{x_1}} \pi_{2:H} T_{2:H} \cdot \left| p^{(1)}\moment^{(3)} (x_{2:H}, r_{2:H}) - p^{(2)} \moment^{(4)} (x_{2:H}, r_{2:H}) \right| \nonumber \\
    &\le \PP_{\Eps}^*(x_1) \cdot 4 \cdot (4(H-1)Z)^{d-1} \delta. 
    \label{eq:inequality_for_auxiliary_models}
\end{align}
We prove \eqref{eq:inequality_for_auxiliary_models} in Section \ref{appendix:inequality_for_auxiliary_models}. Assuming this, we have
\begin{align*}
    &\sum_{x_{1:H}} \sum_{r_{1:H}} \indic{\Eps} \pi_{1:H} T_{1:H} \left| \sum_{m=1}^{M_1} w_m^{(1)} \Pi_{t=1}^{H} \mu_m^{(1)} (x_t, r_t) - \sum_{m=1}^{M_2} w^{(2)}_m \Pi_{t=2}^H \mu_m^{(2)} (x_t, r_t) \right| \\ 
    &\le \sum_{x_1} \PP^{(1)}_{\pi}(x_1) \PP_{\Eps}^*(x_1) \cdot \sum_{r_1} (4(H-1)Z)^d \delta \cdot l(x_1, r_1) \\
    &\ + \sum_{x_1} \PP^{(1)}_{\pi}(x_1) \PP_{\Eps}^*(x_1) \cdot \sum_{r_1} 4 \cdot (4(H-1)Z)^{d-1} \delta.
\end{align*}
Note that $\sum_{r_1} l(x_1, r_1) \le \sum_{r_1} \mu_m^{(1)} (x_1, r_1) \le 1$ for any fixed $m \in [M]$, and thus the above can be further bounded by
\begin{align*}
    \delta (4(H-1))^{d-1} Z^d \cdot \left(\sum_{x_1} \PP^{(1)}_{\pi} (x_1) \PP^{*}_{\Eps} (x_1) \right) \cdot (4(H-1) + 4) \le \PP_{\Eps}^* (\phi) \cdot (4HZ)^d \delta,
\end{align*}
which proves the result for $M_1, M_2, H$. Applying the same argument inductively for all increasing $M_1, M_2$ and $H$, the above also holds for $M_1 = M_2 = M$ and $d = \min(2M-1, H)$.

\subsubsection{Proof of Equation \eqref{eq:inequality_for_auxiliary_models}} 
\label{appendix:inequality_for_auxiliary_models}
We first separate a subtle issue of mismatch between $p^{(1)}$ and $p^{(2)}$:
\begin{align*}
    \sum_{x_{2:H}, r_{2:H}} &\indic{\Eps_{x_1}} \pi_{2:H} T_{2:H} \cdot \left| p^{(1)}\moment^{(3)} (x_{2:H}, r_{2:H}) - p^{(2)} \moment^{(4)} (x_{2:H}, r_{2:H}) \right| \\
    &\le p^{(1)} \cdot \sum_{x_{2:H}, r_{2:H}} \indic{\Eps_{x_1}} \pi_{2:H} T_{2:H} \left| \moment^{(3)} (x_{2:H}, r_{2:H}) - \moment^{(4)} (x_{2:H}, r_{2:H}) \right| \\
    &\ + |p^{(1)} - p^{(2)}| \cdot \sum_{x_{2:H}, r_{2:H}} \indic{\Eps_{x_1}} \pi_{2:H} T_{2:H} \moment^{(4)} (x_{2:H}, r_{2:H}). 
\end{align*}
Since $|p^{(1)} - p^{(2)}| \le \delta$, we have
\begin{align*}
    |p^{(1)} - p^{(2)}| \cdot \sum_{x_{2:H}, r_{2:H}} \indic{\Eps_{x_1}} \pi_{2:H} T_{2:H} \moment^{(4)} (x_{2:H}, r_{2:H}) \le \PP_{\Eps}^*(x_1) \delta.
\end{align*}
For the remaining term, we examine the moment-closeness condition for the auxiliary model $\mM^{(3)}$ and $\mM^{(4)}$. If $M_1 = 1$, then we must have $p^{(1)} = 0$ and thus the remaining term is 0. Hence we focus on the case that $M_1 > 1$. We can consider two cases: if $p^{(1)} \le 4\delta$, then instead of using the moment-closeness condition, we apply
\begin{align*}
    p^{(1)} \cdot &\sum_{x_{2:H}, r_{2:H}} \indic{\Eps_{x_1}} \pi_{2:H} T_{2:H} \left| \moment^{(3)} (x_{2:H}, r_{2:H}) - \moment^{(4)} (x_{2:H}, r_{2:H}) \right| \\
    &\le 4\delta \cdot \sum_{x_{2:H}, r_{2:H}} \indic{\Eps_{x_1}} \pi_{2:H} T_{2:H} \left( \moment^{(3)} (x_{2:H}, r_{2:H}) + \moment^{(4)} (x_{2:H}, r_{2:H}) \right) \\
    &\le \PP_{\Eps}^* (x_1) \cdot 8\delta,
\end{align*}
and we are done as long as $(4(H-1)Z)^{d-1} > 8$. Otherwise, let us compare the moments of degree up to $d-1$ in $\mM^{(3)}$ and $\mM^{(4)}$. Consider any moments consisting of $q \le d-1$ pairs of state-actions in any trajectory $x_{2:H} \in \Eps_{x_1}$ at non-overlapping time-steps $2 \le t_1 < \ldots < t_{q} \le H$. For any $\bm{z} = z_{1:q} \in \mZ^{\bigotimes q}$ with $\bm{x} = (x_{t_{q'}})_{q'=1}^q$, we can check that
\begin{align*}
    \Big| \moment^{(3)} (\bm{x}, \bm{z}) - \moment^{(4)} (\bm{x}, \bm{z}) \Big| = \left | \tssum_{m=1}^{M_1-1} w_m^{(3)} \Pi_{q'=1}^q \mu_m^{(3)} (x_{t_{q'}}, z_{q'}) - \tssum_{m=1}^{M_2} w_m^{(4)} \Pi_{q'=1}^q \mu_m^{(4)} (x_{t_{q'}}, z_{q'}) \right|.
\end{align*}
Recall that
\begin{align*}
    \sum_{m=1}^{M_1-1} w_m^{(3)} \Pi_{q'=1}^q \mu_m^{(3)} (x_{t_{q'}}, z_{q'}) = \frac{1}{p^{(1)}} \sum_{m=1}^{M_1} w_m^{(1)} (\mu_m^{(1)}(x_1, r_1) - l(x_1, r_1)) \Pi_{q'=1}^q \mu_m^{(1)} (x_{t_{q'}}, z_{q'}),
\end{align*}
and similarly for the moments in $\mM^{(4)}$. Hence we can decompose the moment difference as the following:
\begin{align*}
    \Big| &\moment^{(3)} (\bm{x}, \bm{z}) - \moment^{(4)} (\bm{x}, \bm{z}) \Big| \\
    &\le \frac{1}{p^{(1)}} \left | \sum_{m=1}^{M_1} w_m^{(1)} \mu_m^{(1)}(x_1, r_1) \Pi_{q'=1}^q \mu_m^{(1)} (x_{t_{q'}}, z_{q'}) - \sum_{m=1}^{M_2} w_m^{(2)} \mu_m^{(2)}(x_1, r_1) \Pi_{q'=1}^{q} \mu_m^{(2)} (x_{t_{q'}}, z_{q'}) \right| \\
    &+ \frac{l(x_1, r_1)}{p^{(1)}} \left | \sum_{m=1}^{M_1} w_m^{(1)} \Pi_{q'=1}^q \mu_m^{(1)} (x_{t_{q'}}, z_{q'}) - \sum_{m=1}^{M_2} w_m^{(2)} \Pi_{q'=1}^{q} \mu_m^{(2)} (x_{t_{q'}}, z_{q'}) \right| \\
    &+ \left| \frac{1}{p^{(1)}} - \frac{1}{p^{(2)}} \right| \cdot \left| \sum_{m=1}^{M_1} w_m^{(2)} (\mu_m^{(2)}(x_1, r_1) - l(x_1, r_1)) \Pi_{q'=1}^q \mu_m^{(2)} (x_{t_{q'}}, z_{q'}) \right|. 
\end{align*}
By the moment-closeness condition for $\mM^{(1)}$ and $\mM^{(2)}$ up to degree $d$, the first two terms can be easily bounded by $\frac{2\delta}{p^{(1)}}$. For the last term, note that 
\begin{align*}
    \sum_{m=1}^{M_1} w_m^{(2)} (\mu_m^{(2)}(x_1, r_1) - l(x_1, r_1)) \Pi_{q'=1}^q \mu_m^{(2)} (x_{t_{q'}}, z_{q'}) &\le \sum_{m=1}^{M_1} w_m^{(2)} (\mu_m^{(2)}(x_1, r_1) - l(x_1, r_1)) \le p^{(2)},
\end{align*}
and also $|p^{(1)} - p^{(2)}| \le \delta$, and thus the last term is bounded by $\delta / p^{(1)}$. Therefore, we can conclude that $\mM^{(3)}$ and $\mM^{(4)}$ satisfies the moment-closeness condition (regarding trajectories in $\Eps_{x_1}$) with $\delta' = 3\delta / p^{(1)}$. Applying the inductive assumption for $M_1-1, M_2$ and $H-1$, we have
\begin{align*}
    p^{(1)} \cdot &\sum_{x_{2:H}, r_{2:H}} \indic{\Eps_{x_1}} \pi_{2:H} T_{1:H-1} \left| \moment^{(3)} (x_{2:H}, r_{2:H}) - \moment^{(4)} (x_{2:H}, r_{2:H}) \right| \\
    &\le \PP_{\Eps}^* (x_1) \cdot 3\cdot (4(H-1)Z)^{d-1} \cdot \delta.
\end{align*}
Finally, we can apply the results to
\begin{align*}
    \sum_{x_{2:H}, r_{2:H}} &\indic{\Eps_{x_1}} \pi_{2:H} T_{1:H-1} \cdot p^{(1)} \left| p^{(1)}\moment^{(3)} (x_{2:H}, r_{2:H}) - p^{(2)} \moment^{(4)} (x_{2:H}, r_{2:H}) \right| \\
    &\le \PP_{\Eps}^*(x_1) (\delta + 3 \cdot (4(H-1)Z)^{d-1} \cdot \delta) \le \PP_{\Eps}^*(x_1) \cdot 4\cdot (4(H-1)Z)^{d-1} \cdot \delta.
\end{align*}
This proves \eqref{eq:inequality_for_auxiliary_models}, and thus completes Lemma \ref{lemma:eventwise_total_bound}.

\subsection{Proof of Theorem \ref{theorem:sample_complexity_upper_bound}}
\label{appendix:proof:sample_complexity_upper_bound}

We first note that
\begin{align*}
    (\mS \times \mA \times \mZ)^{\bigotimes H} = \bigcup_{l=0}^{L+1} \Eps_l',
\end{align*}
Thus, we can split the sum over all trajectories into $L+2$ levels, to bound the statistical distance between trajectory distributions (reiterating equation \eqref{eq:bound_total_variation_distance_overall}):
\begin{align*}
    \|(\PP^{(1)}_\pi - \PP^{(2)}_\pi) (\tau)\|_1 = \sum_{l=0}^{L+1} \sum_{\tau:x_{1:H} \in \Eps_l'} |\PP^{(1)}_{\pi}(\tau) - \PP^{(2)}_{\pi}(\tau) | \le \sum_{l = 0}^{L+1} \sup_{\pi \in \Pi} \PP^{(1)}_\pi( x_{1:H} \in \Eps_l' ) \cdot O(\epsilon_l).
\end{align*}
Note that this holds for all history-dependent policies. Then we apply the results from Lemma \ref{lemma:eventwise_total_bound} and \ref{lemma:reward_free_guarantee}, which yields
\begin{align*}
    \|(\PP^{(1)}_\pi - \PP^{(2)}_\pi) (\tau)\|_1 &\le \sum_{l = 0}^{L+1} \sup_{\pi \in \Pi} \PP^{(1)}_\pi( x_{1:H} \in \Eps_l' ) \cdot O(\epsilon_l) \\
    &\le \sum_{l = 0}^{L+1} H^d \epsilon_{\rm pe} \sqrt{n_l/\iota_c} \cdot (4HZ)^{d} \cdot O\left(\sqrt{\iota_c/n_l}\right) \\
    &\le O(L) H^d \epsilon_{\rm pe} (4HZ)^d.
\end{align*}
Plugging $\epsilon_{\rm pe} = \epsilon / (HL(4H^2Z)^d)$, we get $\|(\PP^{(1)}_\pi - \PP^{(2)}_\pi) (\tau)\|_1 \le O(\epsilon/H)$, which in turn gives 
\begin{align*}
    |V_{\mM^{(1)}}^{\pi} - V_{\mM^{(2)}}^{\pi}| \le H \cdot \|(\PP^{(1)}_{\pi} - \PP^{(2)}_{\pi}) ((x,r)_{1:H})\|_1 \le O(\epsilon),
\end{align*}
as desired.

\section{Proofs for Additional Results in Section \ref{section:analysis}} 
\subsection{Proof of Lemma \ref{lemma:third_moment_matching_m2}}
This equation directly comes from unfolding the expression:
\begin{align*}
    -2 &\cdot \frac{1}{8} (\mu_1(x_1,z_1) + \mu_2(x_1,z_1)) (\mu_1(x_2,z_2) + \mu_2(x_2,z_2)) (\mu_1(x_3,z_3) + \mu_2(x_3,z_3)) \\
    &+ \frac{1}{4} (\mu_1(x_1,z_1) + \mu_2(x_1,z_1)) (\mu_1(x_2,z_2)\mu_1(x_3,z_3) + \mu_2(x_2,z_2) \mu_2(x_3,z_3)) + \ldots \\
    &= \frac{1}{2} \left( \mu_1(x_1,z_1) \mu_1(x_2,z_2) \mu_1(x_3, z_3) + \mu_2(x_1,z_1) \mu_2(x_2,z_2) \mu_2(x_3, z_3) \right),
\end{align*}
after canceling out cross-context multiplied terms.

\subsection{Proof of Corollary \ref{corollary:sample_upper_bound_m2}}
With Lemma \ref{lemma:third_moment_matching_m2}, we can directly verify that all trajectories in $\Eps_l$ defined with $d=2$ satisfies the moment closeness condition \eqref{eq:delta_def} up to degree $3$ with $\delta_l = O \left(\sqrt{\iota_c / n_l} \right)$. That is, we only need to explore up to second-order moments of the system, but the guarantee on the moment-closeness can be given up to the third-order degree. Thus, we can invoke Lemma \ref{lemma:eventwise_total_bound} with $d = 3$, and combine that with Lemma \ref{lemma:reward_free_guarantee} with $d = 2$, which gives
\begin{align*}
    \|(\PP^{(1)}_\pi - \PP^{(2)}_\pi) (\tau)\|_1 &\le \sum_{l = 0}^{L+1} \sup_{\pi \in \Pi} \PP^{(1)}_\pi( x_{1:H} \in \Eps_l' ) \cdot O(\epsilon_l) \\
    &\le \sum_{l = 0}^{L+1} H^2 \epsilon_{\rm pe} \sqrt{n_l/\iota_c} \cdot (4HZ)^{3} \cdot O\left(\sqrt{\iota_c/n_l}\right) \\
    &\le O(L) H^2 \epsilon_{\rm pe} (4HZ)^3,
\end{align*}
where $\epsilon_{\rm pe} = \epsilon / (H^6 L (4Z)^3)$. Plugging this to the first part of Lemma \ref{lemma:reward_free_guarantee}, after $K$ exploration episodes where
\begin{align*}
    K \ge C \cdot \frac{(SA)^2}{\epsilon_{\rm pe}^2} \log(K/\eta) = C \cdot \frac{(SA)^2}{\epsilon^2} \cdot \poly(H, Z) \cdot \poly \log(K/\eta),  
\end{align*}
we obtain an $O(\epsilon)$-optimal policy.

\subsection{Proof of Lemma \ref{lemma:eventwise_total_bound_discrete}}
\label{appendix:proof_main_lemma_discrete_rewards}
The proofs here are largely adapted from \cite{chen2019beyond} (see their Lemma 3.1 and 3.8 for the proof of distributional identifiability from low-degree moments). We first define some notation.

We often use a single letter $y$ to denote a pair of state-action and reward $(x,z)$, and thus we use $\mu_m(y) = \mu_m(x,z)$. $\mY_q$ be a $q$ power set of state-action-rewards:
\begin{align*}
    \mY_q := \{\bm{y} = (y_1, \ldots, y_q)| (y_1, \ldots, y_q) \in (\mS \times \mA \times \mZ)^{\bigotimes q}\}.
\end{align*}
Let $\mY_0 = \{\emptyset\}$ be a null sequence set, and let $\mY := \cup_{q=1}^H \mY_q$ be a set of at most length $H$ sequence (with possible repetitions) of state-action-rewards. Then we define a latent moment matrix $\mathbf{M} \in \mathbb{R}^{|\mY| \times M}$ whose rows are indexed by $\bm{y} \in \mY$ such that 
\begin{align*}
    \mathbf{M}(\bm{y}, m) := \Pi_{q=1}^{|\bm{y}|} \mu_m(y_q).
\end{align*}
By convention, $\mathbf{M}(\emptyset, m) = 1$. For any $Y \subseteq \mY$, let $\mathbf{M}_{Y}$ be a row restriction of $\mathbf{M}$ to $Y$. We also denote a single row vector $\mathbf{M}_{\bm{y}}$ indexed by $\bm{y}$. We denote a length of sequence $\bm{y}$ as $|\bm{y}|$. For any $J \subseteq [|\bm{y}|]$, $\bm{y}_J$ is a subsequence of $\bm{y}$ restricted to $J$. If $J = \emptyset$, then $\bm{y}_{J}$ means $\emptyset$.

Now for two RMMDP models $\mM^{(1)}$ and $\mM{(2)}$ with the same transition and initial state probabilities, let $\mathbf{M}^{(1)}$ and $\mathbf{M}^{(2)}$ be latent moment matrices respectively, and let $\overline{\mathbf{M}} \in \mathbb{R}^{|\mY| \times 2M}$ be a column-concatenation of two matrices $\overline{\mathbf{M}} = \left[\mathbf{M}^{(1)} | \mathbf{M}^{(2)} \right]$. We first show that for any row of $\overline{\mathbf{M}}$ corresponding to a sequence $\bm{y}$ of length larger than $d = O(\log M)$, $\overline{\mathbf{M}}_{\bm{y}}$ is in the row span of $\overline{\mathbf{M}}_{Y(\bm{y})}$ where $Y(\bm{y})$ is a set of at most $d$ pairs in $\bm{y}$
\begin{align*}
    Y(\bm{y}) := \{ \bm{y}_{J} | \forall J \subseteq \left[ |\bm{y}| \right]: |J| \le d \}.
\end{align*}
Formally, we show the following lemma:
\begin{lemma}
    \label{lemma:exact_idneitifability_discrete}
    For any $\bm{y} \in \mY$ with $|\bm{y}| > d = \lceil 2P \log M \rceil$, the rows of $\overline{M}_{Y(\bm{y})}$ span all rows in $\overline{M}_{\bm{y}}$. 
\end{lemma}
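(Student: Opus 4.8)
The plan is to show that for a sequence $\bm{y}$ of length greater than $d = \lceil 2P\log M\rceil$, the full row $\overline{\mathbf{M}}_{\bm{y}}$ — which records, for each of the $2M$ latent contexts (from both models), the product $\Pi_{q=1}^{|\bm{y}|}\mu_m(y_q)$ — lies in the linear span of the rows $\overline{\mathbf{M}}_{\bm{y}_J}$ indexed by subsequences $\bm{y}_J$ with $|J|\le d$. First I would reduce to a purely combinatorial/algebraic statement about polynomials: fix the $2M$-dimensional vectors $v_m := (\mu_m(y_1),\ldots,\mu_m(y_{|\bm{y}|}))$ (one coordinate per position of $\bm{y}$, $m$ ranging over the $2M$ columns of $\overline{\mathbf{M}}$). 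The row $\overline{\mathbf{M}}_{\bm{y}_J}$ is the vector $\big(\Pi_{i\in J} (v_m)_i\big)_m$, i.e.\ evaluating the monomial $\prod_{i\in J} t_i$ at $t = v_m$. So the claim is: the monomial $\prod_{i=1}^{|\bm{y}|} t_i$, when restricted to the finite point set $\{v_1,\ldots,v_{2M}\}$, agrees with some linear combination of the monomials of degree $\le d$ on that same point set. Equivalently, $\prod_i t_i - (\text{low-degree poly})$ vanishes on all $2M$ points.

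The key observation — this is exactly the trick from \cite{chen2019beyond}, adapted from binary product distributions to the integral-probability set $\mP = \{0,1/P,\ldots,1\}$ — is that each coordinate $(v_m)_i = \mu_m(y_i)$ takes a value in $\mP$, a set of size $P+1$. So each coordinate satisfies the degree-$(P+1)$ univariate relation $\prod_{a\in\mP}(t_i - a) = 0$, which lets us replace any power $t_i^{k}$ with $k \ge P+1$ by a polynomial of degree $\le P$ in $t_i$, \emph{on the point set}. Now count: the space of "reduced" polynomials on $\{v_1,\ldots,v_{2M}\}$ — multilinear-up-to-degree-$P$-per-coordinate — is spanned by $\prod_i t_i^{e_i}$ with $0\le e_i\le P$, but restricted to only $2M$ points this function space has dimension at most $2M$. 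The crucial step is a \emph{counting / pigeonhole} argument on which low-degree monomials suffice: following \cite{chen2019beyond}, one shows that among subsequences of $\bm{y}$ of length $\le d$, the corresponding reduced monomials already span the full ($\le 2M$-dimensional) restricted function space once $d$ is large enough that $\binom{d}{\le P}$-type quantities exceed $2M$; taking $d = \lceil 2P\log M\rceil$ makes the number of available low-degree reduced monomials exceed $2M$ with room to spare, so they span, and in particular the (reduced form of the) full product $\prod_i t_i$ is in their span. Translating back, $\overline{\mathbf{M}}_{\bm{y}} \in \mathrm{rowspan}\big(\overline{\mathbf{M}}_{Y(\bm{y})}\big)$.

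I expect the main obstacle to be making the counting argument of the previous paragraph precise — in particular, showing that it is specifically the subsequences of $\bm{y}$ (rather than arbitrary monomials in the $t_i$) whose reduced forms span, and pinning down that $d=\lceil 2P\log M\rceil$ is the right threshold. The cleanest route is probably an inductive/greedy argument mirroring Lemma~3.1 and Lemma~3.8 of \cite{chen2019beyond}: start from $Y(\bm{y})$ and repeatedly, as long as some row of $\overline{\mathbf{M}}_{\bm{y}}$-type sequences is \emph{not} in the current span, extract a new linearly independent reduced monomial supported on a short subsequence; since the total dimension is $\le 2M$ this process terminates after $\le 2M$ steps, and a careful bookkeeping shows the subsequences produced never need length more than $d$. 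One subtlety worth flagging: the reduction $t_i^{P+1}\rightsquigarrow(\text{lower degree})$ must be tracked so that it only ever combines positions \emph{already present in $\bm{y}$}, so that all monomials stay of the form "product over a sub-multiset of positions of $\bm{y}$", which is what lets us identify them with rows $\overline{\mathbf{M}}_{\bm{y}_J}$. Once this is set up, the conclusion for general observation support $\mZ$ (not just binary) is immediate, since $\mZ$ enters only through the size of the ambient coordinate set and never through the bound, which depends on $P$ and $M$ only.
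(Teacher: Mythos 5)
Your setup is right: identifying the row $\overline{\mathbf{M}}_{\bm{y}_J}$ with the evaluation of the multilinear monomial $\prod_{i\in J}t_i$ at the $2M$ points $v_m=(\mu_m^{(b)}(y_1),\ldots,\mu_m^{(b)}(y_n))$, and reducing the lemma to showing that $\prod_{i=1}^n t_i$ agrees on this point set with a combination of monomials of degree at most $d$. But neither of the mechanisms you propose to finish actually works. The relation $\prod_{a\in\mP}(t_i-a)=0$ only reduces \emph{powers} $t_i^k$ with $k\ge P+1$ of a single coordinate, which is vacuous here: every monomial in play is already multilinear, and the quantity that must be reduced is the number of distinct positions, not the exponent of any one position. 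And the dimension count (``more than $2M$ low-degree monomials, hence they span'') is not a valid inference --- an overcomplete family in a $\le 2M$-dimensional function space is guaranteed to be linearly \emph{dependent}, not to span; a priori the low-degree monomials could span a proper subspace not containing the restriction of the full product. Your fallback (a greedy extraction terminating in $\le 2M$ steps, with ``careful bookkeeping'' showing the extracted subsequences never exceed length $d$) is precisely the statement to be proved, so as written the argument has no engine.

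The engine in the paper's proof (following Lemma 3.1 of \cite{chen2019beyond}) is a vanishing-polynomial construction with a pigeonhole on coordinate \emph{values}, not on the number of monomials: one builds $f_t(x)=\prod_{j=1}^t(x_j-\lambda_j)$ by choosing each $\lambda_j\in\mP$ so that at least a $1/(P+1)$ fraction of the columns still surviving (those on which $f_{j-1}$ is nonzero) have $j$-th coordinate equal to $\lambda_j$; this is possible because each coordinate takes one of only $P+1$ values. The surviving set thus shrinks geometrically from $2M$, so after $t=O(P\log M)\le d$ steps it is empty, and $f_t(x)\cdot\prod_{j>t}x_j$ vanishes at all $2M$ points while containing the monomial $\prod_{j=1}^n x_j$ with coefficient $1$. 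Expanding this identity expresses $\overline{\mathbf{M}}_{\bm{y}}$ as a combination of rows indexed by \emph{strictly shorter} subsequences of $\bm{y}$ (of length between $n-t$ and $n-1$, not yet $\le d$), and the proof closes with an induction on $|\bm{y}|$ down to the base case $|\bm{y}|=d+1$. Both the value-pigeonhole and the induction on sequence length are absent from your sketch, and they are the substance of the proof.
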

The proof of Lemma \ref{lemma:exact_idneitifability_discrete} is deferred to Section \ref{appendix:exact_idneitifability_discrete}. The implication of Lemma \ref{lemma:exact_idneitifability_discrete} is crucial: it implies that if we can match up to all degree $d = O(\log M)$ moments exactly, then we can predict probabilities of arbitrary length of trajectories exactly. This means two RMMDP models are identical in terms of trajectory distributions. Of course, we always have a sampling noise in our estimates, and the main challenge is to understand how much the overall statistical error is amplified. 

We first observe that the statistical distance between two RMMDP models for any history-dependent policy $\pi$ can be represented as the following:
\begin{align*}
    \sum_{\tau: x_{1:H} \in \Eps_l'} | \PP^{(1)}_{\pi} (\tau) - \PP^{(2)}_{\pi} (\tau) | = \left\| \mathbf{D} \overline{\mathbf{M}}_{\mY_H} w \right\|_1,
\end{align*}
where $\mathbf{D}$ is a diagonal matrix whose diagonal element is defined as:
\begin{align*}
    \mathbf{D} \left(\bm{y} = (s,a,r)_{1:H} \right) =  \indic{\Eps_l'} \pi_{1:H} T_{1:H},
\end{align*}
and $w \in \mathbb{R}^{2M}$ is a vector concatenating $w^{(1)}$ and $-w^{(2)}$ such that $w := \left[w^{(1)} | -w^{(2)} \right]^\top$. Let $\mathbf{N}$ be a row restriction of $\overline{\mathbf{M}}$ to pairs of degree $\le d$ that are explored in $\Eps_l$, {\it i.e.,} $\mathbf{N} := \overline{\mathbf{M}}_{\overline{\mY}_l}$ where
\begin{align*}
    \overline{\mY}_l := \cup_{\tau \in \Eps_l} Y(\tau).
\end{align*}
Here we consider $\tau$ in the form of $\bm{y}$ of length $H$. By the moment closeness condition, note that 
\begin{align*}
    \|\mathbf{N} w\|_{\infty} \le \delta_l. 
\end{align*}

The remaining steps follow the proof of Lemma 3.8 in \cite{chen2019beyond}, and again we rewrite the major procedures for the completeness of the paper. We show by contradiction that if $\left\| \mathbf{D} \overline{\mathbf{M}}_{\mY_H} w \right\|_1 > \PP_{\Eps_l'}^*(\phi) \cdot \epsilon$, then it must hold that $\|\mathbf{N} w\|_{\infty} > M^{-O(M)} \cdot \epsilon$. This concludes Lemma \ref{lemma:eventwise_total_bound_discrete} by plugging $\epsilon = M^{O(M)} \cdot \delta_l$. 

To show this, let $r := \mathrm{rank} (\mathbf{N})$ be the rank of $\mathbf{N}$, and let $\mathbf{N}_r$ be the column restriction of $\mathbf{N}$ to $r$ linearly independent columns. Since the columns of $\mathbf{N}_r$ span all columns of $\mathbf{N}$, we can find $w_r := w + v$ such that $v \in \mathrm{ker}(\mathbf{N})$ and $w_r$ is only supported on the $r$ coordinates corresponding to columns selected by $\mathbf{N}_r$. Since $\mathbf{N}_r$ is full rank, $\sigma_{min}^{\infty} (\mathbf{N}_r) > 0$ where $\sigma_{min}^{\infty} (A) := \min_{u} \|Au\|_{\infty}/\|u\|_{\infty}$ for a matrix $A$. If we can give proper lower bounds for $\sigma_{min}^{\infty} (\mathbf{N}_r)$ and $\|w_r\|_{\infty}$, then we can bound $\|\mathbf{N} w\|_{\infty} \ge \sigma_{min}^{\infty}(\mathbf{N}_r) \cdot \|w_r\|_{\infty}$. Now this follows from the two following lemmas.
\begin{lemma}
    \label{lemma:min_singular_value}
    If a matrix $A \in \mathbb{R}^{n \times k}$ is a full column rank with $n \gg k$, and if all elements of $A$ are integral multiples of some $p > 0$, then $\sigma_{min}^{\infty} (A) \ge p^k \cdot k^{-O(k)}$. 
\end{lemma}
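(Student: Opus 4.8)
\textbf{Proof proposal for Lemma~\ref{lemma:min_singular_value}.}
The plan is to reduce to a single invertible $k\times k$ submatrix and then control its inverse via Cramer's rule. First I would use that $A$ has full column rank $k$ to select $k$ rows of $A$ forming a nonsingular square matrix $A' \in \mathbb{R}^{k\times k}$. Passing to a row-submatrix can only decrease the quantity $\sigma_{min}^{\infty}$: for every $u$, $\|A'u\|_{\infty} \le \|Au\|_{\infty}$, hence $\sigma_{min}^{\infty}(A) \ge \sigma_{min}^{\infty}(A')$. For a square invertible $A'$, the substitution $v = A'u$ is a bijection on nonzero vectors and yields the identity $\sigma_{min}^{\infty}(A') = 1/\|(A')^{-1}\|_{\infty\to\infty}$, where $\|\cdot\|_{\infty\to\infty}$ denotes the $\ell_\infty$-to-$\ell_\infty$ operator norm, i.e.\ the maximum absolute row sum. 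So it suffices to upper bound $\|(A')^{-1}\|_{\infty\to\infty}$.

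Next I would invoke Cramer's rule $(A')^{-1} = \mathrm{adj}(A')/\det(A')$. Writing $A' = p B'$ with $B'$ an integer matrix (possible by the ``multiple-of-$p$'' hypothesis), we get $\det A' = p^{k}\det B'$ with $\det B'$ a nonzero integer, so $|\det A'| \ge p^{k}$. For the numerator, each entry of $\mathrm{adj}(A')$ is (up to sign) a $(k-1)\times(k-1)$ minor of $A'$; since the entries of $A$ are probabilities and hence bounded by $1$ in absolute value, Hadamard's inequality bounds each such minor by $(k-1)^{(k-1)/2} \le k^{O(k)}$, and therefore each absolute row sum of $\mathrm{adj}(A')$ is at most $k\cdot k^{O(k)} = k^{O(k)}$. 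Combining the two estimates, $\|(A')^{-1}\|_{\infty\to\infty} \le k^{O(k)}/p^{k}$, so $\sigma_{min}^{\infty}(A) \ge \sigma_{min}^{\infty}(A') \ge p^{k}\, k^{-O(k)}$, which is the claim.

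The only genuinely delicate point — and the step I would be most careful with — is that the bound is \emph{false} without an a priori magnitude bound on the entries of $A$: a $2\times 2$ upper-triangular example with one large off-diagonal entry already makes $\sigma_{min}^{\infty}$ arbitrarily small while keeping all entries integer multiples of $p$. So the proof must (and, in the intended application where $A$ is a row-and-column restriction of the latent-moment matrix $\overline{\mathbf{M}}$, legitimately can) use that all entries of $A$ lie in $[0,1]$. Everything else — the row-monotonicity of $\sigma_{min}^{\infty}$, the $v = A'u$ change of variables identifying $\sigma_{min}^{\infty}$ with the reciprocal of an operator norm, and the adjugate/Hadamard estimates — is routine linear algebra.
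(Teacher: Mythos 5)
Your proof is correct and follows essentially the same route as the paper's: restrict to $k$ linearly independent rows, use row-monotonicity of $\sigma_{min}^{\infty}$, identify $\sigma_{min}^{\infty}$ of the square submatrix with the reciprocal of an $\ell_\infty$ operator norm, and bound the inverse via Cramer's rule with $|\det| \ge p^k$ and minors at most $k^{O(k)}$ (the paper uses the cruder $(k-1)!$ bound in place of your Hadamard estimate, and the max-entry norm at the cost of a harmless factor of $k$). Your caveat about needing entries bounded by $1$ is well taken — the paper's proof also silently uses this when bounding the $(k-1)\times(k-1)$ minors by $(k-1)!$, even though the lemma statement omits the hypothesis.
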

Note that all entries of $\mathbf{N}_r$ are integral multiples of $1/P^{d}$, and thus we have that 
\begin{align*}
    \sigma_{min}^\infty (\mathbf{N}_r) \ge P^{-d r} r^{-O(r)} \ge M^{-O(M P \log P)}.
\end{align*} 
Since $P = O(1)$, this is bounded below by $M^{-O(M)}$. The proof of Lemma \ref{lemma:min_singular_value} is given in Section \ref{appendix:min_singular_value}. On the other hand, we can show that $\|w_r\|_{\infty} > \epsilon / M$.
\begin{lemma}
    \label{lemma:bound_on_weights}
    If $\left\| \mathbf{D} \overline{\mathbf{M}}_{\mY_H} w \right\|_1 > \PP_{\Eps_l'}^*(\phi) \cdot \epsilon$, then for any $v \in \mathrm{ker}(\mathbf{N})$, $\|w + v\|_{\infty} > \epsilon / (2M)$. 
\end{lemma}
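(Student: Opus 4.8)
The plan is to isolate the role of the kernel perturbation $v$ using the exact--identifiability statement of Lemma~\ref{lemma:exact_idneitifability_discrete}, and then to finish with a crude $\ell_1$--$\ell_\infty$ estimate for the operator $\mathbf{D}\,\overline{\mathbf{M}}_{\mY_H}$. Throughout we may assume $d<H$, the case $d=H$ being trivial since then every length-$H$ trajectory with state-action part in $\Eps_l'$ already indexes a row of $\mathbf{N}$, so $\overline{\mathbf{M}}_\tau v=0$ for $v\in\mathrm{ker}(\mathbf{N})$ directly and Step~2 below applies verbatim.

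\emph{Step 1: the perturbation $v$ is invisible at level $l$.} Fix a length-$H$ trajectory $\tau=(s,a,r)_{1:H}$ whose state-action part lies in $\Eps_l'$. Since $\Eps_l'\subseteq\Eps_l$, every subsequence of $\tau$ of length at most $d$ belongs to $\overline{\mY}_l=\bigcup_{\tau'\in\Eps_l}Y(\tau')$, so the rows of $\overline{\mathbf{M}}$ indexed by $Y(\tau)$ are among the rows of $\mathbf{N}=\overline{\mathbf{M}}_{\overline{\mY}_l}$. As $|\tau|=H>d$, Lemma~\ref{lemma:exact_idneitifability_discrete} shows that the row $\overline{\mathbf{M}}_{\tau}$ lies in the span of the rows $\overline{\mathbf{M}}_{Y(\tau)}$, hence of $\mathbf{N}$; therefore $\overline{\mathbf{M}}_{\tau}v=0$ for every $v\in\mathrm{ker}(\mathbf{N})$. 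Since $\mathbf{D}$ is supported exactly on trajectories with state-action part in $\Eps_l'$, this gives $\mathbf{D}\,\overline{\mathbf{M}}_{\mY_H}v=0$, so $\mathbf{D}\,\overline{\mathbf{M}}_{\mY_H}(w+v)=\mathbf{D}\,\overline{\mathbf{M}}_{\mY_H}w$ and the hypothesis reads $\|\mathbf{D}\,\overline{\mathbf{M}}_{\mY_H}(w+v)\|_1>\PP_{\Eps_l'}^*(\phi)\cdot\epsilon$.

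\emph{Step 2: an $\ell_1$--$\ell_\infty$ bound, and conclusion.} For any $u=[u^{(1)} | u^{(2)}]\in\mathbb{R}^{2M}$, the $\tau$-th coordinate of $\mathbf{D}\,\overline{\mathbf{M}}_{\mY_H}u$ equals $\indic{x_{1:H}\in\Eps_l'}\,\pi_{1:H}T_{1:H}\big(\sum_{m=1}^{M}u^{(1)}_m\,\Pi_{t=1}^{H}\mu^{(1)}_m(x_t,r_t)+\sum_{m=1}^{M}u^{(2)}_m\,\Pi_{t=1}^{H}\mu^{(2)}_m(x_t,r_t)\big)$. Moving the absolute value inside and summing over $\tau\in\mY_H$, the term carrying a fixed context $m$ of the first model contributes at most $|u^{(1)}_m|\sum_{x_{1:H},r_{1:H}}\indic{x_{1:H}\in\Eps_l'}\,\pi_{1:H}T_{1:H}\,\Pi_{t=1}^{H}\mu^{(1)}_m(x_t,r_t)$, which is $|u^{(1)}_m|$ times the probability that, under $\pi$ and the single-context MDP with transition $T$, initial distribution $\nu$, and reward model $\mu_m^{(1)}$, the realized state-action path lies in $\Eps_l'$. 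Because membership in $\Eps_l'$ depends only on the shared transition, this probability is at most $\sup_{\pi'\in\Pi}\PP_{\pi'}(x_{1:H}\in\Eps_l')=\PP_{\Eps_l'}^*(\phi)$ (the model-independence of $\PP_{\Eps}^*$ was already established in the proof of Theorem~\ref{theorem:distance_from_moment_theorem}). Summing over all $2M$ contexts, $\|\mathbf{D}\,\overline{\mathbf{M}}_{\mY_H}u\|_1\le\|u\|_1\,\PP_{\Eps_l'}^*(\phi)\le 2M\,\|u\|_\infty\,\PP_{\Eps_l'}^*(\phi)$. Applying this with $u=w+v$ and combining with Step~1 gives $2M\,\|w+v\|_\infty\,\PP_{\Eps_l'}^*(\phi)>\PP_{\Eps_l'}^*(\phi)\cdot\epsilon$; in particular the left-hand side is positive, so $\PP_{\Eps_l'}^*(\phi)>0$ and cancelling it yields $\|w+v\|_\infty>\epsilon/(2M)$, as claimed.

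\emph{Expected main obstacle.} The one delicate point is the marginalization over $r_{1:H}$ in Step~2: because $\pi$ is history-dependent it reacts to realized rewards, so $\sum_{r_{1:H}}$ cannot be pushed through $\pi_{1:H}$; the correct viewpoint is that $\sum_{r_{1:H}}\pi_{1:H}T_{1:H}\,\Pi_{t=1}^{H}\mu^{(1)}_m(x_t,r_t)$ \emph{is} a genuine state-action path probability for the single-context MDP, which is precisely what licenses the comparison with $\PP_{\Eps_l'}^*(\phi)$. Everything else---checking $Y(\tau)\subseteq\overline{\mY}_l$ and the elementary triangle and Hölder steps---is routine bookkeeping.
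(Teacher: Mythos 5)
Your proof is correct and follows essentially the same route as the paper's: you use Lemma \ref{lemma:exact_idneitifability_discrete} to conclude $\mathbf{D}\,\overline{\mathbf{M}}_{\mY_H}v=0$ for $v\in\mathrm{ker}(\mathbf{N})$, and then bound the entrywise $\ell_1$ mass of $\mathbf{D}\,\overline{\mathbf{M}}_{\mY_H}$ column-by-column by $2M\,\PP_{\Eps_l'}^*(\phi)$ to get the contradiction. Your extra care about $Y(\tau)\subseteq\overline{\mY}_l$ and the reward-marginalization is a welcome but inessential elaboration of steps the paper leaves implicit.
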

\begin{proof}
    Let $w_r = w + v$. Note that by Lemma \ref{lemma:exact_idneitifability_discrete}, all rows of $\mathbf{D} \overline{\mathbf{M}}_{\mY_H}$ are spanned by the rows of $\mathbf{N}$: for any $\tau \in \Eps_l'$, then $\overline{\mathbf{M}}_{\tau}$ is in the span of $\overline{\mathbf{M}}_{Y(\tau)}$ and thus spanned by the rows of $\mathbf{N}$, and otherwise $\mathbf{D}(\tau) = 0$ by definition and thus the row of $\mathbf{D} \overline{\mathbf{M}}_{\mY_H}$ corresponding to $\tau$ is 0. Obviously, $0$ vector is in the span of the rows of $\mathbf{N}$. Now since $v \in \mathrm{ker}(\mathbf{N})$, $\mathbf{D} \overline{\mathbf{M}}_{\mY_H} v = 0$ for any $v \in \mathrm{ker}(\mathbf{N})$. Therefore we have
    \begin{align*}
        \left\| \mathbf{D} \overline{\mathbf{M}}_{\mY_H} w \right\|_1 = \left\| \mathbf{D} \overline{\mathbf{M}}_{\mY_H} w_r \right\|_1 \le \left\| \mathbf{D} \overline{\mathbf{M}}_{\mY_H} \right\|_{1,1} \|w_r\|_{\infty},
    \end{align*}
    where $\left\| \mathbf{D} \overline{\mathbf{M}}_{\mY_H} \right\|_{1,1}$ is the absolute sum of all elements in $\mathbf{D} \overline{\mathbf{M}}_{\mY_H}$. Note that the sum of the $m^{th}$ column of $\mathbf{D} \overline{\mathbf{M}}_{\mY_H}$ is equal to 
    \begin{align*}
        \sum_{(x,r)_{1:H}} \indic{\Eps_l'} \pi_{1:H} T_{1:H} \Pi_{t=1}^H \mu_m^{(1)}(x_t ,r_t) \le \PP_{\Eps_l'}^* (\phi),
    \end{align*}
    and similar inequalities hold for the $(M+m)^{th}$ column for $m \in [M]$. Since there are $2M$ columns, $\left\| \mathbf{D} \overline{\mathbf{M}}_{\mY_H} \right\|_{1,1}$ is further bounded by $2M \PP_{\Eps_l'}^* (\phi)$. Finally, by a contradicting assumption, we have
    \begin{align*}
        \epsilon \PP^*_{\Eps_l'} (\phi) < 2M \PP^*_{\Eps_l'}(\phi) \|w_r\|_{\infty},
    \end{align*}
    which proves the lemma. 
\end{proof}
\vskip 0.2cm

Combining Lemma \ref{lemma:min_singular_value} and \ref{lemma:bound_on_weights}, we obtain the desired contradiction that $\|\mathbf{N} w\|_{\infty} > M^{-O(M)} \epsilon$. By letting $\epsilon = M^{O(M)} \delta_l$, we can conclude that $\left\| \mathbf{D} \overline{\mathbf{M}}_{\mY_H} w \right\|_1 \le \PP_{\Eps_l'}^*(\phi) \cdot M^{O(M)} \delta_l$, and we can conclude that
\begin{align*}
    \sum_{\tau: x_{1:H} \in \Eps_l'} | \PP^{(1)}_{\pi} (\tau) - \PP^{(2)}_{\pi} (\tau) | = \left\| \mathbf{D} \overline{\mathbf{M}}_{\mY_H} w \right\|_1 \le \PP_{\Eps_l'}^* (\phi) \cdot M^{O(M)} \delta_l.
\end{align*}

\subsubsection{Proof of Lemma \ref{lemma:exact_idneitifability_discrete}}
\label{appendix:exact_idneitifability_discrete}
    This largely follows from the proof of Lemma 3.1 in \cite{chen2019beyond}, and we rewrite the major procedures in there for the completeness of the paper. We show this lemma by mathematical induction on the length of sequence. For convenience, let $n = |\bm{y}|$ be the length of target sequence. We show that there exists non-trivial coefficients $\{\alpha_{J}\}_{J \subseteq [n]}$ such that
    \begin{align*}
        \sum_{J \subseteq [n]} \alpha_{J} \overline{\mathbf{M}}_{\bm{y}_J} = 0,
    \end{align*}
    and that $\alpha_{[n]}$ is nonzero. If we can do this inductively from $n = d+1$, then we are done by mathematical induction. We construct an auxiliary polynomial function $f$ of $n$ variables $x = \{x_j\}_{j=1}^n$ such that:
    \begin{align*}
        f( x ) = \Pi_{j=1}^n (x_j - \lambda_j) = \sum_{J \subseteq [n]} \alpha_J \Pi_{j \in J} x_j, 
    \end{align*}
    for some $\{\lambda_j\}_{j=1}^n$. Note that the coefficient  $\alpha_{[n]}$ is always $1$. The strategy is to construct a polynomial $f$ such that $f(x) = 0$ at all $x = \{\mu_m^{(b)}(y_j)\}_{j=1}^n$ for all $m \in [M]$ and $b = 1,2$. Note that any column of $\sum_{J\subseteq[n]} \alpha_J \overline{\mathbf{M}}_{\bm{y}_J}$ corresponds to one of $\{\mu_m^{(b)}(y_j)\}_{j=1}^n$. The existence of such polynomial $f$ guarantees that $\overline{\mathbf{M}}_{\bm{y}}$ is in the span of the rows of lower degree pairs in the same sequence, which inductively implies the lemma.
    
    To construct $f$, we start with $f_0(x) = 1$ at $t=0$ and inductively construct $f_{t+1}$ from $f_{t}$ where $f_t(x) = \Pi_{j=1}^t (x_j - \lambda_j)$. At any time step $t$, define a set of surviving columns $R_t = \left\{(b,m) \big| f_{t} \left( \{\mu_m^{(b)}(y_j)\}_{j=1}^n \right) \neq 0 \right\}$.
    Since $\mu_m^{(b)} (\cdot)$ can take values only from the candidate probability set $\mP$, by the pigeonhole principle, we can choose $\lambda_{t+1} \in \mP$ such that $|R_{t+1}| \le \left\lfloor \left(1 - \frac{1}{P+1}\right) |R_t| \right\rfloor$. Since $|R_0| = 2M$, once $t$ reach $(P+1) \log (2M)$, there will be no surviving columns and we find the desired polynomial $f = f_t(x) \cdot \Pi_{j=t+1}^n x_j$.

\subsubsection{Proof of Lemma \ref{lemma:min_singular_value}}
\label{appendix:min_singular_value}
    This is reminiscent of Lemma 3.7 in \cite{chen2019beyond}. We can pick $k$ rows of $A$ such that a row restriction of $A$ to the selected rows, which we denote as $A_k$, is full rank. By definition, $\sigma_{min}^{\infty,1}(A) \ge \sigma_{min}^{\infty}(A_k)$. Now $A_k$ is a $k\times k$ square matrix and $\mathrm{det}(A_k) > 0$, and thus we can equivalently say
    \begin{align*}
        \sigma_{min}^{\infty}(A_k) = \min_{u} \frac{\|A_k u\|_{\infty}}{\|u\|_\infty} = \min_{u'} \frac{\|u'\|_{\infty}}{\|A_k^{-1} u'\|_\infty} \ge \min_{u'} \frac{\|u'\|_{\infty}}{\|A_k^{-1}\|_{\infty,\infty} \|u'\|_1} \ge \frac{1}{k} \frac{1}{\|A_k^{-1}\|_{\infty,\infty}},
    \end{align*}
    where $\|A_k^{-1}\|_{\infty, \infty}$ is the largest element of $A_k^{-1}$. The determinant of any $(k-1) \times (k-1)$ minor is at most $(k-1)! = k^{O(k)}$, and $\mathrm{det}(A_k)$ is some nonzero integral multiple of $p^k$. Using the Cramer's matrix inversion formula, we can conclude that $\|A_k^{-1}\|_{\infty, \infty} \le p^{-k} k^{O(k)}$.

\subsection{Proof of Theorem \ref{theorem:sample_complexity_upper_bound_discrete}}
Similarly to the proof of Theorem \ref{theorem:sample_complexity_upper_bound} (which can be found in Section \ref{appendix:proof:sample_complexity_upper_bound}), we can show that
\begin{align*}
    \|(\PP^{(1)}_\pi - \PP^{(2)}_\pi) (\tau)\|_1 &= \sum_{l=0}^{L+1} \sum_{\tau:x_{1:H} \in \Eps_l'} |\PP^{(1)}_{\pi}(\tau) - \PP^{(2)}_{\pi}(\tau) | \\
    &\le \sum_{l = 0}^{L+1} H^d \epsilon_{\rm pe} \cdot \sqrt{n_l / \iota_c} \cdot M^{O(MP \log P)} \cdot O\left( \sqrt{\iota_c / n_l} \right) \\
    &\le L H^d \epsilon_{\rm pe} M^{O(M)},
\end{align*}
where we used Lemma \ref{lemma:eventwise_total_bound_discrete} and Lemma \ref{lemma:reward_free_guarantee} with $d = 2P \log M$ since $H > 2P \log M$. Plugging $\epsilon_{\rm pe} = \epsilon / (H^{d+1} L M^{O(M)})$, we have 
\begin{align*}
    |V_{\mM^{(1)}}^{\pi} - V_{\mM^{(2)}}^{\pi}| \le H \cdot \|(\PP^{(1)}_{\pi} - \PP^{(2)}_{\pi}) (\tau)\|_1 \le O(\epsilon).
\end{align*}

\section{Appendix for Pure-Exploration}
\label{appendix:pure_exploration_algo}

\begin{algorithm}[t]
    \caption{Pure Exploration of Higher-Order Moments}
    \label{algo:pure_explore}
    \begin{algorithmic}[1]
        \STATE{{\bf Function:} \texttt{EstimateMoments}($d, \epsilon, \eta$)} 
        \STATE{Let $\epsilon_{\rm pe}:= \epsilon/(HL(4H^2Z)^d))$ }
        \STATE{Initialize $\widetilde{Q}_{(\cdot)} (\cdot) = \widetilde{V}_0 = 1$, $n(\bm{x}) = 0$ for all $\bm{x} \in \bigcup_{q=1}^d (\mS \times \mA)^{\bigotimes q}$.}
        \STATE{Initialize $\hat{T}(\cdot)=\hat{\nu}(\cdot)=0$, $n_T(\cdot) = 0$.}
        \WHILE{$\widetilde{V}_0 > \epsilon_{pe}$}
            \STATE {Get an initial state $s_1$ for the $k^{th}$ episode. Let $\bm{v}_1 = (\emptyset,\ldots,\emptyset)$, $i = 1$, $r_{c} = 1$}
            \FOR{$t = 1, 2, ..., H$}
                \STATE {Pick $(a_t, b_t) = \arg \max_{(a,b) \in \widetilde{\mA}} \widetilde{Q}_t ((i_t, \bm{v}_t, s_t), (a,b))$.}
                \STATE{Play action $a_t$, observe next state $s_{t+1}$ and reward $r_t$.}
                \IF {$i_t \le d$ \AND $b_t \neq 0$}
                    \STATE{Record $z_{i_t} = r_t$}
                \ENDIF
                \STATE{Update $(i_{t+1}, \bm{v}_{t+1}, s_{t+1})$ according to the choice of $a_t$ and $b_t$ following the rule in \eqref{eq:augment_rule}}
                \IF{$i_t \le d$ \AND $i_{t+1} = d+1$}
                    \STATE{$\bm{x}_c:= (v_{t+1}^j)_{j=1}^{i_t}$, $\bm{z}_c = (z_j)_{j=1}^{i_t}$}
                \ENDIF
            \ENDFOR
            \IF{$i_{H+1} = d+1$}
                \STATE{$n(\bm{x}_c) \leftarrow n(\bm{x}_c) + 1$}
                \STATE{$\moment_n (\bm{x}_c, \bm{z}) \leftarrow \left(1 - 1/n(\bm{x}_c) \right) \moment_n (\bm{x}_c, \bm{z}) + \indic{\bm{z} = \bm{z}_c} / n(\bm{x}_c)$ for all $\bm{z} \in \mZ^{\bigotimes {\rm length} (\bm{x})}$}
            \ENDIF
            \STATE{Update $\hat{T}, \hat{\nu}$ and $n_T$ from a trajectory $(s_1,a_1, s_2,a_2,...,s_H,a_H)$ \label{line:update_transition}}
            \STATE{Update $\widetilde{Q}_{(\cdot)}$ and $\widetilde{V}$ using \eqref{eq:update_q2}, \eqref{eq:optimistic_value}}
        \ENDWHILE
        \STATE{{\bf Return} $\moment_n, n$}
    \end{algorithmic}
\end{algorithm}

This part mostly follows \cite{kwon2021reinforcement}, and we may repeat most of the steps for the completeness of the paper. We employ the reward-free exploration idea for $d^{th}$-order MDPs, which is defined as the following:
\begin{definition}[$d^{th}$-Order MDPs]
    \label{definition:dth_order_mdp}
    A $d^{th}$-order MDP $\widetilde{\mM}$ is defined on a state-space $\widetilde{\mS}$ and action-space $\widetilde{\mA}$ where 
    \begin{align*}
        \widetilde{\mS} &= \left\{ (i, \bm{v}, s)| \ i \in [d+1] , \bm{v}: (v^i)_{i=1}^d \in ((\mS \times \mA) \cup \{\emptyset\})^{\bigotimes d}, s \in \mS \right\}, \\
        \widetilde{\mA} &= \{ (a, b)| \ a \in \mA, b \in \{0,1,-1\} \}.
    \end{align*}
    In $\widetilde{\mM}$, an augmented state $\left( i_t, \bm{v}_t, s_t \right)$ evolves under an action $(a_t, z_t)$ as follows:
    \begin{align}
        &i_1 = 1, \bm{v}_1 = (\emptyset,\ldots,\emptyset), s_1 \sim \nu(\cdot), \quad s_{t+1} \sim T(\cdot | s_t,a_t), \quad v^j_{t+1} = v^j_t \ \ \forall j \in [d] / \{i_t\}, \nonumber \\
        &i_{t+1} = \begin{cases} d + 1 & \text{if } b_t = -1 \\ i_t+1 & \text{else if } b_t = 1 \text{ and } i_t \le d \\ i_t & \text{else} \end{cases}, \quad v^{i_t}_{t+1} = \begin{cases} (s_t, a_t) & \text{if } b_t \neq 0 \\ v^{i_t}_t & \text{else } \end{cases} \ \text{when } i_t \le d. \label{eq:augment_rule}
    \end{align}
\end{definition}
In short, additional state variables $i$ and $v$ select which state-actions to include in a moment to estimate in the current episode. Additional action variable $b$ selects whether to include or skip the current state-action, or decide a moment to sample with currently saved state-actions in $v$. 

Let us define the upper confidence action-value function $\widetilde{Q}$ and value function $\widetilde{V}$ that is defined as in the form of Bellman-equation w.r.t. $\widetilde{\mM}$ with pure-exploration bonus:
\begin{align}
    \label{eq:update_q2}
    &q_c((i, \bm{v}, s), (a, b)) = \indic{(i\le d) \cap (i'=d+1)} \cdot \left(1 \wedge \sqrt{\frac{\iota_c}{n(\bm{x}_c)}} \right), q_T(s,a) = \left(1 \wedge \sqrt{\frac{\iota_T}{n_T(s,a)}}\right) \nonumber \\
    &\widetilde{Q}_t ((i, \bm{v},s), (a,b)) = 1 \wedge \left(q_c ((i, \bm{v}, s), (a, b)) + \Exs_{s' \sim \hat{T}(\cdot|s,a)} \left[\widetilde{V}_{t+1} (i', \bm{v}',s') \right]  + q_T(s,a) \right),
\end{align}
and
\begin{align}
    \widetilde{V}_t(i, \bm{v},s) = \max_{(a',b') \in \widetilde{\mA}} \widetilde{Q}_t((i, \bm{v}, s), (a',b')), \quad \widetilde{V}_0 = \sqrt{\iota_\nu/k} + \tssum_{s} \hat{\nu} (s) \cdot \widetilde{V}_1(1, \bm{v}_1, s),  \label{eq:optimistic_value}
\end{align}
Here, $i'$ and $\bm{v}'$ are the first and second coordinates of the next state following the (deterministic) transition rule \eqref{eq:augment_rule} for $i$ and $\bm{v}$. $\indic{(i\le d) \cap (i'=d+1)}$ is an indicator of whether to finish and collect samples for correlations stored in $v'$. By convention, we let $\widetilde{Q}_{H+1}(\cdot) = 0$. $K$ is the total number of episodes to be explored. The logarithmic factor $\iota_c = O(d \log(SAZ/\eta))$ is properly set confidence interval parameters. The pure-exploration bonus $q_c$ encourages to collect samples for the moments that have not been sufficiently explored yet. This is controlled by the number $n(\bm{x}_c)$ of collected samples for $\bm{x}_c = ( (v^{j})_{j=1}^{i-1}, (s,a) )$. Variables $n_T, q_T$ are defined for estimating transition models which we describe below, where $n_T(s,a)$ is the number of total times that $(s,a)$ has been visited, and $q_T(s,a)$ is pure-exploration bonus for visiting $(s,a)$.

\paragraph{Exploration for Estimating Moments} In every episode, we take a greedy augmented action $(a_t, b_t)$ that maximizes $\widetilde{Q}_t$ at every time step $t \in [H]$. We continue this pure-exploration process for $K$ episodes until $\widetilde{V}_0 \le \epsilon_{\rm pe}$ with a threshold parameter $\epsilon_{\rm pe}$ for the pure exploration. The pure-exploration procedure is summarized in Algorithm \ref{algo:pure_explore}. The main purpose of Algorithm \ref{algo:pure_explore} is to auto-balance the amount of samples for moments proportional to each moment's reachability. 

\paragraph{Estimate Transition Models} 
The transition models and initial state distributions can be easily estimated in the pure-exploration phase. In equation \eqref{eq:update_q2}, $q_T(\cdot)$ is an exploration bonus term for the uncertainty in transition probabilities, and $\iota_T = O(S \log(K/\eta))$ and $\iota_\nu = O(S \log(K/\eta))$ are properly set confidence constants. Specifically, we can add bonus terms $q_T(\cdot)$ and $\sqrt{\iota_\nu / k}$ to upper-confidence functions $\widetilde{Q}$ and $\widetilde{V}_0$ to encourage the exploration to estimate transition model $\hat{T}$ and initial state distribution $\hat{\nu}$ simultaneously with higher-order moments of latent reward models. The update step (line \ref{line:update_transition}) can be implemented in a straight-forward manner.

\paragraph{Additional Notation} We denote $\mathds{1}_k \{\bm{x}\}$ as a random variable indicating whether $\bm{x}$ is visited at the $k^{th}$ episode. Let $\widetilde{\Pi}:\widetilde{\mS} \rightarrow \widetilde{\mA}$ be the class of {\it stationary} policies in $d^{th}$-order MDPs. $\widetilde{\pi}_k$ be the policy (greedy with respect to $\widetilde{Q}$) executed in the $k^{th}$ episode. Let $n_k(\bm{x}) := \sum_{k'=1}^{k-1} \mathds{1}_{k'} \{\bm{x}\}$ and the expected quantities $\bar{n}_k (\bm{x}) := \sum_{k'=1}^{k-1} \Exs^{\widetilde{\pi}_{k'}} [\mathds{1}_{k'} \{\bm{x}\}]$. We define a desired high probability event $\Eps_{pe}$ for martingale sums:
\begin{align}
    \label{eq:def_martingale_event}
    n_k(\bm{x}) \ge \frac{1}{2} \bar{n}_k (\bm{x}) - c_l \cdot d \log (SAK/\eta), &\qquad \forall k \in [K], \bm{x} \in \textstyle \bigcup_{q=1}^d (\mS \times \mA)^{\bigotimes q},
\end{align}
for some absolute constant $c_l > 0$. With a standard measure of concentration argument for martingale sums~\cite{wainwright2019high}, and taking union bound on all $k$ and $\bm{x}$, we can show that $\PP(\Eps_{pe}) \ge 1 - \eta$. 

We denote $\hat{T}_k, \hat{\nu}_k$ for the empirically estimated transition and initial distribution models at the beginning of $k^{th}$ episode. Similarly to $n_k(\bm{x})$, let $n_k(s,a), \bar{n}_k(s,a)$ be the actual and expected visit count for a single state-action $(s,a) \in (\mS \times \mA)$, and let $\#_k (s,a)$ be the random variable that the number of times that $(s,a)$ is visited at the $k^{th}$ episode. This is for tracking the uncertainties in $\hat{T}_k$. Similarly to equation \eqref{eq:def_martingale_event}, it holds that
\begin{align*}
    n_k(s,a) \ge \frac{1}{2} \bar{n}_k (s,a) - c_l \cdot \log (SAK/\eta), &\qquad \forall k \in [K], (s,a) \in (\mS \times \mA),
\end{align*}
with probability at least $1 - \eta$.

\subsection{Proof of Lemma \ref{lemma:reward_free_guarantee}}
\label{appendix:reward_free_guarantee}
\newcommand{\wpi}{\widetilde{\pi}}

\paragraph{Proof of equation \eqref{eq:required_episodes}:} We first show that Algorithm \ref{algo:pure_explore} terminates after at most $K$ episodes with probability at least $1 - \eta$ where
\begin{align*}
    K \ge C \cdot (SA)^d \epsilon_{pe}^{-2} \cdot \log (K/\eta),
\end{align*}
for some absolute constant $C > 0$. Let us examine $\widetilde{V}_0$ at the $k^{th}$ episode. This can be decomposed as
\begin{align*}
    \widetilde{V}_0 &= \sqrt{\iota_\nu / k} + \sum_{s} \hat{\nu}_k (s) \cdot \widetilde{V}_1 (i_1, \bm{v}_1, s) \\
    &\le \sqrt{\iota_\nu / k} + \| \hat{\nu}_k (s) - \nu(s) \|_1 + \sum_{s} \nu(s) \cdot \max_{(a,b) \in \widetilde{\mA}} \widetilde{Q}_1 ((i_1,\bm{v}_1, s), (a,b)) \\
    &\le 2 \sqrt{\iota_\nu / k} + \Exs_{\wpi_{k}} \left[\widetilde{Q}_1 ((i_1, \bm{v}_1, s_1), \widetilde{\pi}_k(i_1, \bm{v}_1,s_1)) \right],
\end{align*}
where in the last inequality we used $\|\hat{\nu}_k - \nu(s)\|_1 \le \sqrt{\iota_\nu / k}$ by standard martingale inequalities. Then, we can recursively bound expectation of $\widetilde{Q}_t$ for $t \ge 1$. For convenience, let us denote $\bm{x}_t = ((v_t^{j})_{j=1}^{i_t-1}, (s_t,a_t))$ be the moment that can be sampled at the current time step, and
\begin{align*}
    \Exs_{\wpi_{k}} &\left[\widetilde{Q}_t ((i_t, \bm{v}_t, s_t), (a_t,b_t)) \right] \\
    &= \Exs_{\wpi_{k}} \left[q_r((i_t, \bm{v}_t, s_t), (a_t,b_t))  + q_T(s_t,a_t) \right] + \Exs_{\wpi_k} \left[\tssum_{s_{t+1}} \hat{T}_k(s_{t+1} | s_t,a_t) \cdot \widetilde{V}_{t+1} (i_{t+1}, \bm{v}_{t+1}, s_{t+1}) \right] \\
    &\le \Exs_{\wpi_{k}} \left[\widetilde{Q}_{t+1} ((i_{t+1}, \bm{v}_{t+1}, s_{t+1}), (a_{t+1},b_{t+1})) \right] \\
    &\qquad + 2 \Exs_{\wpi_{k}} \left[\indic{i_t \le d \cap i_{t+1} = d+1} \left( 1 \wedge \sqrt{\frac{\iota_c}{n_k(\bm{x}_t)}} \right) \right] + 2 \Exs_{\wpi_k} \left[1 \wedge \sqrt{\frac{\iota_T}{n_k (s_t,a_t)}} \right],
\end{align*}
where in the last inequality, we used that $\|T(\cdot|s_t,a_t) - \hat{T}(\cdot | s_t,a_t)\|_1 \le \sqrt{\iota_T / n_k(s_t,a_t)}$ by martingale concentration, and $|\widetilde{Q}_{t+1}(\cdot)| \le 1$. Note that the indicator $\indic{i_t \le d \cap i_{t+1} = d+1}$ means whether we collect the sample at the $t^{th}$ time step, {\it i.e.,} $\indic{\textrm{collect at $t$}}$. Putting together, at the $k^{th}$ episode, we have
\begin{align*}
    \widetilde{V}_0 &\le 2\sqrt{\iota_\nu / k} + 2 \sum_{t=1}^H \Exs_{\wpi_k} \left[ \left(1 \wedge \sqrt{\frac{\iota_T}{n_k (s_t,a_t)}} \right) + \indic{\textrm{collect at $t$}} \left(1 \wedge \sqrt{\frac{\iota_c}{n_k (\bm{x}_{t})}}\right) \right] \\
    &= 2\sqrt{\iota_\nu / k} + 2 \sum_{(s,a)} \left(1 \wedge \sqrt{\frac{\iota_T}{n_k (s,a)}} \right) \cdot \Exs_{\wpi_k} \left[ \#_k (s,a) \right] + 2 \sum_{\bm{x}} \left(1 \wedge \sqrt{\frac{\iota_c}{n_k (\bm{x})}} \right) \cdot \Exs_{\wpi_k} \left[\mathds{1}_k \{\bm{x}\} \right].
\end{align*}
From equation \eqref{eq:def_martingale_event}, we have that
\begin{align*}
    \sum_{\bm{x}} \left(1 \wedge \sqrt{\frac{\iota_c}{n_k (\bm{x})}} \right) \cdot \Exs_{\wpi_k} \left[\mathds{1}_k \{\bm{x}\} \right] &\le 2\sum_{\bm{x}} \sqrt{\frac{\iota_c}{1 + \bar{n}_k (\bm{x})}} \cdot (\bar{n}_{k+1}(\bm{x}) - \bar{n}_k (\bm{x})) \\
    &\quad + \sum_{\bm{x}} \indic{\bar{n}_k (\bm{x}) < 4 \cdot c_l d \log(SAK/\eta) } (\bar{n}_{k+1}(\bm{x}) - \bar{n}_k (\bm{x})),
 \end{align*}
where we used by definition that $\Exs_{\wpi_k} \left[\mathds{1}_k \{\bm{x}\} \right]=(\bar{n}_{k+1}(\bm{x}) - \bar{n}_k (\bm{x}))$. Similarly, we also have
\begin{align*}
    \sum_{(s,a)} \left(1 \wedge \sqrt{\frac{\iota_T}{n_k (s,a)}} \right) \cdot \Exs_{\wpi_k} \left[\#_k (s,a) \right] &\le 2 \sum_{s,a} \sqrt{\frac{\iota_T}{1 + \bar{n}_k (s,a)}} \cdot (\bar{n}_{k+1}(s,a) - \bar{n}_k (s,a)) \\
    &+ H \sum_{(s,a)} \indic{\bar{n}_k (s,a) < 4 \cdot c_l \log(SAK/\eta) } (\bar{n}_{k+1}(s,a) - \bar{n}_k (s,a)).
\end{align*}
using an integral inequality $\sum_{k=1}^K \sqrt{1/(1+n_k)} (n_{k+1} - n_k) \le \int_{1}^{n_K} 1/x dx$ for any non-decreasing sequence $(n_k)_{k=1}^K$, we can sum over all $K$ episodes until $\widetilde{V}_0 > \epsilon_{\rm pe}$ and thus, we have
\begin{align*}
    K \epsilon_{pe} \le 4 \sqrt{\iota_\nu K} + O(c_l d H (SA)^d \log(KSA/\eta)) + 8 \sum_{(s,a)} \sqrt{\iota_T \bar{n}_{K+1} (s,a)} + 8 \sum_{\bm{x}} \sqrt{\iota_c \bar{n}_{K+1} (\bm{x})}.
\end{align*}
We now note that $\sum_{s,a} \bar{n}_{K+1} (s,a) = HK$ and $\sum_{\bm{x}} \bar{n}_{K+1} (\bm{x}) \le K$. Using a Cauchy-Schwartz inequality, we get
\begin{align*}
    K \epsilon_{pe} \le O \left(\sqrt{\iota_\nu K} + d H(SA)^d  \log(KSA/\eta) + \sqrt{\iota_T HSAK} + \sqrt{\iota_c (SA)^d K} \right). 
\end{align*}
The bound on $K$ is concluded by plugging the confidence parameters, which ensures that $K$ should satisfy 
\begin{align*}
    K \le O\left(Hd (SA)^d \epsilon_{pe}^{-2} \log(KSA/\eta) \right),
\end{align*}
until we terminate Algorithm \ref{algo:pure_explore} after at most $K$ episodes with probability at least $1 - \eta$.

\paragraph{Proof of equation \eqref{eq:max_event_probability}:} 
To prove this part, we first note that by union bound, we have
\begin{align*}
    \sup_{\pi\in \Pi} \PP_{\pi} (x_{1:H} \in \Eps_l') &= \sup_{\pi\in \Pi} \PP_{\pi} \left( \bigcup_{q=1}^d \bigcup_{1 \le t_1 < \ldots < t_q \le H} (x_{t_i})_{i=1}^q \in \mX_l \cap \mX_{l-1}^c \right) \\
    &\le \sum_{q=1}^d \sum_{1 \le t_1 < \ldots < t_q \le H} \sup_{\pi\in \Pi} \PP_{\pi} \left( (x_{t_i})_{i=1}^q \in \mX_{l-1}^c \right).
\end{align*}
For each fixed $q$ and $\bm{t} = (t_i)_{i=1}^q$, we consider a sub-class of pure-exploration policies $\widetilde{\Pi}(\bm{t}):(\widetilde{\mS} \times [H]) \rightarrow \widetilde{\mA}$ such that each $\widetilde{\pi} \in \widetilde{\Pi}(\bm{t})$ takes $b_t = 1$ when $t = t_i$ for some $i < q$, $b_t = -1$ when $t = t_q$, and otherwise takes $b_t = 0$. Within this policy class, define the value function $\widetilde{V}^{\bm{t}}$ and action-value function $\widetilde{Q}^{\bm{t}}$ with respect to $\widetilde{\Pi}(\bm{t})$ as the following: 
\begin{align*}
    &q_T(s,a) = \left(1 \wedge \sqrt{\frac{\iota_T}{n_{K+1} (s,a)}} \right), \nonumber \\
    &q_{t,c} ((i, \bm{v}, s), (a,b)) = \indic{t=q \cap \bm{x}_c \in \mX_{l-1}^c} \cdot \left(1 \wedge \sqrt{\frac{\iota_c}{n_{K+1} (\bm{x}_c)}} \right), \nonumber \\ 
    &\widetilde{Q}_t^{\bm{t}} ((i, \bm{v},s), (a,b)) = 1 \wedge \left(q_{t,c} ((i, \bm{v}, s), (a, b)) + \Exs_{s' \sim \hat{T}(\cdot|s,a)} \left[\widetilde{V}_{t+1}^{\bm{t}} (i', \bm{v}',s') \right] + q_T(s,a) \right),
\end{align*}
and
\begin{align*}
    &\widetilde{V}_t^{\bm{t}} (i, \bm{v},s) = \max_{a \in \mA} \widetilde{Q}_t^{\bm{t}} ((i, \bm{v}, s), (a,b_t)), \quad \widetilde{V}_0^{\bm{t}} = \sqrt{\iota_\nu / K} + \tssum_{s} \hat{\nu} (s) \cdot \widetilde{V}_1^{\bm{t}} (1, \bm{v}_1, s).
\end{align*}
with $\widetilde{Q}_{H+1}^{\bm{t}} = 0$. By construction, $\widetilde{V}_0$ is an upper confidence bound of $\widetilde{V}_0^{\bm{t}}$:
\begin{align*}
    \epsilon_{\rm pe} \ge \widetilde{V}_0 \ge \widetilde{V}_0^{\bm{t}},
\end{align*}
since $\widetilde{V}_0^{\bm{t}}$ is computed with more restriction on policies. Note that the exploration-bonus from collecting a sample of moments $q_{t,c}$ is always larger than $\sqrt{\iota_c / n_{l-1}}$. On the other hand, $\sup_{\pi} \PP_{\pi} \left((x_{t_i})_{i=1}^q \in \mX_{l-1}^c \right)$ can be computed through the same dynamic programming on $\widetilde{Q}^*$ with slight changes of exploration bonus:
\begin{align*}
    &q_{t,c} ((i, \bm{v}, s), (a,b)) = \indic{t=q \cap \bm{x}_c \in \mX_{l-1}^c}, \\ 
    &\widetilde{Q}_t^* ((i, \bm{v}, s), (a,b)) = q_{t,c} + \sum_{s'} T(s'|s,a) \cdot \widetilde{V}_{t+1}^* (i', \bm{v}',s), 
\end{align*}
and
\begin{align*}
    \widetilde{V}_{t}^* (i, \bm{v},s) &= \max_{a \in \mA} \widetilde{Q}_t^*((i, \bm{v}, s),(a, b_t)).
\end{align*}
Then, 
\begin{align*}
     \widetilde{V}_0^* = \sum_{s} \nu(s) \cdot \widetilde{V}_1^*(1, \bm{v}_1, s) = \sup_{\pi \in \Pi} \PP_{\pi} \left((x_{t_i})_{i=1}^q \in \mX_{l-1}^c \right).
\end{align*}
Finally, with the setting of confidence interval parameters $\iota_T$ for transition errors, we can inductively show that
\begin{align*}
    \widetilde{Q}_t^{\bm{t}} \ge \widetilde{Q}_t^* \cdot \sqrt{\iota_c / n_{l-1}}, \quad \widetilde{V}_t^{\bm{t}} \ge \widetilde{V}_t^* \cdot \sqrt{\iota_c / n_{l-1}}.
\end{align*}
This implies that 
\begin{align*}
    \epsilon_{\rm pe} \ge \widetilde{V}_0^{\bm{t}} \ge \sqrt{\iota_c / n_{l-1}} \cdot \sup_{\pi \in \Pi} \PP_{\pi} \left((x_{t_i})_{i=1}^q \in \mX_{l-1}^c \right).
\end{align*}
We can conclude the equation \eqref{eq:max_event_probability}:
\begin{align*}
    \sup_{\pi\in \Pi} \PP_{\pi} (x_{1:H} \in \Eps_l') &\le \sum_{q=1}^d \sum_{1 \le t_1 < \ldots < t_q \le H} \sup_{\pi\in \Pi} \PP_{\pi} \left( (x_{t_i})_{i=1}^q \in \mX_{l-1}^c \right) \\
    &\le O \left(H^d \epsilon_{\rm pe} \cdot \sqrt{n_{l-1}/\iota_c} \right).
\end{align*}

\section{Proofs for the Lower Bound}
\subsection{Proof of Lemma \ref{lemma:moment_matching_lower_bound}}
This construction follows from the result in Section 4.3 in \cite{chen2019beyond}, and in particular, their Lemma 4.8 with a slight change in constants ({\it e.g.,} let $\lambda_2 = -\lambda_1 \propto -\epsilon \cdot 2^{-d})$. We refer the readers to \cite{chen2019beyond} for detailed constructions.

\subsection{Proof of Lemma \ref{lemma:optimal_policy_lower_bound}}
The optimal policy $\pi^*$ is the one which always plays optimal actions up to time step $d-1$, and select the last action depending on the conditional expectation of the last reward. Specifically, suppose we played a sequence of actions $(a^*_t)_{t=1}^{d-1}$ and the received a reward sequence $(r_t)_{t=1}^{d-1}$. It is not difficult to verify that the conditional probability of last reward according to $a_H^*$ is given as follows:
\begin{align}
    \label{eq:last_reward_conditional_prob}
    \Exs \left[r_d | (a_t^*)_{t=1}^{d-1}, (r_t)_{t=1}^{d-1}, a_d^* \right] = \begin{cases} 1/2 + \epsilon \cdot 2^{d-1} & \text{if } \sum_{t=1}^{d-1} (1 - r_t) \text{ is even} \\ 1/2 - \epsilon \cdot 2^{d-1} & \text{otherwise} \end{cases},
\end{align}
That is, the number of $0$ in a sequence $(r_t)_{t=1}^{d-1}$ is even, then the probability of getting $1$ is larger, and otherwise the probability of getting $0$ is larger. Thus, the optimal policy can play $a_H = a_H^*$ if the number of $0$ is even, and play anything else otherwise. Cumulative rewards of the optimal policy is given as follows:
\begin{align*}
    \Exs_{\pi^*} \left[\tssum_{t=1}^d r_t \right] &= \Exs_{\pi^*} \left[\tssum_{t=1}^{d-1} r_t \right] + \Exs \left[ \Exs \left[r_d \Big| (a_t)_{t=1}^{d-1} = (a_t^*)_{t=1}^{d-1}, (r_t)_{t=1}^{d-1}, a_d \sim \pi^* \right] \right] \\
    &\ge (d-1)/2 + \frac{1}{2} (1/2 + \epsilon \cdot 2^{d-1}) + \frac{1}{2} (1/2) \\
    &= d/2 + \epsilon \cdot 2^{d-2},
\end{align*}
where in the second equality, we used the fact that all reward sequences of length $d-1$ has the same marginal probability. 

Now for any history-dependent policy $\pi$, we note that
\begin{align*}
    \Exs_{\pi} \left[\tssum_{t=1}^d r_t \right] &= (d-1)/2 + \Exs_{\pi} \left[ r_d \right] \le d/2 + \epsilon \cdot 2^{d-1} \cdot \PP_{\pi} (a_{1:d} = a_{1:d}^*).
\end{align*}
Thus, for any $\epsilon$-optimal policy $\pi_{\epsilon}$ with $\epsilon < (2d)^{-2d}$, we have
\begin{align*}
    d/2 + \epsilon \cdot 2^{d-2} - \epsilon < d/2 + \epsilon \cdot 2^{d-1} \cdot \PP_{\pi} (a_{1:d} = a_{1:d}^*),
\end{align*}
which in turn implies $\PP_{\pi} (a_{1:d} = a_{1:d}^*) \ge 1/2 - 1 / 2^{d-1} \ge 1/4$.

\subsection{Proof of Lemma \ref{lemma:information_equality}}
This is a fundamental equality whose bandit version can be found in {\it e.g.,} \cite{cesa2006prediction}, \cite{garivier2019explore}. We start by unfolding the expression for KL-divergence:
\begin{align*}
    \KL &\left(\PP_{\psi}^{(1)} (\tau^{1:K}), \PP_{\psi}^{(2)} (\tau^{1:K}) \right) = \Exs_{\psi}^{(1)} \left[\log \left( \frac{\PP_{\psi}^{(1)} (\tau^{1:K})}{\PP_{\psi}^{(2)} (\tau^{1:K})} \right) \right] \\
    &= \Exs_{\psi}^{(1)} \left[\log \left( \frac{\PP_{\psi}^{(1)} (\tau^{1:K-1})}{\PP_{\psi}^{(2)} (\tau^{1:K-1})} \right) \right] + \Exs_{\psi}^{(1)} \left[\log \left( \frac{\PP_{\psi}^{(1)} (\tau^K | \tau^{1:K-1})}{\PP_{\psi}^{(2)} (\tau^K | \tau^{1:K-1})} \right) \right].
\end{align*}
Note that for any $\tau^K = (x^K_{1:H}, r^K_{1:H})$, 
\begin{align*}
    \PP_{\psi}^{(1)} (\tau^K | \tau^{1:K-1}) = \left(  \Pi_{t=1}^{H} T(s_{t}^K | x_{t-1}^K) \psi(a_t^K | h_t^K,  \tau^{1:K-1}) \right) \cdot \PP^{(1)} \left(r^K_{1:H} | x_{1:H}^K\right),
\end{align*}
and similarly, for $\PP_{\psi}^{(2)}$
\begin{align*}
    \PP_{\psi}^{(2)} (\tau^K | \tau^{1:K-1}) = \left(  \Pi_{t=1}^{H} T(s_{t}^K | x_{t-1}^K) \psi(a_t^K | h_t^K,  \tau^{1:K-1}) \right) \cdot \PP^{(2)} \left(r^K_{1:H} | x_{1:H}^K\right),
\end{align*}
which implies
\begin{align*}
    \Exs_{\psi}^{(1)} \left[\log \left( \frac{\PP_{\psi}^{(1)} (\tau^K | \tau^{1:K-1})}{\PP_{\psi}^{(2)} (\tau^K | \tau^{1:K-1})} \right) \right] &= \Exs_{\psi}^{(1)} \left[ \Exs_{\psi}^{(1)} \left[ \sum_{x_{1:H}} \log \left( \frac{\PP^{(1)} \left(r^K_{1:H} | x_{1:H}^K\right)}{\PP^{(2)} \left(r^K_{1:H} | x_{1:H}^K\right)} \right) \indic{x^K_{1:H} = x_{1:H}} \Big| \tau^{1:K-1} \right] \right] \\
    &= \sum_{x_{1:H}} \Exs_{\psi}^{(1)} \left[\log \left( \frac{\PP^{(1)} \left(r^K_{1:H} | x_{1:H}\right)}{\PP^{(2)} \left(r^K_{1:H} | x_{1:H}\right)} \right) \indic{x^K_{1:H} = x_{1:H}} \right] \\
    &= \sum_{x_{1:H}} {\rm KL} \left( \PP^{(1)} \left(\cdot | x_{1:H}\right), \PP^{(2)} \left(\cdot | x_{1:H} \right) \right) \cdot \Exs_{\psi}^{(1)} \left[\indic{x^K_{1:H} = x_{1:H}} \right],
\end{align*}
where the second equality is an application of the tower rule. Applying this recursively in $K$, we can show that
\begin{align*}
    \KL \left(\PP_{\psi}^{(1)} (\tau^{1:K}), \PP_{\psi}^{(2)} (\tau^{1:K}) \right) = \sum_{x_{1:H}} {\rm KL} \left( \PP^{(1)} \left(\cdot | x_{1:H}\right), \PP^{(2)} \left(\cdot | x_{1:H} \right) \right)  \cdot \Exs_{\psi}^{(1)} \left[\sum_{k=1}^K \indic{x^k_{1:H} = x_{1:H}} \right].
\end{align*}
By definition of $N_{\psi, x_{1:H}} (K)$, we have 
\begin{align*}
    \Exs_{\psi}^{(1)} \left[\sum_{k=1}^K \indic{x^k_{1:H} = x_{1:H}} \right] = N_{\psi, x_{1:H}} (K).
\end{align*} 
Plugging the above, we get the desired result.

\subsection{Proof of Theorem \ref{theorem:lower_bound}}
\label{appendix:proof_lower_bound}
Let $\{\mu_m^*\}_{m=1}^M$ be the specific set of vectors in $\mathbb{R}^d$ satisfying Lemma \ref{lemma:moment_matching_lower_bound} with $d = \Omega(\sqrt{M}) \ge 5$ being an odd number satisfying the condition in Lemma \ref{lemma:moment_matching_lower_bound}. Suppose the transition model follows the construction in Section \ref{section:lower_bound}: at every time step $t \in [H]$, we deterministicially move to a unique state $s_t^*$, and the reward values are binary, {\it i.e.,} $\mZ = \{0,1\}$. At every state $s_t = s_t^*$ (or time step $t$), all actions except one {\it correct} action $a_t^* \in \mA$ returns a reward sampled from a uniform distribution over $\{0,1\}$. The correct actions $a_t^*$ can be any action in $\mA$.

Consider two base systems $\mM^{(1)}$ and $\mM^{(2)}$: in $\mM^{(1)}$, reward distributions from all state-actions are uniform over $\{0,1\}$. In $\mM^{(2)}$, $\mu_m(s_t^*, a_t^*) = \mu_m^*(t)$, and otherwise uniform over $\{0,1\}$ similarly. As we can see in Lemma \ref{lemma:optimal_policy_lower_bound}, the optimal expected cumulative reward in $\mM^{(1)}$ is 1/2, whereas in $\mM^{(2)}$ optimal value is greater than $1/2 + \epsilon \cdot 2^{d-2}$. Suppose there exists a PAC-algorithm $\psi$ such that for any RMMDP instances, $\psi$ can output an $\epsilon$-optimal policy $\pi_{\epsilon}$ after $K$ episodes with probability greater than $2/3$. Then, we can use $\psi$ to test whether the system is $\mM^{(1)}$ or $\mM^{(2)}$, for any chosen optimal actions with probability greater than $2/3$. 

However, note that for any state-action sequence $x_{1:H} \neq x_{1:H}^*$, 
\begin{align*}
    {\rm KL} \left( \PP^{(1)}(\cdot| x_{1:d}), \PP^{(2)}(\cdot| x_{1:d}) \right) = 0,
\end{align*}
and
\begin{align*}
    {\rm KL} \left( \PP^{(1)}(\cdot| x_{1:d}^*), \PP^{(2)}(\cdot| x_{1:d}^*) \right) &= \sum_{r_{1:d}} \PP^{(1)}(r_{1:d}| x_{1:d}^*) \cdot \log \left( \frac{\PP^{(1)}(r_{1:d}| x_{1:d}^*)}{\PP^{(2)}(r_{1:d}| x_{1:d}^*)} \right) \\
    &= \left(\frac{1}{2}\right)^{d} \cdot \sum_{r_{1:d}} \log \left( \frac{\PP^{(1)}(r_{d}| x_{1:d}^*, r_{1:d-1})}{\PP^{(2)}(r_{d} | x_{1:d}^*, r_{1:d-1})} \right) \\
    &= \left(\frac{1}{2}\right)^{d+1} \cdot \sum_{r_{1:d}} \log \left( \frac{1/2}{1/2 + \epsilon_0} \right) + \log \left( \frac{1/2}{1/2 - \epsilon_0} \right) \\
    &= \left(\frac{1}{2}\right)^{d+1} \cdot \sum_{r_{1:d}} O(\epsilon_0^2)  = O(\epsilon_0^2),
\end{align*}
where $\epsilon_0 = \epsilon \cdot 2^{d-1}$ due to \eqref{eq:last_reward_conditional_prob}. Let $\psi'$ be an augmented exploration strategy that first runs $\psi$ for $K$ episodes and run the returned policy for $O(1/\epsilon_0^2)$ extra episodes. Let $K' = K + O(1/\epsilon_0^2)$ be the total number of episodes. We can apply Lemma \ref{lemma:information_equality} to obtain that after running an algorithm $\psi'$ for $K'$ episodes in both systems, we get
\begin{align*}
    \Exs^{(1)} \left[N_{\psi', x_{1:H}^*}(K') \right] \cdot O(\epsilon_0^2) = \KL \left(\PP_{\psi'}^{(1)} (\tau^{1:K'}), \PP_{\psi'}^{(2)} (\tau^{1:K'}) \right).
\end{align*}
By Pinsker's inequality, it holds that
\begin{align*}
    {\rm TV}\left(\PP_{\psi'}^{(1)} (\tau^{1:K'}), \PP_{\psi'}^{(2)} (\tau^{1:K'}) \right) \le \frac{1}{2} \sqrt{\KL \left(\PP_{\psi'}^{(1)} (\tau^{1:K'}), \PP_{\psi'}^{(2)} (\tau^{1:K'}) \right)}.
\end{align*}
Note that since everything is symmetric in system $\mM^{(1)}$, there exists at least one $x_{1:H}^*$ such that the expected number of the sequence being executed is small: 
\begin{align*}
    N_{\psi', x_{1:H}^*}(K') \le A^{-d} \cdot K'. 
\end{align*}
Therefore, due to LeCam's two point method \cite{lecam1973convergence}, $K'$ must satisfy that
\begin{align*}
    A^{-d} \cdot K' \cdot O(\epsilon_0^2) = \Omega(1). 
\end{align*}
This implies that $K' \ge \Omega(A^d/\epsilon_0^2) - O(1/\epsilon_0^2) = \Omega(A^d / \epsilon_0^2)$. 

Using the action amplification argument in \cite{kwon2021rl} (see their lower bound construction in Appendix), we can essentially construct the system with $O(SA/(H \log_A S))$-actions and $H = O(d \log_A S)$. If $S = \poly(A)$, this gives a lower bound $\Omega \left( \left(\frac{SA}{d} \right)^d \cdot \frac{1}{\epsilon^2} \right)$. Since $d = \Omega(\sqrt{M})$, we are done.

\end{appendices}

\end{document}